\documentclass{article}
\usepackage{arxiv}

\usepackage[utf8]{inputenc} 
\usepackage[T1]{fontenc}    
\usepackage[square,numbers]{natbib}
\usepackage{hyperref}       
\usepackage{url}            
\usepackage{booktabs}       
\usepackage{amsmath}
\usepackage{amsfonts}       
\usepackage{nicefrac}       
\usepackage{microtype}      

\usepackage{lipsum}         
\usepackage{graphicx}

\usepackage{doi}

\usepackage[table,xcdraw]{xcolor}
\usepackage[english]{babel}

\usepackage{cleveref}       

\usepackage{float}
\usepackage{amsthm}
\usepackage{mathrsfs}
\usepackage{amssymb}
\usepackage{pgfplots}
\usepackage{booktabs}
\usepackage{hhline}

\usepackage{doi}
\usetikzlibrary{spy}
\usepackage{tikz,pgfplots,pgf}
\usetikzlibrary{arrows,shapes,calc}
\usetikzlibrary{positioning}
\usepackage{neuralnetwork}
\usepackage[normalem]{ulem}

\newtheorem{example}{Example}
\newtheorem{theorem}{Theorem}
\newtheorem{proposition}{Proposition}
\newtheorem{definition}{Definition}

\newtheorem{corollary}{Corollary}
\newtheorem{lemma}{Lemma}
\newtheorem{notation}{Notation}

\pgfplotsset{compat=1.13}

\usepackage{xcolor, soul}
\sethlcolor{yellow}

\author{Ismaïl Baaj \\ Univ. Artois, CNRS, CRIL, F-62300 Lens, France \\  \href{baaj@cril.fr}{baaj@cril.fr}}

\title{Handling the inconsistency of systems of $\min\rightarrow$ fuzzy relational equations}

\hypersetup{
pdftitle={Handling the inconsistency of systems of min→ fuzzy relational equations},
pdfsubject={cs.AI},
pdfauthor={Ismaïl Baaj},
pdfkeywords={Fuzzy set theory ; Systems of fuzzy relational equations ; Chebyshev approximations},
}
\date{}

\begin{document}

\maketitle

\begin{abstract}
In this article, we study the inconsistency of systems of $\min-\rightarrow$ fuzzy relational equations. 
We give analytical formulas for computing the Chebyshev distances $\nabla = \inf_{d \in \mathcal{D}} \Vert \beta - d \Vert$ associated to  systems of  $\min-\rightarrow$ fuzzy relational equations of the form  $\Gamma \Box_{\rightarrow}^{\min} x = \beta$, where $\rightarrow$ is a residual implicator among the Gödel implication $\rightarrow_G$, the Goguen implication $\rightarrow_{GG}$ or Łukasiewicz’s
implication $\rightarrow_L$ and $\mathcal{D}$ is the set of second members of consistent systems defined with the same matrix $\Gamma$. The main preliminary result that allows us to obtain these formulas is that the Chebyshev distance $\nabla$ is the lower bound of the solutions  of a vector  inequality, whatever the residual implicator used. Finally, we show that, in the case of the $\min-\rightarrow_{G}$ system, the Chebyshev distance $\nabla$ may be an infimum, while it is always a minimum for  $\min-\rightarrow_{GG}$ and  $\min-\rightarrow_{L}$ systems.  
\end{abstract}

\keywords{Fuzzy set theory ; fuzzy relational equations ; Chebyshev approximations }

\section{Introduction}

Sanchez's pioneering work on solving systems of fuzzy relational equations based on $\max-\min$ composition \cite{sanchez1976resolution,sanchez1977} was followed by studies on solving systems based on $\max-T$ composition \cite{di1984fuzzy,miyakoshi1985solutions,pedrycz1982fuzzy} where $T$ is a t-norm, and studies on systems based on $\min-\rightarrow$ composition \cite{miyakoshi1985solutions,pedrycz1985applications,perfilieva2008system}, where $\rightarrow$ is the residual implicator associated to a t-norm. The authors of these studies
gave necessary and sufficient conditions  for systems of fuzzy relational equations  to be consistent.
However, it remains difficult to address the inconsistency of these systems. 

\noindent Recently, to address this issue, the author of   \cite{baaj2023maxmin} gives an explicit analytical formula to compute, using the $L_\infty$ norm, the Chebyshev distance $\Delta = \inf_{c \in \mathcal{C}} \Vert b - c \Vert$ associated to an inconsistent system of $\max-\min$ fuzzy relational equations $A \Box_{\min}^{\max} x = b$, where  $\mathcal{C}$ is the set of second members of consistent systems defined with the same matrix $A$. From this Chebyshev distance and in the case that the  system  $A \Box_{\min}^{\max} x = b$ is inconsistent, the author of \cite{baaj2023maxmin}  describes the structure of the set of  Chebyshev approximations of the second member $b$, which is formed by the vectors $c$ such that $\Vert b - c \Vert = \Delta$ and the system $A \Box_{\min}^{\max} x = c$ is a consistent system \cite{cuninghame1995residuation,li2010chebyshev}. He also establishes a structure theorem for the approximate solutions set of the inconsistent system, which is the set of solutions of  consistent systems $A \Box_{\min}^{\max} x = c$, where $c$ is a  Chebyshev approximation of $b$. From these results, the author introduces a learning paradigm of approximate weight matrices based on $\max-\min$ composition according to training data. \\
In \cite{baaj2023chebyshev}, the author extends some of these results  for  a system of max-product fuzzy relational equations and for a system of max-Łukasiewicz fuzzy relational equations.  He gives analytical formulas  for computing  the Chebyshev distance $\Delta$ associated to inconsistent systems based on max-product or max-Łukasiewicz compositions. For these two systems, the greatest approximate solution and the greatest Chebyshev approximation are directly computed according to the components of the matrix and those of the second member of the system,  so the learning paradigm introduced in \cite{baaj2023maxmin} can be used for the max-product and max-Łukasiewicz systems.

\noindent In this article, analogously to \cite{baaj2023chebyshev,baaj2023maxmin}, for   systems of  $\min-\rightarrow$ fuzzy relational equations of the form $\Gamma \Box_{\rightarrow}^{\min} x = \beta$ where $\Gamma$ is a fuzzy matrix of size $(m,n)$, $\beta$ is a column-vector of $m$ components  and  $\rightarrow$ is a residual implicator among the Gödel implication $\rightarrow_G$, the Goguen implication $\rightarrow_{GG}$ or Łukasiewicz’s
implication $\rightarrow_{L}$, we give explicit analytical formulas  for computing the Chebyshev distance $\nabla = \inf_{d \in \mathcal{D}} \Vert \beta - d \Vert$ where  $\mathcal{D}$ is the set of second members of consistent systems defined with the same matrix $\Gamma$.  Analogously to the application $F$ introduced in \cite{baaj2023chebyshev,baaj2023maxmin} for systems based on $\max-T$  composition, we introduce an idempotent, increasing and right-continuous application denoted $G$ for $\min-\rightarrow$  systems, see (\ref{eq::appG}). The application $G$ allows us to reformulate the necessary and sufficient conditions \cite{miyakoshi1985solutions,pedrycz1985applications,perfilieva2008system} for checking if a $\min-\rightarrow$ system  defined with a fixed matrix  and a given vector  used as the second member is a consistent system, see (\ref{qu:eq0}). As a first result (Theorem \ref{th1}),  this application $G$ allows us to show that  the Chebyshev distance $\nabla$ of a $\min-\rightarrow$ system $\Gamma \Box_{\rightarrow}^{\min} x = \beta$  is the lower bound of the solutions of a vector inequality expressed with this application $G$. Our (Theorem \ref{th1}) for $\min-\rightarrow$ systems, is the version of (Theorem 1) of Cuninghame-Green and Cechlárová for a $\max-\min$ system~\cite{cuninghame1995residuation}, which was  recently extended in \cite{baaj2023chebyshev} for $\max-T$ systems. Furthermore,  from this result,  we obtain the decomposition of  the Chebyshev distance $\nabla = \max_{1 \leq j \leq m} \nabla_j$  (Proposition \ref{prop:nablafunc}), where for all $j \in \{1,2,\dots,m\}$, each $\nabla_j$ is the  lower bound of the solutions of the $j$-th scalar inequality in the above-mentioned vector inequality, see (\ref{eq:gbotbtop}). \\
For each of the three systems of $\min-\rightarrow$ fuzzy relational equations (depending on whether $\rightarrow$ is the Gödel implication $\rightarrow_{G}$, the Goguen implication $\rightarrow_{GG}$ or Łukasiewicz’s
implication $\rightarrow_{L}$), we give an explicit formula for computing
 its associated Chebyshev distance $\nabla$ ((Theorem \ref{theorem:chebming}), (Theorem \ref{thformingg}) and  (Theorem \ref{th:luka}), respectively). We  also  prove the following important results: for $\min-\rightarrow_{G}$ system, the Chebyshev distance $\nabla$ may be an infimum (based on (Proposition \ref{prop:minGsmallestelem}) and showed in (Example \ref{ex:5counter})) while, for $\min-\rightarrow_{GG}$ and  $\min-\rightarrow_{L}$ systems, the Chebyshev distance $\nabla$ is always a minimum i.e. an attainable lower bound (see (Corollary \ref{cor:minGGmin}) and (Corollary \ref{cor:minLmin})).  For $\min-\rightarrow_{G}$ systems, we give a sufficient condition for $\nabla$ to be a minimum (see (Corollary \ref{cor:nablainE}) and (Corollary \ref{cor:chebnablainE})).
 For $\min-\rightarrow_{GG}$ systems, $\min-\rightarrow_{L}$ systems, or $\min-\rightarrow_{G}$ systems whose Chebyshev distance has been verified as a minimum, we can always compute the lowest Chebyshev approximation of the second member and an approximate solution (see  (Corollary \ref{cor:minGGcheb}),  (Corollary \ref{cor:minLcheb}) and (Corollary \ref{cor:chebnablainE})). However, for the $\min-\rightarrow_{G}$ systems whose Chebyshev distance has been verified as not being a minimum, the set of Chebyshev approximations of its second member is empty (Lemma \ref{lemma:minGchebempty}).

\noindent The article is structured as follows. In (Section \ref{sec:background}),  we remind the recent results of \cite{baaj2023chebyshev,baaj2023maxmin} on solving systems of $\max-T$ fuzzy relational equations, where $T$ is a continuous t-norm. In (Section \ref{sec:solvingminimp}), we introduce our main tools for studying the inconsistent systems of $\min-\rightarrow$ fuzzy relational equations: the application $G$ and its main properties, and  an important decomposition of the Chebyshev distance $\nabla$ associated to a system of $\min-\rightarrow$ fuzzy relational equations. 
Then, in the next three sections, we give the explicit analytical formulas for computing the Chebyshev distances $\nabla = \inf_{d \in \mathcal{D}} \Vert \beta - d \Vert$  in the case where the residual implicator is the Gödel implication   (Section \ref{sec:mingodel}), the Goguen implication (Section \ref{sec:mingoguen}) and  Łukasiewicz’s
implication (Section \ref{sec:minluka}). Finally, we conclude with some perspectives and we propose some applications of our work.

\section{Background}
\label{sec:background} 

\noindent In this section, we remind the necessary background for solving systems of fuzzy relational equations. We reuse some notations of \cite{baaj2023chebyshev,baaj2023maxmin} and denote by $T$  a continuous t-norm and $\mathcal{I}_T$ its associated residual implicator  \cite{klement2013triangular}. Finally, we present some recent  results of \cite{baaj2023chebyshev,baaj2023maxmin} on the inconsistency of systems of $\max-T$ fuzzy relational equations.

\subsection{Notations}

\noindent
 The set $[0,1]^{n\times m}$ is the set of matrices of size $(n,m)$ i.e., $n$ rows and $m$ columns, whose components are in $[0,1]$, thus, the set $[0,1]^{n\times 1}$ is the set of column vectors of $n$ components and $[0,1]^{1\times m}$ denotes  the set of row matrices of $m$ components.

\noindent The order relation $\leq$ on the set $[0,1]^{n\times m}$ is defined by:
\[ A \leq B \quad \text{iff we have} \quad  a_{ij} \leq b_{ij} \quad \text{ for all } \quad 1 \leq i \leq n, 1 \leq j \leq m,    \]
\noindent where $A=[a_{ij}]_{1 \leq i \leq n, 1 \leq j \leq m}$ and $B=[b_{ij}]_{1 \leq i \leq n, 1 \leq j \leq m}$. 

For $x,y,z,u,\delta \in [0,1]$, we put:
\begin{itemize}
    \item $x^+ = \max(x,0)$,
    \item $\overline{z}(\delta) = \min(z+\delta,1)$, 
    \item $\underline{z}(\delta) = \max(z-\delta,0) = (z-\delta)^+$
\end{itemize}
\noindent and we have the following equivalence in $[0,1]$: 
\begin{equation}\label{ineq:xyxbarybar}
    \mid x - y \mid \leq \delta \Longleftrightarrow \underline{x}(\delta) \leq y \leq \overline{x}(\delta).
\end{equation}

\noindent To a column-vector $b = [b_i]_{1 \leq  i \leq n}$ and a number $\delta \in [0,1]$, we associate two column-vectors:
    \begin{equation}\label{def:bstarhautbas} 
  \underline{b}(\delta) = [(b_i  - \delta)^+]_{1 \leq  i \leq n} \quad \text{and} \quad \overline{b}(\delta) = [\min(b_i  + \delta, 1)]_{1 \leq  i \leq n}.\end{equation}

\noindent Then, from (\ref{ineq:xyxbarybar}), we deduce for any $c = [c_i]_{1\leq i \leq n} \in [0,1]^{n \times 1}$: 
\begin{equation}\label{ineq:bbbar}
    \Vert b - c \Vert \leq \delta \Longleftrightarrow \underline{b}(\delta) \leq c \leq \overline{b}(\delta).
\end{equation}
\noindent where $ \Vert b - c \Vert = \max_{1 \leq i \leq n}\mid b_i-c_i\mid$.
\subsection{T-norms and their associated residual implicators }

\noindent
A triangular-norm  (t-norm, see  \cite{klement2013triangular}) is a map  $T: [0,1] \times [0,1] \mapsto [0,1]$, which satisfies:
\begin{itemize}
    \item[] $T$ is commutative:   $T(x,y) = T(y,x)$, 
    \item[] $T$ is associative:  $T(x, T(y, z)) = T(T(x, y), z)$, 
    \item[] $T$ is increasing  : $x \leq x' \quad \text{and} \quad y \leq y' \, \Longrightarrow \, T(x, y) \leq T(x', y')$,
    \item[]$T$ has $1$ as neutral element: $T(x, 1) = x$.
\end{itemize}

\noindent To the   t-norm $T$ is associated   the residual implicator  
${\cal I}_T : [0, 1 ] \times [0, 1] \rightarrow [0, 1 ] : (x, y) \mapsto {\cal I}_T(x, y) = \sup\{z \in [0, 1]\,\mid\, T(x, z) \leq y\}$.

\noindent
 For all $a, b   \in [0, 1]$, the main properties of the residual implicator ${\cal I}_T$ associated to a continuous t-norm $T$ are: 
\begin{itemize}
    \item   
${\cal I}_T(a, b) = \max\{z \in [0, 1]\,\mid\, T(a, z) \leq b\}$. Therefore, $T(a, {\cal I}_T(a, b)) \leq b$.
\item
  ${\cal I}_T$ is left-continuous and decreasing in its first argument as well as right-continuous and increasing in its second argument.
  \item
  For all $z\in [0, 1]$, we  have: 
  $$T(a, z) \leq b \Longleftrightarrow z \leq {\cal I}_T(a, b).$$
  \item We have 
  $\,b \leq {\cal I}_T(a, T(a, b))$.
\end{itemize}

\noindent The t-norm $\min$ denoted by $T_M$,  has a residual implicator $\mathcal{I}_{T_M}$ which is the Gödel implication:
\begin{equation}\label{eq:tnormmin}
    T_M(x,y) = \min(x, y) \quad ;  \quad {\cal I}_{T_M}(x,y) = x \underset{G}{\longrightarrow} y = \begin{cases}1 & \text{ if } x \leq y \\ \ y &\text{ if } x > y \end{cases}.
\end{equation}

The t-norm defined by the usual product   is denoted by $T_P$. Its associated residual implicator is the Goguen implication:
\begin{equation}\label{eq:tnormproduct}
    T_P(x,y) = x \cdot y \quad ;  \quad {\cal I}_{T_P}(x,y) = x \underset{GG}{\longrightarrow} y = \begin{cases}1 & \text{ if } x \leq y \\ \frac{y}{x} &\text{ if } x > y \end{cases}.
\end{equation}
Łukasiewicz's t-norm is denoted by $T_L$ and its associated residual implicator ${\cal I}_{T_L}$:
\begin{equation}\label{eq:tnormluka}
    T_{L}(x,y)= \max(x + y - 1, 0) = {(x + y - 1)}^{+} \quad ; \quad 
 {\cal I}_{T_L}(x,y)  = x \underset{L}{\longrightarrow} y = \min(1-x+y,1).
\end{equation}

\subsection{Solving systems of \texorpdfstring{$\max-T$}{max-T} fuzzy relational equations}

A system of $\max-T$ fuzzy relational equations based on a matrix $A=[a_{ij}] \in [0,1]^{n\times m}$   and a column-vector $b=[b_{i}] \in [0,1]^{n\times 1}$   is of the form:\begin{equation}\label{eq:sys}
    (S): A \Box_{T}^{\max} x = b,
\end{equation}
\noindent where $x = [x_j]_{1 \leq j \leq m} \in [0,1]^{m\times 1}$ is  an unknown vector and the matrix operator $\Box_{T}^{\max}$ is the matrix product that uses the continuous  t-norm $T$ as the product and $\max$  as the addition.

\noindent Using the vector \begin{equation}\label{eq:gr:sol}
    e = A^t \Box_{{\cal I}_T}^{\min} b,
\end{equation}
\noindent where $A^t$ is the transpose of $A$ and the matrix product $\Box_{{\cal I}_T}^{\min}$ uses the residual implicator  ${\cal I}_T$ (associated to $T$) as the product and $\min$ as the addition, we have the following equivalence  proved by  Sanchez for $\max-\min$ composition \cite{sanchez1976resolution}, and extended to $\max-T$ composition by Pedrycz \cite{pedrycz1982fuzzy,pedrycz1985generalized} and Miyakoshi and Shimbo \cite{miyakoshi1985solutions}:
\begin{equation}\label{eq:consiste}
    A \Box_T^{\max} x = b \text{ is consistent}\Longleftrightarrow A \Box_T^{\max} e = b. 
\end{equation}

\subsection{Chebyshev distance associated to the second member of a system of \texorpdfstring{$\max-T$}{max-T} fuzzy relational equations}

\noindent To the matrix $A$ and the second member $b$ of the system $(S)$ of $\max-T$ fuzzy relational equations, see (\ref{eq:sys}), is  associated  the set of  vectors $c = [c_i] \in [0,1]^{n \times 1}$ such that the system $A \Box_{T}^{\max} x = c$ is consistent:
\begin{equation}\label{def:setofsecondmembersB}
    \mathcal{C} = \{ c = [c_i] \in {[0,1]}^{n \times 1} \mid  A \Box_{T}^{\max} x = c \text{ is consistent} \}.
\end{equation}
\noindent This set allows us to define the Chebyshev distance associated to the second member $b$ of the system $(S)$.
 \begin{definition}\label{def:chebyshevdist}
The Chebyshev distance associated to the second member $b$ of the system $(S): A \Box_{T}^{\max}x = b$ is: 
\begin{equation}\label{eq:delta}
    \Delta = \Delta(A,b) =  \inf_{c \in \mathcal{C}} \Vert b - c \Vert 
    \end{equation}

    \end{definition}
\noindent where:
\[ \Vert b - c \Vert = \max_{1 \leq i \leq n}\mid b_i - c_i\mid.\]

\noindent and 
\begin{definition}
A Chebyshev approximation of the second member $b$ is an element  $c \in \mathcal{C}$, see (\ref{def:setofsecondmembersB}), such that $\Vert b- c\Vert = \Delta$.  
\end{definition}
The set of Chebyshev approximations of the second member $b$ is:
\begin{equation}\label{eq:setCb}
    \mathcal{C}_b = \{ c \in \mathcal{C} \mid \Vert b -c  \Vert = \Delta \}.  
\end{equation}

We have the following fundamental result,  proven for $\max-\min$ system in \cite{cuninghame1995residuation} and extended for $\max-T$ systems in \cite{baaj2023chebyshev}:

\begin{equation}\label{eq:deltaCUNINGH}
    \Delta =        \min\{\delta\in [0, 1] \mid \underline b(\delta) \leq F(\overline b(\delta))\}.
\end{equation}

\noindent which is based on the following application:
\begin{equation}\label{eq:F} 
    F :  [0, 1]^{n \times 1} \longrightarrow [0, 1]^{n \times 1} : c =[c_i] \mapsto F(c) = A \Box_{T}^{\max} (A^t \Box_{{\cal I}_T}^{\min}
    c) = [F(c)_i]
\end{equation}
\noindent where:
\begin{equation}\forall i \in \{1, 2, \dots, n\},\, F(c)_i = \max_{1 \leq j \leq m} T(a_{ij}, \min_{1 \leq k \leq n} 
{\cal I}_T(a_{kj}, c_k)).\end{equation}

\noindent By solving (\ref{eq:deltaCUNINGH}) in the case of a system of $\max-\min$ fuzzy relational equations $A \Box_{T_M}^{\max}x = b$, the author of \cite{baaj2023maxmin} gave the following analytical formula for computing the Chebyshev distance associated to its second member $b$: 
\begin{equation} \label{eq:Deltamaxmin}
\Delta_M = \max_{1 \leq i \leq n}  \min_{1 \leq j \leq m}\,\max[ (b_i - a_{ij})^+,  \max_{1 \leq k \leq n}\,  \,\sigma_G\,(b_i, a_{kj}, b_k)],\end{equation}
\noindent \text{ where }
\begin{equation}\label{eq:sigmaG}
    \sigma_G(x, y, z) = \min( \frac{(x - z)^+}{2},(y - z)^+).
\end{equation}

Similarly, for the case of a system of $\max$-product fuzzy relational equations $A \Box_{T_P}^{\max}x = b$,  the author of \cite{baaj2023chebyshev} gave the  following analytical formula for computing the Chebyshev distance associated to $b$:
\begin{equation} \label{eq:deltap}
\Delta_P = \max_{1 \leq i \leq n} \min_{1 \leq j \leq m}\, \max_{1 \leq k \leq n}\,\sigma_{GG}\,(a_{ij}, b_i, a_{kj}, b_k),     
\end{equation}
\noindent where
\begin{equation}\label{eq:sigmagg}
    \sigma_{GG}(u,x,y,z) = \max[(x-u)^+, \min(\varphi(u,x,y,z),(y-z)^+)] \end{equation}
    \noindent \text{ and } \begin{equation}\varphi(u,x,y,z) = \begin{cases}\frac{(x \cdot y - u\cdot z)^+}{u+y} &\text{ if } u > 0 \\ x &\text{ if } u = 0\end{cases}. \end{equation}

The author of \cite{baaj2023chebyshev} also gave the following analytical formula for computing Chebyshev distance associated to the second member $b$ of a system of $\max$-Łukasiewicz fuzzy relational equations $A \Box_{T_L}^{\max}x = b$:
\begin{equation} \label{eq:DeltaL}
\Delta_L = \max_{1 \leq i \leq n}  \min_{1 \leq j \leq m}\, \max_{1 \leq k \leq n}\,\sigma_{L }\,(1  - a_{ij}, b_i, a_{kj}, b_k),
\end{equation}
\noindent where
\begin{equation*}\label{eq:sigmaL}
  \sigma_L(u,x,y,z) = \min(x, \max(v^+, \frac{(v + y - z )^+}{2})) \text{ with } v = x+u-1.
\end{equation*}

For the $\max-\min$ system, the author of \cite{baaj2023maxmin} showed that the formula (\ref{eq:Deltamaxmin}) allows us to get directly its  greatest approximate solution and its greatest Chebyshev approximation of its second member (thus, the set $\mathcal{C}_b$ is non-empty and $\Delta_M$ is always a minimum). For $\max$-product and max-Łukasiewicz systems, the author of  \cite{baaj2023chebyshev} showed the same results using the formulas (\ref{eq:deltap}) and (\ref{eq:DeltaL}). From (\ref{eq:Deltamaxmin}),  the author of  \cite{baaj2023maxmin} also gave a description of the set $\mathcal{C}_b$ of Chebyshev approximations, see (\ref{eq:setCb}) and that of the approximate solutions set of the inconsistent system.

\section{\texorpdfstring{General results on solving systems of  $\min-\rightarrow$ fuzzy relational equations  where $\rightarrow$ is one of the implications among Gödel's implication, Goguen's or Łukasiewicz's}{General results on solving systems of fuzzy relational equations min→ where → is one of the implications among Gödel's implication, Goguen's or Łukasiewicz's}}
\label{sec:solvingminimp} 

\noindent In this section, we begin by reminding the solving of a  system of $\min-\rightarrow$ fuzzy relational equations, which is based on a matrix 
$\Gamma  = [\gamma_{ji}]_{1 \leq j \leq m, 1 \leq i \leq n}\in [0, 1]^{m \times n}$ of size $(m,n)$ and a column-vector of $m$ components $\beta = \begin{bmatrix} \beta_j \end{bmatrix}_{1 \leq j \leq m} \in [0 , 1]^{m\times 1}$:
\begin{equation}\label{eq:minsys}
    (\Sigma) : \Gamma  \Box_{\rightarrow}^{\min} x =  \beta,
\end{equation}
\noindent where $x$ is an unknown column-vector of $n$ components.\\
Using the vector $\epsilon = \Gamma^t  \Box_{T}^{\max} \beta$ (where $\Gamma^t$ is the transpose of the matrix $\Gamma$), we have the following equivalence \cite{miyakoshi1985solutions,pedrycz1985applications,perfilieva2008system}:
\begin{equation}\label{eq:equiv:cons}
     (\Sigma) \text{ is consistent } \Longleftrightarrow \Gamma  \Box_{\rightarrow}^{\min}\epsilon = \beta.
\end{equation}

\begin{example}
Let us use the Gödel's implication $\rightarrow_G$, see (\ref{eq:tnormmin}), the matrix $\Gamma = \begin{bmatrix}
    0.6 & 0.49 \\ 
    0.26 & 0.9 \\ 
\end{bmatrix}$ and the vector $\beta = \begin{bmatrix}
    0.58\\
    0.88
\end{bmatrix}.$\\ 
We have $\Gamma^t = \begin{bmatrix}
0.6 & 0.26 \\ 
0.49 & 0.9\\ 
\end{bmatrix}$. We compute $\epsilon = \Gamma^t \Box_{\min}^{\max} \beta = \begin{bmatrix}
    0.58 \\ 
    0.88
\end{bmatrix}$. The $\min-\rightarrow_G$ composition of the matrix  $\Gamma$ by the vector  $\epsilon$ is:
\[ \Gamma \Box_{\rightarrow_G}^{\min} \epsilon = \begin{bmatrix}
    0.58 \\ 
    0.88
\end{bmatrix} = \beta. \]
Therefore, the system  $\Gamma \Box_{\rightarrow_G}^{\min} x = \beta$ is consistent. 
\end{example}

\noindent Similarly to the application $F$, see (\ref{eq:F}), studied for systems of $\max-T$ fuzzy relational equations in \cite{baaj2023chebyshev,baaj2023maxmin}, we introduce the following application for systems of $\min-\rightarrow$ fuzzy relational equations: 
\begin{equation}\label{eq::appG}
    G : [0 , 1]^{m \times 1} \rightarrow  [0 , 1]^{m \times 1}  :  \xi \mapsto G(\xi)  = \Gamma  \Box_{{\cal I}_T}^{\min}
(\Gamma^t \Box_{T}^{\max} \xi),
\end{equation}
\noindent which allows us to reformulate the equivalence (\ref{eq:equiv:cons}) as:
\begin{equation}\label{qu:eq0}
     (\Sigma) \text{ is consistent } \Longleftrightarrow G(\beta) = \beta.
\end{equation}

The rest of this section is structured as follows. In (Subsection \ref{subsec:studyG}) we study the main properties of the function $G$. In (Subsection \ref{subsec:chebminfleche}) we define  the Chebyshev distance $\nabla = \inf_{d \in \mathcal{D}} \Vert \beta -d \Vert$  associated to  the second member $\beta$ of the system $(\Sigma)$  where $\mathcal{D}$ is the set formed by the second members of the consistent systems of $\min-\rightarrow$ fuzzy relational equations defined with the same matrix $\Gamma$.  We show in (Theorem \ref{th1}) that the Chebyshev distance $\nabla$ is  the lower bound of the solutions of a vector inequality expressed
with the application $G$. From this result,  we obtain the decomposition of  the Chebyshev distance $\nabla = \max_{1 \leq j \leq m} \nabla_j$  (Proposition \ref{prop:nablafunc}), where for all $j \in \{1,2,\dots,m\}$, each $\nabla_j$ is the  lower bound of the solutions of the $j$-th scalar inequality in the above-mentioned vector inequality, see (\ref{eq:gbotbtop}).

\subsection{\texorpdfstring{Study of the application $G$}{Study of the application G}}
\label{subsec:studyG}
\noindent
From the system $(\Sigma) : \Gamma  \Box_{{\cal I}_T}^{\min} x =  \beta$, see (\ref{eq:minsys}), where $\Gamma  = [\gamma_{ji}]_{1 \leq j \leq m, 1 \leq i \leq n}\in [0, 1]^{m \times n}$, let us compute the components of $G(\xi)$ for any 
  vector $\xi = [\xi_j]_{1\leq j \leq m} \in [0 , 1]^{m \times 1}$. \\For $1 \leq j \leq m$, we have:
$$G(\xi)_j = \min_{1 \leq i \leq n}\,
{\cal I}_T(\gamma_{ji} , \max_{1 \leq l \leq m}\, T(\gamma_{li} , \xi_l)).$$

The application $G$ has the following properties:
\begin{proposition}\label{propo1}
$$\forall \xi \in [0,1]^{m \times 1}, \quad \xi \leq G(\xi).$$
\end{proposition}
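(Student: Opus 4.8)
The plan is to work componentwise and reduce the vector inequality $\xi \le G(\xi)$ to a family of scalar inequalities, one per coordinate $j$, and then reduce each of those to a per-index statement handled directly by the residuation properties of $\mathcal{I}_T$ recalled in the background. Concretely, I would fix $j \in \{1, \dots, m\}$ and use the explicit formula
$$G(\xi)_j = \min_{1 \leq i \leq n}\, \mathcal{I}_T\bigl(\gamma_{ji},\, \max_{1 \leq l \leq m} T(\gamma_{li}, \xi_l)\bigr).$$
Since the right-hand side is a minimum over $i$, the inequality $\xi_j \le G(\xi)_j$ is equivalent to showing that $\xi_j \le \mathcal{I}_T(\gamma_{ji},\, \max_{1 \leq l \leq m} T(\gamma_{li}, \xi_l))$ holds for \emph{every} $i \in \{1, \dots, n\}$.

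Next, I would fix an arbitrary $i$ and invoke the property of the residual implicator stating that $b \le \mathcal{I}_T(a, T(a, b))$ for all $a, b \in [0,1]$, applied with $a = \gamma_{ji}$ and $b = \xi_j$. This gives $\xi_j \le \mathcal{I}_T(\gamma_{ji}, T(\gamma_{ji}, \xi_j))$. The key observation is that the single quantity $T(\gamma_{ji}, \xi_j)$ is exactly the $l = j$ term of the inner maximum, so that $T(\gamma_{ji}, \xi_j) \le \max_{1 \leq l \leq m} T(\gamma_{li}, \xi_l)$.

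Finally, since $\mathcal{I}_T$ is increasing in its second argument, I would propagate this last bound through the implicator to get $\mathcal{I}_T(\gamma_{ji}, T(\gamma_{ji}, \xi_j)) \le \mathcal{I}_T(\gamma_{ji},\, \max_{1 \leq l \leq m} T(\gamma_{li}, \xi_l))$, and chaining the two inequalities yields $\xi_j \le \mathcal{I}_T(\gamma_{ji},\, \max_{1 \leq l \leq m} T(\gamma_{li}, \xi_l))$ for the fixed $i$. As $i$ was arbitrary, taking the minimum over $i$ gives $\xi_j \le G(\xi)_j$, and as $j$ was arbitrary this is precisely $\xi \le G(\xi)$.

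As for the main obstacle: there is essentially none of genuine difficulty, the argument being a direct application of the adjunction property of the residual implicator. The only step requiring a moment of thought is the choice $l = j$ in the inner maximum, which is what makes the round trip $\xi_j \mapsto T(\gamma_{ji}, \xi_j) \mapsto \mathcal{I}_T(\gamma_{ji}, \cdot)$ close up; every other step is pure monotonicity. I would also note that nothing here uses continuity of $T$ or any feature specific to a single implicator, so the statement holds uniformly for the Gödel, Goguen and Łukasiewicz cases at once.
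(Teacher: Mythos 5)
Your proof is correct and follows essentially the same route as the paper's: both reduce to the componentwise inequality, pick out the $l=j$ term of the inner maximum, use monotonicity of $\mathcal{I}_T$ in its second argument, and close with the adjunction property $b \leq \mathcal{I}_T(a, T(a,b))$. The only difference is cosmetic --- you chain inequalities upward from $\xi_j$ while the paper descends from $G(\xi)_j$ --- so there is nothing of substance to distinguish the two arguments.
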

To prove the inequality $\xi_j \leq G(\xi)_j$ for each $j\in \{1 , 2 , \dots , m\}$, we rely on the inequality ${\cal I}_T(a ,  T(a , b))\geq b$.
\begin{proof}
\noindent
Let $j\in\{1 , 2 , \dots , m\}$, we have:
\begin{align}
    G(\xi)_j & = \min_{1 \leq i \leq n}\,
{\cal I}_T(\gamma_{ji} , \max_{1 \leq l \leq m}\, T(\gamma_{li} , \xi_l))\nonumber\\
& \geq \min_{1 \leq i \leq n}\,
{\cal I}_T(\gamma_{ji} ,  T(\gamma_{ji} , \xi_j))\nonumber\\
& \geq \min_{1 \leq i \leq n}\,\xi_j\nonumber\\
& =  \xi_j.\nonumber
\end{align}    
\end{proof}

\begin{proposition}\label{propo2}\mbox{}
The application $G$ is increasing ; given $\xi, \xi' \in [0 , 1]^{m \times 1}$  we have:
$$\xi \leq \xi'  \Longrightarrow G(\xi) \leq G(\xi').$$ 
\end{proposition}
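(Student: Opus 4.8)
The plan is to establish monotonicity by propagating the hypothesis $\xi \leq \xi'$ through the componentwise formula
$$G(\xi)_j = \min_{1 \leq i \leq n}\, {\cal I}_T\bigl(\gamma_{ji} , \max_{1 \leq l \leq m}\, T(\gamma_{li} , \xi_l)\bigr),$$
exploiting that every operation appearing in it is increasing in the relevant slot: the t-norm $T$ is increasing in each argument, the residual implicator ${\cal I}_T$ is increasing in its \emph{second} argument, and the lattice operations $\min$ and $\max$ are monotone. Since $G$ is thus a composition of monotone building blocks, the result should follow by a routine chain of inequalities carried out with $j$ fixed.

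Concretely, I would fix $j \in \{1,\dots,m\}$ and an inner index $i \in \{1,\dots,n\}$, and work from the innermost expression outward. Assuming $\xi_l \leq \xi'_l$ for every $l$, the fact that $T$ is increasing in its second argument gives $T(\gamma_{li},\xi_l) \leq T(\gamma_{li},\xi'_l)$ for each $l$; taking the maximum over $l$ preserves this inequality, so $\max_{1\leq l\leq m} T(\gamma_{li},\xi_l) \leq \max_{1\leq l\leq m} T(\gamma_{li},\xi'_l)$.

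Next I would invoke the monotonicity of the residual implicator in its second argument, as listed among the properties of ${\cal I}_T$: with the first argument $\gamma_{ji}$ held fixed, the previous inequality yields ${\cal I}_T(\gamma_{ji}, \max_{l} T(\gamma_{li},\xi_l)) \leq {\cal I}_T(\gamma_{ji}, \max_{l} T(\gamma_{li},\xi'_l))$. Taking the minimum over $i$, which is again a monotone operation, then delivers $G(\xi)_j \leq G(\xi')_j$. As $j$ was arbitrary, this gives $G(\xi) \leq G(\xi')$ componentwise, which is exactly the claim.

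There is no genuine obstacle here; the only point that requires a little care is using the correct monotonicity direction of ${\cal I}_T$. Because the implicator is \emph{decreasing} in its first argument but increasing in its second, it is essential that the quantity that varies with $\xi$ sits in the second slot while the matrix entry $\gamma_{ji}$ stays fixed. This is precisely how it appears in $G(\xi)_j$, so the increasing-in-the-second-argument property is the one that applies, and the argument goes through.
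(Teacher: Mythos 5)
Your proof is correct and is essentially the same argument as the paper's: the paper factors $G$ as the composition of the two increasing maps $\xi \mapsto \Gamma^t \Box_{T}^{\max} \xi$ and $c \mapsto \Gamma \Box_{{\cal I}_T}^{\min} c$, whose monotonicity rests on exactly the properties you invoke (monotonicity of $T$, of ${\cal I}_T$ in its second argument, and of $\max$/$\min$). You merely unfold this composition componentwise, which is a more explicit rendering of the same reasoning.
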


\begin{proof}
The increasing property of the application $G$ is a consequence of the increasing property of the t-norm $T$ and that of its associated residual implicator ${\cal I}_T$ with respect to the second variable. It results that  the  following two applications are also increasing: 
$$[0 , 1]^{n \times 1} \rightarrow [0 , 1]^{m \times 1}:
c \mapsto \Gamma  \Box_{{\cal I}_T}^{\min}c \quad ; \quad 
[0 , 1]^{m \times 1}\rightarrow [0 , 1]^{n \times 1} :  
\xi  \mapsto \Gamma^t \Box_{T}^{\max} \xi.$$
 
\end{proof}
\begin{proposition}\label{propo3}\mbox{}
 The application $G$ is idempotent i.e., $G \circ G = G$.
\end{proposition}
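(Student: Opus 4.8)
The plan is to recognize $G$ as the composite of the two order-preserving maps $\alpha : \xi \mapsto \Gamma^t \Box_{T}^{\max}\xi$ and $\rho : c \mapsto \Gamma \Box_{\mathcal{I}_T}^{\min} c$, so that $G = \rho \circ \alpha$, and to show that this composite behaves like a closure operator. One of the two inclusions needed for idempotency comes for free: applying Proposition~\ref{propo1} to the vector $G(\xi)$ in place of $\xi$ immediately gives $G(\xi) \le G(G(\xi))$. Hence the whole difficulty reduces to the reverse inequality $G(G(\xi)) \le G(\xi)$.

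First I would establish the auxiliary inequality $\alpha(\rho(c)) \le c$ for every $c \in [0,1]^{n \times 1}$, that is, $\Gamma^t \Box_{T}^{\max}(\Gamma \Box_{\mathcal{I}_T}^{\min} c) \le c$. Componentwise this reads $\max_{1 \le l \le m} T(\gamma_{li}, \min_{1 \le k \le n} \mathcal{I}_T(\gamma_{lk}, c_k)) \le c_i$. For each fixed $l$ I would bound the inner minimum by its $k = i$ term, $\min_{1\le k \le n} \mathcal{I}_T(\gamma_{lk}, c_k) \le \mathcal{I}_T(\gamma_{li}, c_i)$, and then use the monotonicity of $T$ in its second argument together with the residuation property $T(a, \mathcal{I}_T(a,b)) \le b$ recalled in the background to obtain $T(\gamma_{li}, \min_{1\le k\le n} \mathcal{I}_T(\gamma_{lk}, c_k)) \le T(\gamma_{li}, \mathcal{I}_T(\gamma_{li}, c_i)) \le c_i$; taking the maximum over $l$ preserves the bound. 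This step, which amounts to the ``counit'' inequality $\alpha \circ \rho \le \mathrm{id}$, is where the specific properties of the residual implicator enter and is the crux of the argument.

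With this in hand I would show $\alpha \circ G = \alpha$. On one side, $\alpha(G(\xi)) = \alpha(\rho(\alpha(\xi))) \le \alpha(\xi)$ by the auxiliary inequality applied to $c = \alpha(\xi)$. On the other side, Proposition~\ref{propo1} gives $\xi \le G(\xi)$, and since $\alpha$ is increasing (as already noted in the proof of Proposition~\ref{propo2}) this yields $\alpha(\xi) \le \alpha(G(\xi))$. The two bounds force $\alpha(G(\xi)) = \alpha(\xi)$. Applying $\rho$ to this equality then gives $G(G(\xi)) = \rho(\alpha(G(\xi))) = \rho(\alpha(\xi)) = G(\xi)$, which is exactly the claimed idempotency $G \circ G = G$. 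I expect the only genuinely delicate point to be the componentwise verification of $\alpha \circ \rho \le \mathrm{id}$; everything after it is a short formal manipulation of inequalities built from monotonicity and Proposition~\ref{propo1}.
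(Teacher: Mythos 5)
Your proof is correct, but it justifies the crucial identity $\Gamma^t \Box_{T}^{\max} G(\xi) = \Gamma^t \Box_{T}^{\max} \xi$ (in your notation $\alpha \circ G = \alpha$) by a different mechanism than the paper. The paper gets this equality in one stroke: it observes that $c = \Gamma^t \Box_{T}^{\max}\xi$ is by construction the second member of a consistent $\max$--$T$ system (matrix $A = \Gamma^t$, solution $\xi$), and then invokes the known consistency criterion (\ref{eq:consiste}) to conclude $F(c)=c$, i.e. $\alpha(\rho(c)) = c$; composing with $\rho$ finishes exactly as you do. You instead avoid any appeal to that criterion: you prove the counit inequality $\alpha \circ \rho \le \mathrm{id}$ componentwise from the residuation property $T(a,\mathcal{I}_T(a,b)) \le b$, and combine it with the unit inequality $\mathrm{id} \le \rho\circ\alpha$ (Proposition~\ref{propo1}) and the monotonicity of $\alpha$ to squeeze out $\alpha(G(\xi)) = \alpha(\xi)$. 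Your componentwise verification (bounding the inner minimum by its $k=i$ term, then applying monotonicity of $T$ and residuation) is sound, and the index bookkeeping matches the paper's conventions. What each approach buys: the paper's proof is shorter and makes visible the link to the Sanchez/Pedrycz/Miyakoshi--Shimbo result it reuses, but imports that result as a black box; yours is fully self-contained, makes the Galois-connection/closure-operator structure of $G = \rho\circ\alpha$ explicit, and in fact establishes the slightly stronger identity $\alpha\circ\rho\circ\alpha = \alpha$ without ever mentioning consistency. Both proofs share the same closing step of applying $\rho$ to the equality $\alpha(G(\xi)) = \alpha(\xi)$.
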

To prove the idempotent property of $G$, we have to show that $\forall \xi \in [0,1]^{m \times 1}$, $G(G(\xi)) = G(\xi)$.  The proof is a consequence of the relation: $\forall \xi \in [0,1]^{m \times 1}, \quad \Gamma^t \Box_{T}^{\max} G(\xi) = \Gamma^t \Box_{T}^{\max} \xi$, which we will  deduce  from a property of the application $F$, see (\ref{eq:F}).

\begin{proof}
Let $\xi\in [0 , 1]^{m \times 1}$ be fixed and we put $c = \Gamma^t \Box_T^{\max}  \xi\in [0 , 1]^{n \times 1}$.  Thus, $ \Gamma^t \Box_T^{\max} x = c$ is a consistent system and $\xi$ one of its solution. We remind that the application $F$ of \cite{baaj2023chebyshev,baaj2023maxmin}, see (\ref{eq:F}), associated to the matrix $A:= \Gamma^t$ satisfies    $F(c) = c$. 

\noindent
We have: 
\begin{align}
 \Gamma^t \Box_T^{\max} G(\xi) & =  \Gamma^t \Box_T^{\max}( \Gamma \Box_{{\cal I}_T}^{\min}  (\Gamma^t \Box_T^{\max} \xi)) \nonumber\\
& =  \Gamma^t \Box_T^{\max}( \Gamma \Box_{{\cal I}_T}^{\min} c) \nonumber\\
& = F(c) \nonumber\\
& = c  =  \Gamma^t \Box_T^{\max}\xi. \nonumber
\end{align}
The equality $ \Gamma^t \Box_T^{\max} G(\xi)  =  \Gamma \Box_T^{\max}\xi$ leads to:
$$G(G(\xi)) =  \Gamma \Box_{{\cal I}_T}^{\min} ( \Gamma^t \Box_T^{\max} G(\xi)) = 
 \Gamma \Box_{{\cal I}_T}^{\min} ( \Gamma^t \Box_T^{\max}  \xi) = G(\xi).$$
\end{proof}

\begin{proposition}

The application $G$ is right-continuous.

\noindent If the application 
${\cal I}_T : [0 , 1]^2 \rightarrow [0  , 1]$
is  continuous, then $G : [0 , 1]^{m \times 1} \rightarrow [0 , 1]^{m \times 1}$ is also continuous.
\end{proposition}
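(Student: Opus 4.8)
The plan is to exploit the factorization $G = K \circ H$ that is implicit in the definition (\ref{eq::appG}), writing $H : \xi \mapsto \Gamma^t \Box_{T}^{\max} \xi$ for the inner map (whose $i$-th component is $\max_{1 \leq l \leq m} T(\gamma_{li}, \xi_l)$) and $K : c \mapsto \Gamma \Box_{{\cal I}_T}^{\min} c$ for the outer map (whose $j$-th component is $\min_{1 \leq i \leq n} {\cal I}_T(\gamma_{ji}, c_i)$). Both maps are increasing, as was already observed in the proof of (Proposition \ref{propo2}). I would first record the regularity of each factor separately, then show that the relevant property is preserved under composition. To begin with, $H$ is continuous: each of its components is a finite $\max$ of the maps $(\gamma_{li}, \xi_l) \mapsto T(\gamma_{li}, \xi_l)$, which are continuous because $T$ is a continuous t-norm, and a finite maximum of continuous functions is continuous.

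Next I would show that $K$ is right-continuous. Each term $c \mapsto {\cal I}_T(\gamma_{ji}, c_i)$ depends on $c$ only through the coordinate $c_i$ and is right-continuous, since ${\cal I}_T$ is right-continuous in its second argument. Given a point $c$ and any sequence $c^{(k)} \to c$ with $c^{(k)} \geq c$, each coordinate satisfies $c_i^{(k)} \to c_i$ from above, so every term converges to its value at $c$; because a finite minimum of finitely many convergent sequences converges to the minimum of the limits, $K(c^{(k)}) \to K(c)$, which is exactly right-continuity.

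Finally I would assemble $G = K \circ H$. Take $\xi^{(k)} \to \xi$ with $\xi^{(k)} \geq \xi$. By continuity of $H$ we get $H(\xi^{(k)}) \to H(\xi)$, and by monotonicity of $H$ we get $H(\xi^{(k)}) \geq H(\xi)$, so the images approach $H(\xi)$ from above; the right-continuity of $K$ then yields $G(\xi^{(k)}) = K(H(\xi^{(k)})) \to K(H(\xi)) = G(\xi)$, which establishes the right-continuity of $G$. For the second claim, if ${\cal I}_T$ is (jointly) continuous, then each term of $K$ is continuous, so $K$, being a finite minimum of continuous functions, is continuous; hence $G = K \circ H$ is continuous as a composition of continuous maps.

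The step I expect to be the main obstacle is the composition argument, because right-continuity is a one-sided notion and is \emph{not} in general preserved when one composes with an arbitrary continuous map. What rescues it here is the monotonicity of $H$: it guarantees that a sequence approaching $\xi$ from above is sent to a sequence approaching $H(\xi)$ from above, which is precisely the kind of input the right-continuity of $K$ can handle. Keeping track of this monotonicity, and not merely of the convergence, is the delicate point of the argument.
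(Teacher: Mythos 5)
Your proof is correct, and it follows the same route the paper intends: the paper's own proof is only a two-sentence sketch, attributing right-continuity of $G$ to the continuity of $T$ and the right-continuity of ${\cal I}_T$ in its second argument, and deferring everything else to ``general theorems on continuity'' in Rudin. Your factorization $G = K \circ H$ with $H : \xi \mapsto \Gamma^t \Box_{T}^{\max} \xi$ and $K : c \mapsto \Gamma \Box_{{\cal I}_T}^{\min} c$ is exactly the decomposition the paper already uses (implicitly) in the proof of its Proposition on monotonicity, so you are filling in the details of the paper's argument rather than taking a different path. What you add beyond the paper is valuable: you correctly isolate the one genuinely delicate step, namely that right-continuity is not preserved under composition with an arbitrary continuous map, and that it is the \emph{monotonicity} of $H$ (sending sequences approaching $\xi$ from above to sequences approaching $H(\xi)$ from above) that makes the composition argument go through. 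The paper never mentions this point, and an appeal to generic continuity theorems alone would not settle it; your write-up is therefore a complete and honest version of what the paper only asserts.
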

\noindent The right-continuity of the application $G$ means that for any sequence $(\xi^{(k)})$ verifying:
$\forall k, \xi^{(k)} \geq \xi \text{ and }  (\xi^{(k)}) \rightarrow \xi$, we have $  (G(\xi^{(k))}) \rightarrow G(\xi)$. Note that, in $[0 , 1]^{m \times 1}$, the convergence of a sequence of vectors  $(\eta^{(k)})$ to a vector $\eta\in [0 , 1]^{m \times 1}$ is the convergence component by component:
$\forall j\in\{1 , 2 , \dots , m\},\, 
\eta^{(k)}_j \rightarrow \eta_j$.

\begin{proof}
The right-continuity of the application $G$ is a consequence of the continuity of the t-norm $T$ and  the right-continuity with respect to the second variable of the  residual implicator $\mathcal{I}_T$ associated to $T$. It is obtained by applying the general theorems on continuity \cite{rudin1976}, which also allow, in the case where ${\cal I}_T$ is continuous, to obtain the continuity of the application $G$.  
\end{proof}

In the following subsection, the application $G$ is used  for computing the Chebyshev distance associated to the second member of a system of $\min-\rightarrow$ fuzzy relational equations.

\subsection{\texorpdfstring{Chebyshev distance associated to the second member of a system of $\min-\rightarrow$ fuzzy relational equations}{Chebyshev distance associated to the second member of a system of min→ fuzzy relational equations}}
\label{subsec:chebminfleche}

\noindent In this subsection, we begin by defining the set $\mathcal{D}$  formed by the second members of the consistent systems of $\min-\rightarrow$ fuzzy relational equations defined with the same matrix $\Gamma$.  For $\max-T$ systems, an analogous set denoted $\mathcal{C}$ was defined, see  (\ref{def:setofsecondmembersB}). 

\noindent To the matrix $\Gamma$, let us  associate  the set of  vectors $d = [d_j] \in [0,1]^{m \times 1}$ such that the system $\Gamma  \Box_{\rightarrow}^{\min} x = d$ is consistent:
\begin{equation}\label{def:setD}
    \mathcal{D} = \{ d = [d_j] \in {[0,1]}^{m \times 1} \mid  \Gamma  \Box_{\rightarrow}^{\min} x = d \text{ is consistent} \}.
\end{equation}
\noindent 
Note that the equivalence (\ref{qu:eq0}) allows us to obtain a second definition of the set $\mathcal{D}$:
\begin{equation}\label{def:setofsecondmembersBeta0}
    \mathcal{D} = \{ d = [d_j] \in {[0,1]}^{m \times 1} \mid  G(d) = d   \}.
\end{equation}
Using the idempotence property  of the application $G$ (Proposition \ref{propo3}), we also deduce:
\begin{equation}\label{qu:eq00}
\forall \beta \in [0 , 1]^{m \times 1}\,,\, G(\beta) \in 
\mathcal{D}.
\end{equation}
The set $\mathcal{D}$ is non-empty: the vector $d\in [0,1]^{m \times 1}$ whose components are equal to $1$,  satisfies the equality 
$G(d) = d$ and  then $d\in \mathcal{D}$.   \\   
This set allows us to define:
 \begin{definition}\label{def:chebyshevdistMinfleche}
The Chebyshev distance associated to the system $(\Sigma): \Gamma  \Box_{\rightarrow}^{\min} x = \beta$ is: 
\begin{equation}\label{eq:nabla}
    \nabla = \nabla(\Gamma,\beta) =  \inf_{d \in \mathcal{D}} \Vert \beta - d \Vert 
    \end{equation}
    \noindent where:
\[ \Vert \beta - d \Vert = \max_{1 \leq j \leq m}\mid \beta_j - d_j\mid.\]
    \end{definition}
and

\begin{definition}
A Chebyshev approximation of the second member $\beta$ is an element  $d \in \mathcal{D}$, see (\ref{def:setofsecondmembersBeta0}), such that $\Vert d - \beta \Vert = \nabla$.  
\end{definition}
The set of Chebyshev approximations of the second member $\beta$ is:
\begin{equation}\label{eq:setDb}
    \mathcal{D}_\beta = \{ d \in \mathcal{D} \mid \Vert d - \beta \Vert = \nabla \}.  
\end{equation}

\noindent In the case of a system of $\max-T$ fuzzy relational equations, the authors of \cite{baaj2023chebyshev,cuninghame1995residuation} showed that its associated Chebyshev distance $\Delta$ is the lower bound of the solutions of a vector inequality involving the application   $F$,  (see \ref{eq:deltaCUNINGH}). \\
\noindent
Similarly, for a system of $\min-\rightarrow$  fuzzy relational equations, we will prove in (Theorem \ref{th1}) that the Chebyshev distance (\ref{eq:nabla}) is the lower bound of the solutions  of a vector inequality involving the application $G$.\\   

We rely on the following sets:
\begin{notation}
   \begin{equation}\label{qu:ETot}
 E = \{\delta\in [0, 1] \mid G(\underline \beta(\delta))  \leq \overline{\beta}(\delta)\}.
\end{equation}

For all $1 \leq j \leq m$:
\begin{equation}\label{eq:setEj}
E_j = \{\delta\in[0 , 1] \,\mid\, G(\underline{\beta}(\delta))_j \leq \overline{\beta}(\delta)_j\}, \quad  \nabla_j = \inf E_j. 
\end{equation} 
\begin{equation}\label{eq:setAj}
 {\cal A}_j=\{i \in \{1 , 2 , \dots , n\}\,\mid \,
\gamma_{ji} > 0 \}, 
\end{equation}

\end{notation}

\noindent 
We need the following lemma:
\begin{lemma}\label{lemdec}
We have:
\begin{enumerate}
 \item  $E = \bigcap_{1 \leq j \leq m} E_j.$
 \item 
 for all $\delta , \delta' \in [0 , 1]$ and $j\in \{1 , 2 , \dots , m\}$ , we have:
\[\delta \in E_j \quad \text{and} \quad \delta \leq \delta' 
 \,\Longrightarrow \,\delta'\in E_j.\]
\end{enumerate}
\end{lemma}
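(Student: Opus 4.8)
The plan is to prove the two claims of Lemma~\ref{lemdec} directly from the definitions in (\ref{qu:ETot}), (\ref{eq:setEj}) and the properties of $G$ established earlier.

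For the first claim $E = \bigcap_{1 \leq j \leq m} E_j$, I would simply unfold the definitions. By (\ref{qu:ETot}), $\delta \in E$ means $G(\underline{\beta}(\delta)) \leq \overline{\beta}(\delta)$, which is a vector inequality in $[0,1]^{m\times 1}$. Since the order relation $\leq$ on vectors is componentwise (as recalled at the start of the Notations subsection), this vector inequality holds if and only if $G(\underline{\beta}(\delta))_j \leq \overline{\beta}(\delta)_j$ holds for every $j \in \{1,2,\dots,m\}$. But the latter condition for a fixed $j$ is exactly the defining condition of $E_j$ in (\ref{eq:setEj}). Hence $\delta \in E \Longleftrightarrow \delta \in E_j$ for all $j \Longleftrightarrow \delta \in \bigcap_{1 \leq j \leq m} E_j$. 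This is a one-line logical equivalence and presents no real obstacle.

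For the second claim, I would fix $j$ and $\delta, \delta' \in [0,1]$ with $\delta \in E_j$ and $\delta \leq \delta'$, and show $\delta' \in E_j$. The key is the monotonic behavior of the two sides of the inequality as $\delta$ increases. First, from the definitions (\ref{def:bstarhautbas}), as $\delta$ grows the vector $\underline{\beta}(\delta) = [(\beta_i - \delta)^+]$ decreases componentwise and $\overline{\beta}(\delta) = [\min(\beta_i+\delta,1)]$ increases componentwise; so $\delta \leq \delta'$ gives $\underline{\beta}(\delta') \leq \underline{\beta}(\delta)$ and $\overline{\beta}(\delta) \leq \overline{\beta}(\delta')$. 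Next, since $G$ is increasing (Proposition~\ref{propo2}), $\underline{\beta}(\delta') \leq \underline{\beta}(\delta)$ yields $G(\underline{\beta}(\delta')) \leq G(\underline{\beta}(\delta))$. Chaining these, for the $j$-th component we obtain
\[
G(\underline{\beta}(\delta'))_j \leq G(\underline{\beta}(\delta))_j \leq \overline{\beta}(\delta)_j \leq \overline{\beta}(\delta')_j,
\]
where the middle inequality is the hypothesis $\delta \in E_j$. This shows $G(\underline{\beta}(\delta'))_j \leq \overline{\beta}(\delta')_j$, i.e.\ $\delta' \in E_j$.

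The only point requiring a moment of care is verifying the monotonicity of $\delta \mapsto \underline{\beta}(\delta)$ and $\delta \mapsto \overline{\beta}(\delta)$, which follows immediately because $\delta \mapsto (\beta_i-\delta)^+$ is nonincreasing and $\delta \mapsto \min(\beta_i+\delta,1)$ is nondecreasing. Everything else is a direct application of results already available: the componentwise reading of the vector order for part~1, and the increasing property of $G$ (Proposition~\ref{propo2}) for part~2. I therefore expect no genuine obstacle; the lemma is a structural preliminary whose role is to let one later reduce the study of the global set $E$ (and hence of $\nabla$) to the per-component sets $E_j$ and the scalar quantities $\nabla_j = \inf E_j$, with part~2 guaranteeing each $E_j$ is an up-set so that $\nabla_j$ behaves as a threshold.
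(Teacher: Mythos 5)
Your proposal is correct and follows essentially the same route as the paper: part 1 by componentwise unfolding of the vector inequality, and part 2 by chaining $G(\underline{\beta}(\delta'))_j \leq G(\underline{\beta}(\delta))_j \leq \overline{\beta}(\delta)_j \leq \overline{\beta}(\delta')_j$ via the increasing property of $G$ (Proposition~\ref{propo2}). The only difference is cosmetic: you make explicit the monotonicity of $\delta \mapsto \underline{\beta}(\delta)$ and $\delta \mapsto \overline{\beta}(\delta)$, which the paper uses implicitly.
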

\begin{proof}
The proof of the first statement is easy: 
for any $\delta \in[0 , 1]$, we have: 
\[ \delta \in E 
\,\Longleftrightarrow \,
G(\underline{\beta}(\delta))  \leq \overline{\beta}(\delta) 
\,\Longleftrightarrow \,
\forall j\in \{1 , \dots , m\}  \, ,\, G(\underline{\beta}(\delta))_j  \leq \overline{\beta}(\delta)_j
\,\Longleftrightarrow \,
\forall   j\in \{1 , \dots , m\}\, ,\, \delta \in E_j.\]
Let $\delta \in E_j$ and $\delta'\in [0 , 1]$ satisfy the inequality $\delta \leq \delta'$. Then, using the increasing (Proposition \ref{propo2}) of $G$, we obtain: 
\[G(\underline{\beta}(\delta'))_j \leq 
G(\underline{\beta}(\delta))_j
 \leq
\overline{\beta}(\delta)_j
\leq
\overline{\beta}(\delta')_j.\]
Thus, we obtain $G(\underline{\beta}(\delta'))_j \leq \overline{\beta}(\delta')_j$, i.e., $\delta' \in E_j$.
\end{proof}

The Chebyshev distance $\nabla$ (Definition \ref{def:chebyshevdistMinfleche}) of a $\min-\rightarrow$ system $(\Sigma) : \Gamma  \Box_{\rightarrow}^{\min} x =  \beta$, see (\ref{eq:minsys}) is:
\begin{theorem}\label{th1} 
We have:
\begin{equation}\label{eq:nablatotal}
   \nabla =        \inf\{\delta\in [0, 1] \mid G(\underline \beta(\delta))  \leq \overline{\beta}(\delta)\} = \inf E .
\end{equation}
\end{theorem}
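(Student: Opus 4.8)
The plan is to show the two inequalities $\nabla \leq \inf E$ and $\nabla \geq \inf E$, thereby establishing the equality in (\ref{eq:nablatotal}). The bridge between the two quantities is the equivalence (\ref{ineq:bbbar}), which translates the condition $\Vert \beta - d \Vert \leq \delta$ into the sandwich $\underline{\beta}(\delta) \leq d \leq \overline{\beta}(\delta)$, combined with the characterization (\ref{def:setofsecondmembersBeta0}) of membership in $\mathcal{D}$ as the fixed-point condition $G(d) = d$.

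\textbf{First inequality ($\nabla \leq \inf E$).} I would show that for every $\delta \in E$ there exists a vector $d \in \mathcal{D}$ with $\Vert \beta - d \Vert \leq \delta$, which forces $\nabla \leq \delta$; taking the infimum over $E$ then gives $\nabla \leq \inf E$. The natural candidate is $d = G(\underline{\beta}(\delta))$. By (\ref{qu:eq00}) this vector lies in $\mathcal{D}$ for free (since $G(\underline{\beta}(\delta))$ is of the form $G(\cdot)$ and $G$ is idempotent). To control $\Vert \beta - d \Vert$, I use that $\delta \in E$ gives the upper bound $d = G(\underline{\beta}(\delta)) \leq \overline{\beta}(\delta)$, while Proposition \ref{propo1} applied to $\xi = \underline{\beta}(\delta)$ gives the lower bound $\underline{\beta}(\delta) \leq G(\underline{\beta}(\delta)) = d$. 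Hence $\underline{\beta}(\delta) \leq d \leq \overline{\beta}(\delta)$, and (\ref{ineq:bbbar}) yields $\Vert \beta - d \Vert \leq \delta$, as required.

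\textbf{Second inequality ($\nabla \geq \inf E$).} Here I would take an arbitrary $d \in \mathcal{D}$ and set $\delta = \Vert \beta - d \Vert$, then show $\delta \in E$, so that $\inf E \leq \delta$; taking the infimum over $d \in \mathcal{D}$ gives $\inf E \leq \nabla$. From $\Vert \beta - d \Vert \leq \delta$ and (\ref{ineq:bbbar}) I have $\underline{\beta}(\delta) \leq d \leq \overline{\beta}(\delta)$. Applying the increasing property of $G$ (Proposition \ref{propo2}) to the left inequality gives $G(\underline{\beta}(\delta)) \leq G(d)$, and since $d \in \mathcal{D}$ the fixed-point characterization (\ref{def:setofsecondmembersBeta0}) gives $G(d) = d \leq \overline{\beta}(\delta)$. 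Chaining these yields $G(\underline{\beta}(\delta)) \leq \overline{\beta}(\delta)$, i.e.\ $\delta \in E$.

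The two inequalities together give $\nabla = \inf E$, and by Lemma \ref{lemdec}(1) this equals $\inf \bigcap_{1 \leq j \leq m} E_j$, matching the stated display. The step I expect to be the most delicate is the first inequality: it relies on the interplay of three distinct facts about $G$ at once — idempotence (to land in $\mathcal{D}$ via (\ref{qu:eq00})), the expansivity $\xi \leq G(\xi)$ of Proposition \ref{propo1} (for the lower sandwich bound), and the defining membership condition of $E$ (for the upper sandwich bound) — whereas the second inequality uses only monotonicity and the fixed-point definition. I would also note that the infimum need not be attained in general, which is precisely why the statement is phrased with $\inf$ rather than $\min$; the attainability question is exactly what the later results (the distinction between the Gödel case and the Goguen/Łukasiewicz cases) are designed to resolve.
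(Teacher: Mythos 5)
Your proof is correct and follows essentially the same route as the paper's own argument: the paper also proves $\nabla \leq \inf E$ by taking $d = G(\underline{\beta}(\delta))$ (using (\ref{qu:eq00}), Proposition \ref{propo1}, and (\ref{ineq:bbbar})), and proves $\inf E \leq \nabla$ by setting $\delta = \Vert \beta - d \Vert$ for arbitrary $d \in \mathcal{D}$ and invoking monotonicity of $G$ together with the fixed-point characterization $G(d) = d$. No gaps; your closing remark about non-attainability of the infimum correctly anticipates the later distinction between the Gödel case and the Goguen/Łukasiewicz cases.
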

\begin{proof}
We put 
$\nabla' = \inf E.$ Let us show that we have  
$\nabla = \nabla'$ in two steps:
\begin{enumerate}
    \item $\nabla \leq \nabla'$: we have to show:
    \[ \forall \delta \in E,\, \nabla \leq    \delta. \]
Let  $\delta \in E$, we put $d = G(\underline \beta(\delta))$.  From (\ref{qu:eq00}), we deduce that $d \in \mathcal{D}$. 
Then, we obtain:
\[ \Vert \beta - G(\underline \beta(\delta)) \Vert = \Vert \beta - d \Vert \geq \nabla. \]
From (Proposition \ref{propo1}), we have $\underline \beta(\delta) \leq G(\underline \beta(\delta))$ and $\delta \in E$. So, we deduce the double inequality 
\[ \underline \beta(\delta) \leq G(\underline \beta(\delta)) \leq \overline \beta(\delta).  \]
Using the inequality 
(\ref{ineq:bbbar}), we conclude that: 
\[ \Vert \beta - d \Vert = 
\Vert \beta - G(\underline \beta(\delta)) \Vert \leq \delta \]
and finally $\nabla \leq  
\Vert \beta - d \Vert \leq \delta$. 
\item $\nabla' \leq \nabla$: we have to show:
\[ \forall d\in {\cal D}\,,\,
\nabla' \leq \Vert \beta - d \Vert.  \] 
Let $d \in {\cal D}$, we put 
$\delta = \Vert \beta - d \Vert$. From (\ref{ineq:bbbar}), we deduce: 
\[\underline \beta(\delta) \leq  d \leq \overline \beta(\delta).\]
Using  that $G$ is increasing  (Proposition \ref{propo2}) and  the equivalence  (\ref{qu:eq0}), we obtain:
\[ G(\underline \beta(\delta)) \leq G(d) = d \leq \overline \beta(\delta), \]
so $\delta \in E$ and by definition of $\nabla'$, we have  
$\nabla' \leq  \delta = \Vert \beta - d \Vert$. 
\end{enumerate}
\end{proof}
\noindent (Theorem \ref{th1}) states that $\inf_{d \in \mathcal{D}} \Vert \beta - d \Vert = \nabla = \inf E$. In fact, in the following sections, we will see that, whatever the residual implicator used among the Gödel implication, the Goguen implication and the Łukasiewicz implication,  we have: 
$$ \nabla = \min_{d \in \mathcal{D}} \Vert \beta - d \Vert   \Longleftrightarrow \nabla = \min E.$$
\noindent 
From the above lemma and theorem, we deduce    the following useful decomposition of   the Chebyshev distance $\nabla$:  
\begin{proposition}\label{prop:nablafunc}
    \begin{equation}
        \nabla = \max_{1\leq j \leq m} \nabla_j
    \end{equation}
\noindent where:
\begin{equation}\label{eq:gbotbtop}
 \nabla_j = \inf\{\delta\in [0 , 1]\,\mid \,G(\underline{\beta}(\delta))_j \leq \overline{\beta}(\delta)_j\} = \inf E_j. 
\end{equation}
\end{proposition}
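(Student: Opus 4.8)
The plan is to combine the global characterization $\nabla = \inf E$ obtained in (Theorem \ref{th1}) with the two structural facts about the component sets recorded in (Lemma \ref{lemdec}): that $E = \bigcap_{1 \leq j \leq m} E_j$, and that each $E_j$ is upward closed in $[0,1]$ (i.e., $\delta \in E_j$ and $\delta \leq \delta'$ force $\delta' \in E_j$). Once these are in hand, the whole statement reduces to the elementary order-theoretic fact that, for finitely many upward-closed subsets of $[0,1]$, the infimum of their intersection equals the maximum of their individual infima. Writing $M = \max_{1 \leq j \leq m}\nabla_j$, I would establish $\inf E = M$ by a double inequality and then invoke (Theorem \ref{th1}) to conclude $\nabla = M$.

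For the inequality $\inf E \geq M$, I would note that any $\delta \in E = \bigcap_{1 \leq j \leq m} E_j$ lies in every $E_j$, hence satisfies $\delta \geq \inf E_j = \nabla_j$ for all $j$, and therefore $\delta \geq M$; taking the infimum over $\delta \in E$ gives $\inf E \geq M$. (Here $E$ is nonempty: for $\delta = 1$ the vector $\underline{\beta}(1)$ is the zero vector and $\overline{\beta}(1)$ the all-ones vector, so $G(\underline{\beta}(1))_j \leq \overline{\beta}(1)_j$ trivially for each $j$, whence $1 \in E_j$ for all $j$ and $1 \in E$.)

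For the reverse inequality $\inf E \leq M$, I would fix an arbitrary $\delta$ with $M < \delta \leq 1$ and show $\delta \in E$. For each $j$, since $\delta > M \geq \nabla_j = \inf E_j$, the definition of the infimum yields some $\delta'' \in E_j$ with $\delta'' \leq \delta$; the upward closure of $E_j$ (the second item of (Lemma \ref{lemdec})) then promotes this to $\delta \in E_j$. As this holds for every $j$, we obtain $\delta \in \bigcap_{1 \leq j \leq m} E_j = E$. Thus $E \supseteq (M,1]$, so $\inf E \leq M$; in the degenerate case $M = 1$ one simply uses $\inf E \leq 1 = M$. Combining the two inequalities with (Theorem \ref{th1}) gives $\nabla = \inf E = M = \max_{1 \leq j \leq m}\nabla_j$.

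The only delicate point is this second inequality: the passage from ``$\delta$ exceeds each component infimum $\nabla_j$'' to ``$\delta$ actually belongs to each $E_j$'' is precisely where the upward closure of the $E_j$ is indispensable — without it, the strict inequality $\delta > \inf E_j$ would not by itself guarantee membership in $E_j$ (the infimum need not be attained). Everything else is routine bookkeeping with infima over the finite index set $\{1,\dots,m\}$.
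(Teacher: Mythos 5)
Your proof is correct and follows essentially the same route as the paper: both directions rest on Theorem~\ref{th1} combined with the two parts of Lemma~\ref{lemdec} (the decomposition $E = \bigcap_{1\leq j \leq m} E_j$ and the upward closure of each $E_j$). The only cosmetic difference is in the second inequality, where you show directly that the interval $(M,1]$ is contained in $E$, while the paper runs the same upward-closure mechanism through an $\varepsilon$-approximation of each $\nabla_j$; the underlying argument is identical.
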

\begin{proof}
We will prove the equality 
$\nabla = \max_{1\leq j \leq m} \nabla_j$ in the following two steps:
\begin{enumerate}
    \item $\max_{1\leq j \leq m} \nabla_j \leq \nabla$ ; we have to show: 
\[ \forall \delta \in E,\,
\max_{1\leq j \leq m} \nabla_j \leq \delta. \]
Using (Lemma \ref{lemdec}) the equality $E = \bigcap_{1 \leq j \leq m} E_j$, we deduce that for any $\delta\in E$, we have:
\[   \forall j \in\{1 , 2 , \dots , m\}\,\, \nabla_j = \inf E_j \leq \delta. \]
Thus, $\max_{1\leq j \leq m} \nabla_j \leq \delta.$
\item $\nabla \leq \max_{1\leq j \leq m} \nabla_j$  

\noindent
To rely on  the properties of a lower bound, we use the well-known equivalence:
$$\nabla \leq \max_{1\leq j \leq m} \nabla_j
\,\Longleftrightarrow \,
\forall \varepsilon > 0 \,,\, 
\nabla <  \max_{1\leq j \leq m} \nabla_j + \varepsilon. $$
Let $\varepsilon >  0$. For any 
$j\in \{1  , 2 , \dots , m\}$, there is an element $\delta_j \in E_j$ such that 
\[  \delta_j < \nabla_j + \varepsilon.\]
We put $\delta = \max_{1 \leq j \leq m} \delta_j$. Then, for any 
$j\in\{1 , 2 , \dots , m\}$, we get:
\[ \delta_j \in E_j \quad \text{and} \quad \delta_j \leq \delta. \]
Using  (Lemma \ref{lemdec}), we conclude that $\delta \in E$ and: \[  \nabla \leq \delta= \max_{1 \leq j \leq m} \delta_j <\max_{1\leq j \leq m} (\nabla_j + \varepsilon) = \max_{1 \leq j \leq m} \nabla_j + \varepsilon. \] 
\end{enumerate}
\end{proof}

As we will see in the next sections, the formula for computing $\nabla_j$, involves the value $1 - \beta_j$ (thus using the $j$-th component $\beta_j$ of the second member $\beta$), whatever the residual implicator used among the Gödel implication, the Goguen implication and the Łukasiewicz implication. 

\begin{lemma}\label{lemma:nablajvsunmoinsbeta}
For any 
$j\in \{1  , 2 , \dots , m\}$, we have:
 \begin{enumerate}
\item $1 - \beta_j \in E_j$, 
\item $\nabla_j \leq 1 - \beta_j$. 

\end{enumerate}
In particular, we have:
$$ \beta_j = 1 \Longrightarrow \nabla_j = 0.$$
\end{lemma}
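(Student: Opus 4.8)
\textbf{The plan.} The whole statement reduces to one observation: the second member $\overline{\beta}(\delta)_j$ saturates at $1$ precisely when $\delta = 1-\beta_j$, and since $G$ takes its values in $[0,1]^{m\times 1}$, the defining inequality of $E_j$ then becomes automatic. So I would prove the first item directly by exhibiting $\delta := 1-\beta_j$ as an explicit element of $E_j$, and then read off the other two assertions as immediate corollaries.

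\textbf{Item 1.} First I would note that $\beta_j \in [0,1]$ gives $\delta := 1-\beta_j \in [0,1]$, so $\delta$ is an admissible candidate. Next I would compute the relevant component of $\overline{\beta}(\delta)$: by the definition in (\ref{def:bstarhautbas}),
\[
\overline{\beta}(\delta)_j = \min(\beta_j + \delta, 1) = \min(\beta_j + (1-\beta_j), 1) = \min(1,1) = 1.
\]
Since $G$ maps $[0,1]^{m\times 1}$ into $[0,1]^{m\times 1}$, its $j$-th component satisfies $G(\underline{\beta}(\delta))_j \leq 1 = \overline{\beta}(\delta)_j$. By the definition of $E_j$ in (\ref{eq:setEj}), this is exactly the condition $\delta \in E_j$, so $1-\beta_j \in E_j$.

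\textbf{Items 2 and the special case.} Item 2 is then immediate from the definition $\nabla_j = \inf E_j$: since $1-\beta_j$ is a member of $E_j$, the infimum cannot exceed it, giving $\nabla_j \leq 1-\beta_j$. For the final assertion, suppose $\beta_j = 1$; then Item 2 yields $\nabla_j \leq 1 - \beta_j = 0$. On the other hand $E_j \subseteq [0,1]$ forces $\nabla_j = \inf E_j \geq 0$, so the two bounds combine to give $\nabla_j = 0$.

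\textbf{On the difficulty.} There is no real obstacle here; the argument is short and hinges entirely on the fact that clamping $\beta_j + \delta$ at $1$ makes the $j$-th comparison in the definition of $E_j$ trivially satisfied, because the range of $G$ is contained in $[0,1]$. The only points requiring a word of care are checking that $\delta = 1-\beta_j$ lies in $[0,1]$ (so that it is a legitimate candidate for membership in $E_j$) and recording the lower bound $\nabla_j \geq 0$ needed to pin down the special case to an equality rather than merely an inequality.
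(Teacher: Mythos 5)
Your proof is correct and follows exactly the paper's own argument: choosing $\delta = 1-\beta_j$ forces $\overline{\beta}(\delta)_j = 1$, which makes the defining inequality of $E_j$ automatic since $G$ takes values in $[0,1]^{m\times 1}$, and the remaining claims follow from $\nabla_j = \inf E_j$. The extra care you take (checking $\delta \in [0,1]$ and the lower bound $\nabla_j \geq 0$) is sound but adds nothing beyond what the paper leaves implicit.
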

\begin{proof}
For $\delta = 1 - \beta_j$, we have $\overline{\beta}(\delta)_j = 1 
$, so the inequality 
$G(\underline{\beta}(\delta))_j \leq \overline{\beta}(\delta)_j = 1$ is satisfied and therefore we have $1 - \beta_j \in E_j$.

\noindent
As we proved $1 - \beta_j \in E_j$, the inequality $\nabla_j \leq 1 - \beta_j$ is a consequence of the definition $\nabla_j = \inf E_j$. 
\end{proof}

In the following sections, for each of the three systems of $\min-\rightarrow$ fuzzy relational equations  (depending on whether $\rightarrow$ is Gödel's implication, Goguen's implication or Łukasiewicz's implication), we give an explicit formula for computing the Chebyshev distance associated to the second member of the system.

\section{\texorpdfstring{Chebyshev distance associated to the second member of systems based on $\min-\rightarrow_G$ composition}{Chebyshev distance associated to the second member of a system of min→gödel fuzzy relational equations}}
\label{sec:mingodel} 

In this section, our purpose is to give an explicit formula (Theorem \ref{theorem:chebming}) for computing the Chebyshev distance $\nabla$,  see (Definition \ref{def:chebyshevdistMinfleche}), associated to the second member of  a system of $\min-\rightarrow_G$ fuzzy relational equations $(\Sigma) : \Gamma  \Box_{\rightarrow_{G}}^{\min} x =  \beta$, see (\ref{eq:minsys}), where $\rightarrow_G$ is the Gödel implication, see (\ref{eq:tnormmin}). The main preliminary results ((Proposition \ref{proposition:minusunmoinsbetamaxthetazeta}), (Corollary \ref{corollary:c1}) and (Proposition \ref{enplus1})) needed for establishing  the formula  (Theorem \ref{theorem:chebming}) follow from the key result stated in  (Lemma \ref{lemma:equivmin}).

\noindent We remind from (Proposition \ref{prop:nablafunc}) that $\nabla = \max_{1\leq j \leq m} \nabla_j$ where $\nabla_j = \inf E_j$ and the set $E_j$ is defined in (\ref{eq:setEj}).   As illustrated at the end of the section by (Example \ref{ex:5counter}),  the set $E_j$ does not necessarily admit a minimum element, i.e. $\nabla_j$ does not necessarily belong to $E_j$. In (Proposition \ref{prop:minGsmallestelem}), we characterize when $E_j$ admits a minimum element i.e., $\nabla_j \in E_j$ and deduce a sufficient condition for $\nabla$ being a minimum element of $E$ (Corollary \ref{cor:nablainE}) i.e., $\nabla \in E$. \\ Therefore, as we will see in (Example \ref{ex:5counter}), in the case of a $\min-\rightarrow_{G}$ system, the Chebyshev distance $\nabla$ may be an infimum.  For $\min-\rightarrow_{G}$ systems whose $\nabla$ has been verified as a minimum, we can always compute the lowest Chebyshev approximation of the second member and an approximate solution (Corollary \ref{cor:chebnablainE}). However, for the $\min-\rightarrow_{G}$ systems whose Chebyshev distance has been verified as not being a minimum, the set of Chebyshev approximations of its second member is empty (Lemma \ref{lemma:minGchebempty}).

We use the following notations:

\begin{notation}
For $1 \leq j \leq m, 1 \leq i \leq n $, to each coefficient $\gamma_{ji}$ of the matrix $\Gamma$ of the system $(\Sigma)$, see (\ref{eq:minsys}), we associate:
\begin{itemize}
    \item $V(j,i) = \big\{ l \in \{1,2,\dots,m\} \mid  \gamma_{ji} \leq \gamma_{li} \big\}$,
    \item $\theta_{ji} = \max_{l\in V(j , i)} (\beta_l -\gamma_{ji})$,
    \item $\zeta_{ji} = \max_{1 \leq l \leq m}\sigma_G(\beta_l , \gamma_{li} , \beta_j)$.
\end{itemize}
\noindent where $\sigma_G(x,y,z) = \min(\dfrac{(x - z)^+}{2} , (y - z)^+)$ was defined in (\ref{eq:sigmaG}) from \cite{baaj2023maxmin}. 
\end{notation}

\noindent To give the formula for computing $\nabla_j$ for $j \in \{1,2,\dots,m\}$, see (\ref{eq:gbotbtop}), we will study the solving of the involved inequality $G(\underline{\beta}(\delta))_j \leq \overline{\beta}(\delta)_j$.

\noindent The following result gives an equivalent definition of the function $\sigma_G$ introduced in \cite{baaj2023maxmin}, see (\ref{eq:sigmaG}),   which is adapted to our needs:

\begin{lemma}\label{lemma:godel:solvineq}
For all $x , y , z , \delta\in [0 , 1]$, we have:
$$\min(y , \underline{x}(\delta)) \leq \overline{z}(\delta) \,
\Longleftrightarrow \,
\sigma_G(x , y   , z)\leq \delta.$$
\end{lemma}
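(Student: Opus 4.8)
The plan is to establish the equivalence by splitting the outer minimum on the left-hand side, simplifying each resulting inequality, and then recombining into the minimum that defines $\sigma_G$. Throughout I would use the elementary fact that, for real numbers, $\min(a,b) \leq c$ holds if and only if $a \leq c$ or $b \leq c$. Applying this to the left-hand inequality, $\min(y , \underline{x}(\delta)) \leq \overline{z}(\delta)$ holds if and only if $y \leq \overline{z}(\delta)$ or $\underline{x}(\delta) \leq \overline{z}(\delta)$, so the problem reduces to analysing these two disjuncts.

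Next I would simplify each disjunct, exploiting that all quantities lie in $[0,1]$ so that the truncation at $1$ in $\overline{z}(\delta) = \min(z+\delta,1)$ and the positive parts are inactive in the relevant direction. For the first disjunct, since $y \leq 1$ the inequality $y \leq \min(z+\delta,1)$ is equivalent to $y \leq z+\delta$, which in turn is exactly $(y-z)^+ \leq \delta$ (using $\delta \geq 0$). For the second disjunct, since $\underline{x}(\delta) = (x-\delta)^+ \leq x \leq 1$, the inequality $\underline{x}(\delta) \leq \min(z+\delta,1)$ is equivalent to $(x-\delta)^+ \leq z+\delta$.

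It then remains to identify $(x-\delta)^+ \leq z+\delta$ with $\frac{(x-z)^+}{2} \leq \delta$, and to recombine. For the identification, when $x > \delta$ the inequality reads $x-\delta \leq z+\delta$, i.e. $x-z \leq 2\delta$, i.e. $\frac{(x-z)^+}{2} \leq \delta$; when $x \leq \delta$ the left side holds trivially, and because $x-z \leq \delta \leq 2\delta$ the right side holds as well, so the two statements coincide in both cases. Finally, reading the min-decomposition fact in reverse, the disjunction ``$(y-z)^+ \leq \delta$ or $\frac{(x-z)^+}{2} \leq \delta$'' is equivalent to $\min(\frac{(x-z)^+}{2} , (y-z)^+) \leq \delta$, which is precisely $\sigma_G(x,y,z) \leq \delta$ by the definition (\ref{eq:sigmaG}). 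Chaining these equivalences yields the claim.

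The argument is entirely elementary, and the only genuine reasoning (as opposed to mechanical simplification) is the $x \leq \delta$ sub-case in the paragraph above, where one must check separately that both sides of the target equivalence are true; this is the step I expect to be the main obstacle, though a minor one. The remaining care is purely bookkeeping: tracking the two truncations (the cap at $1$ in $\overline{z}(\delta)$ and the positive parts in $\underline{x}(\delta)$ and in $\sigma_G$) and verifying that each becomes inactive exactly where needed, which is guaranteed by the hypothesis $x,y,z,\delta \in [0,1]$.
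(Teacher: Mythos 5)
Your proof is correct, and it follows essentially the same route as the paper's: both decompose $\min(y,\underline{x}(\delta)) \leq \overline{z}(\delta)$ into the disjunction of two scalar inequalities, simplify each one by observing that the truncations (the cap at $1$ in $\overline{z}(\delta)$, the positive parts, using $\delta \geq 0$) are inactive where needed, and recombine the disjunction into $\sigma_G(x,y,z) \leq \delta$. The only difference is mechanical — the paper simplifies by expanding the differences $y - \overline{z}(\delta)$ and $\underline{x}(\delta) - \overline{z}(\delta)$ as nested maxima and discarding the always-nonpositive terms, whereas you argue directly on the inequalities with a short case split on $x \lessgtr \delta$ — which does not change the substance of the argument.
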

\begin{proof}
We have:
$$y - \overline{z}(\delta) = 
\max(y - z - \delta , y - 1).$$
\begin{align}
\underline{x}(\delta) - \overline{z}(\delta)
&= \max(\underline{x}(\delta) - z - \delta , \underline{x}(\delta) -1)\nonumber\\
& = \max[\max(x - \delta , 0) - z - \delta , \underline{x}(\delta) -1]\nonumber\\
& = \max[\max(x - \delta - z - \delta, -z - \delta)   , \underline{x}(\delta) -1]\nonumber\\
& = \max(x -   z - 2\delta, -z - \delta   , \underline{x}(\delta) -1).\nonumber\\
\end{align}
We remark that: $y - 1 \leq 0,\, -z - \delta \leq 0, \text{ and } \underline{x}(\delta) -1 \leq 0$.

We have:
\begin{align}
\min(y , \underline{x}(\delta)) \leq \overline{z}(\delta)
&  \Longleftrightarrow\,
\min(y - \overline{z}(\delta) , \underline{x}(\delta) - \overline{z}(\delta)) \leq 0 \nonumber\\
& \Longleftrightarrow\,
y - \overline{z}(\delta) \leq 0 \,\text{ or }\,
\underline{x}(\delta) - \overline{z}(\delta)) \leq 0
\nonumber\\
& \Longleftrightarrow\,
y - z - \delta \leq 0 \,\text{ or }\,
x - z - 2\delta \leq 0
\nonumber\\
& \Longleftrightarrow\,
(y - z)^+ \leq \delta   \,\text{ or }\,
\dfrac{(x - z)^+}{2} \leq \delta  
\nonumber\\
& \Longleftrightarrow\,
\min(\dfrac{(x - z)^+}{2} , (y - z)^+)  \leq \delta  
\nonumber\\
& \Longleftrightarrow\,
\sigma_G(x , y , z) = \min(\dfrac{(x - z)^+}{2} , (y - z)^+)  \leq \delta.  
\nonumber\\
\end{align}
\end{proof}
\begin{example}
    Let $x=0.6, y = 0.4$ and $z=0.2$. We have $\min(y,x) > z$. We compute $\delta = \sigma_G(x,y,z) = 0.2$, then the inequality $\min(y , \underline{x}(\delta)) \leq \overline{z}(\delta)$ is satisfied. 
\end{example}

 We establish the following important lemma, which will provide important information in (Proposition \ref{proposition:minusunmoinsbetamaxthetazeta}) on   $\nabla_j$,  in the case where $\nabla_j < 1-\beta_j$:

\begin{lemma}\label{lemma:equivmin}
Let $j\in \{1 , 2 , \dots , m\}$. 
For all $\delta \in [0 , 1]$ such that $\delta  < 1 - \beta_j$, we have the following equivalence between these two statements:
\begin{enumerate}
\item $G(\underline{\beta}(\delta))_j \leq  \overline{\beta}(\delta)_j$,
\item there exists $i\in \{1 , 2 , \dots, n\}$ such that:
$$\gamma_{ji} > 0, \quad  \theta_{ji}   < \delta \quad  \text{ and } \quad   \zeta_{ji}  \leq  \delta.$$ 
\end{enumerate}
\end{lemma}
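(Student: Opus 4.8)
The plan is to specialize the general expression for $G(\xi)_j$ to the Gödel case and then exploit the fact that the Gödel implication takes only the value $1$ or its second argument. With $T=\min$ and ${\cal I}_T=\rightarrow_G$, and writing $\mu_i=\max_{1\le l\le m}\min(\gamma_{li},(\beta_l-\delta)^+)$ for each column index $i$, the formula for $G$ gives $G(\underline{\beta}(\delta))_j=\min_{1\le i\le n}(\gamma_{ji}\rightarrow_G\mu_i)$. The hypothesis $\delta<1-\beta_j$ is exactly what makes the argument work: it guarantees $\overline{\beta}(\delta)_j=\min(\beta_j+\delta,1)=\beta_j+\delta<1$. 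Since $\gamma_{ji}\rightarrow_G\mu_i$ equals $1$ when $\gamma_{ji}\le\mu_i$ and equals $\mu_i$ when $\gamma_{ji}>\mu_i$, and since a minimum over the finite index set is $\le\overline{\beta}(\delta)_j$ iff at least one factor is, a factor equal to $1$ can never realize the bound (as $1>\beta_j+\delta$). Thus the first step is to show that statement~1 is equivalent to the existence of an index $i$ with $\gamma_{ji}>\mu_i$ and $\mu_i\le\beta_j+\delta$.

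Next I would establish two termwise equivalences at a fixed $i$. For the first, I claim that $\gamma_{ji}>\mu_i$ is equivalent to the conjunction $\gamma_{ji}>0$ and $\theta_{ji}<\delta$. Indeed $\gamma_{ji}>\mu_i$ means $\min(\gamma_{li},(\beta_l-\delta)^+)<\gamma_{ji}$ for every $l$; for $l\notin V(j,i)$ one has $\gamma_{li}<\gamma_{ji}$, so the inequality holds automatically, while for $l\in V(j,i)$ (where $\gamma_{ji}\le\gamma_{li}$) it reduces to $(\beta_l-\delta)^+<\gamma_{ji}$. Because $\mu_i\ge 0$, the strict inequality $\gamma_{ji}>\mu_i$ already forces $\gamma_{ji}>0$; and under $\gamma_{ji}>0$ one checks that $(\beta_l-\delta)^+<\gamma_{ji}\Leftrightarrow\beta_l-\gamma_{ji}<\delta$. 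Taking the maximum over $l\in V(j,i)$ converts ``$\beta_l-\gamma_{ji}<\delta$ for all such $l$'' into $\theta_{ji}<\delta$, giving the claimed equivalence.

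For the second equivalence I would rewrite $\mu_i\le\beta_j+\delta$ termwise as $\min(\gamma_{li},\underline{\beta}(\delta)_l)\le\overline{\beta}(\delta)_j$ for every $l$ (using $\overline{\beta}(\delta)_j=\beta_j+\delta$), and then apply (Lemma \ref{lemma:godel:solvineq}) with $x=\beta_l$, $y=\gamma_{li}$, $z=\beta_j$, which turns each such scalar inequality into $\sigma_G(\beta_l,\gamma_{li},\beta_j)\le\delta$. Maximizing over $l$ gives exactly $\zeta_{ji}\le\delta$. Combining the two equivalences under the same existential quantifier on $i$ identifies statement~1 with statement~2, completing the proof.

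The main obstacle is the reduction carried out in the first step: it is where the hypothesis $\delta<1-\beta_j$ is genuinely used (so that $\overline{\beta}(\delta)_j<1$ and the Gödel value $1$ may be discarded), and it is the step that forces the existential ``there exists $i$'' rather than a universal statement over all columns. The remaining work is purely bookkeeping — tracking strict versus non-strict inequalities and observing that $\gamma_{ji}>\mu_i$ automatically yields the positivity $\gamma_{ji}>0$ recorded in statement~2.
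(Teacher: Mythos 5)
Your proof is correct and follows essentially the same route as the paper's: both specialize $G$ to the Gödel case, use the hypothesis $\delta < 1-\beta_j$ to discard the implication values equal to $1$, invoke (Lemma \ref{lemma:godel:solvineq}) with $x=\beta_l$, $y=\gamma_{li}$, $z=\beta_j$ to convert the condition $\mu_i \leq \beta_j + \delta$ into $\zeta_{ji}\leq\delta$, and handle $\theta_{ji}$ through the same case split $l\in V(j,i)$ versus $l\notin V(j,i)$. The only difference is organizational — you package the argument as two termwise biconditionals conjoined under the existential quantifier, whereas the paper proves the two implications separately (its converse direction additionally routes through the identity $x \rightarrow_G \max(y,z) = \max(x \rightarrow_G y, x \rightarrow_G z)$) — but the underlying computations coincide.
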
  
\begin{proof}\mbox{}\\
$\Longrightarrow$: \\
We have: 
$$G(\underline{\beta}(\delta))_j = \min_{1 \leq i \leq n}(\gamma_{ji}  \rightarrow _G (\max_{1 \leq l \leq m}\min(\gamma_{li} , \underline{\beta}(\delta)_l)))
\leq \overline{\beta}(\delta)_j = \beta_j + \delta < 1.$$
We deduce that there exists an index $i\in \{1 , 2 , \dots  n\}$ such that: 
$$\gamma_{ji}  \rightarrow _G (\max_{1 \leq l \leq m}\min(\gamma_{li} , \underline{\beta}(\delta)_l)) =
G(\underline{\beta}(\delta))_j   \leq \overline{\beta}(\delta)_j = \beta_j + \delta < 1$$
which implies 
$$\gamma_{ji} > \max_{1 \leq l \leq m}\min(\gamma_{li} , \underline{\beta}(\delta)_l)\quad \text{ and } \quad \max_{1 \leq l \leq m}\min(\gamma_{li} , \underline{\beta}(\delta)_l) \leq  \beta_j + \delta $$
so 
\begin{equation}\label{eq:soproofLemm4}
    \gamma_{ji} > 0\text{ and }\forall l\in \{1 , 2 , \dots , m\}\,,\, \gamma_{ji} >  \min(\gamma_{li} , \underline{\beta}(\delta)_l)\text{ and }\min(\gamma_{li} , \underline{\beta}(\delta)_l) \leq  \beta_j + \delta.
\end{equation}

\noindent By (Lemma \ref{lemma:godel:solvineq}), we know that for all $l\in\{1 
 , 2 , \dots ,m\}$ , we have: 
 $$\min(\gamma_{li} , \underline{\beta}(\delta)_l) \leq  \beta_j + \delta = \overline{\beta}(\delta)_j
 \Longleftrightarrow
 \sigma_G(\beta_l , \gamma_{li} , \beta_j) \leq \delta$$
so (\ref{eq:soproofLemm4}) implies $
    \zeta_{ji} = \max_{1 \leq l \leq m} \sigma_G(\beta_l , \gamma_{li} , \beta_j) \leq \delta.$

\noindent To establish the inequality   $\theta_{ji} < \delta$, we will show that:
$$\forall l\in V(j , i) \, , \,   \beta_l -   \gamma_{ji} <  \delta.$$
Let  $l\in V(j , i)$ i.e.,  $\gamma_{ji} \leq \gamma_{li}$ and by (\ref{eq:soproofLemm4}) we have: 
$\gamma_{ji} >  \min(\gamma_{li} , \underline{\beta}(\delta)_l)$, so   
$$\min(\gamma_{li} , \underline{\beta}(\delta)_l) = \underline{\beta}(\delta)_l  < \gamma_{ji}.
$$
Then, we have:
$$\underline{\beta}(\delta)_l   -  \gamma_{ji} = \max(\beta_l - \delta , 0) -  \gamma_{ji}  = 
\max(\beta_l -   \gamma_{ji}  - \delta  , - \gamma_{ji}) < 0 $$
so $\beta_l -   \gamma_{ji}  < \delta$ and finally 
$\theta_{ji} = \max_{l\in V(j , i)} \beta_l -   \gamma_{ji}  < \delta$.
\noindent

$\Longleftarrow$:\\ Let $i\in \{1 , 2 , \dots, n\}$ such that:
$$ \gamma_{ji} > 0, \quad  \theta_{ji}   < \delta, \quad   \zeta_{ji}  \leq  \delta.$$
We have:
$$G(\underline{\beta}(\delta))_j = \min_{1 \leq i' \leq n}(\gamma_{ji' }  \rightarrow _G (\max_{1 \leq l \leq m}\min(\gamma_{li' } , \underline{\beta}(\delta)_l))
\leq  \gamma_{ji}  \rightarrow _G (\max_{1 \leq l \leq m}\min(\gamma_{li } , \underline{\beta}(\delta)_l)).$$
We will show that we have:
$$\gamma_{ji}  \rightarrow _G (\max_{1 \leq l \leq m}\min(\gamma_{li } , \underline{\beta}(\delta)_l)  \leq \overline{\beta}(\delta)_j.$$
From the general  equality: $x \rightarrow_G \max(y , z) = \max(x \rightarrow_G y , x \rightarrow_G z)$, we deduce:
$$\gamma_{ji}  \rightarrow _G (\max_{1 \leq l \leq m}\min(\gamma_{li} , \underline{\beta}(\delta)_l))  =  \max_{1 \leq l \leq m}  \gamma_{ji}  \rightarrow _G  \min(\gamma_{li} , \underline{\beta}(\delta)_l).$$
We will establish \begin{equation}\label{eq:estblish}
    \forall l \in\{1 , 2 , \dots , m\},\, \gamma_{ji}  \rightarrow _G  \min(\gamma_{li} , \underline{\beta}(\delta)_l) \leq 
\overline{\beta}(\delta)_j   = \beta_j + \delta.
\end{equation}

\noindent
Let $l \in\{1 , 2 , \dots , m\}$. We distinguish the following two cases:

$\bullet\,$ We suppose that $l\in V(j , i)$ i.e., $  \gamma_{ji} \leq \gamma_{li}$. We then have: 
$$\beta_l -  \gamma_{li}   \leq \beta_l  - \gamma_{ji} \leq \theta_{ji} < \delta.$$
We deduce: 
$$\beta_l  - \gamma_{li} - \delta \leq  \beta_l  - \gamma_{ji} - \delta < 0  
\quad  \text{ and } \quad \gamma_{li} \geq \gamma_{ji}  > 0.$$
So:
$$\underline{\beta}(\delta)_l   -    \gamma_{li} = \max(\beta_l  - \gamma_{li} - \delta, -  \gamma_{li})  < 0$$
which leads to
$ \min(\gamma_{li} , \underline{\beta}(\delta)_l)   = \underline{\beta}(\delta)_l$.
On the other hand, $\beta_l  - \gamma_{ji} - \delta  < 0  
\quad  \text{ and } \quad \gamma_{ji} > 0 $  lead to
$$\underline{\beta}(\delta)_l   -    \gamma_{ji}    = 
\max(\beta_l  - \gamma_{ji} - \delta, -  \gamma_{ji})  < 0$$
so
$\underline{\beta}(\delta)_l < \gamma_{ji}$ and we obtain 
$$\gamma_{ji}  \rightarrow _G  \min(\gamma_{li} , \underline{\beta}(\delta)_l)  =  \min(\gamma_{li} , \underline{\beta}(\delta)_l).$$
By taking into account   $\sigma_G(\beta_l, \gamma_{li}, \beta_j) \leq \zeta_{ji} \leq \delta$, we deduce from (Lemma \ref{lemma:godel:solvineq}):  
$$\min(\gamma_{li} , \underline{\beta}(\delta)_l) \leq \overline{\beta}(\delta)_j$$
so 
$$\gamma_{ji}  \rightarrow _G  \min(\gamma_{li} , \underline{\beta}(\delta)_l) \leq \overline{\beta}(\delta)_j = \beta_j + \delta.$$
$\bullet\,$ If we suppose $l\in V(j , i)^c$, i.e, $  \gamma_{ji} >  \gamma_{li} $  and taking into account that  $\gamma_{li}  \geq \min(\gamma_{li},\underline{\beta}(\delta)_l)$, we then have:
$$\gamma_{ji}  \rightarrow _G  \min(\gamma_{li} , \underline{\beta}(\delta)_l) = \min(\gamma_{li} , \underline{\beta}(\delta)_l).$$
As by hypothesis we have $\sigma_G(\beta_l, \gamma_{li}, \beta_j) \leq \zeta_{ji} \leq \delta$, we deduce from (Lemma \ref{lemma:godel:solvineq}):
$$\min(\gamma_{li} , \underline{\beta}(\delta)_l) \leq \overline{\beta}(\delta)_j.$$
So:
$$\gamma_{ji}  \rightarrow _G  \min(\gamma_{li} , \underline{\beta}(\delta)_l) = \min(\gamma_{li} , \underline{\beta}(\delta)_l)  \leq \overline{\beta}(\delta)_j = \beta_j + \delta.$$
We have established (\ref{eq:estblish}) and therefore we have proven $G(\underline{\beta}(\delta))_j \leq  \overline{\beta}(\delta)_j$. 
\end{proof}
We illustrate this result:
\begin{example}\label{ex:illuduresPropositiontaree}
Let us use the matrix $\Gamma = \begin{bmatrix}
    0.6 & 0.49 \\ 
    0.26 & 0.9 \\ 
\end{bmatrix}$ and the vector $\beta = \begin{bmatrix}
    0.1\\
    0.4
\end{bmatrix}$. For $j=1$, we have:

\begin{align*}
    G(\underline{\beta}(\delta))_1 &= \min_{1 \leq i \leq 2}(\gamma_{1i}  \rightarrow _G (\max_{1 \leq l \leq 2}\min(\gamma_{li} , \underline{\beta}(\delta)_l)))\\
&=\min(0.6  \rightarrow _G (\max_{1 \leq l \leq 2}\min(\gamma_{l1}, \underline{\beta}(\delta)_l)), 0.49  \rightarrow _G (\max_{1 \leq l \leq 2}\min(\gamma_{l2}, \underline{\beta}(\delta)_l))) 
\end{align*}
\noindent We compute:
\begin{itemize}
    \item $\max_{1 \leq l \leq 2}\min(\gamma_{l1}, {\beta}_l) = \max(\min(0.6,0.1),\min(0.26,0.4))=0.26$,
    \item $\max_{1 \leq l \leq 2}\min(\gamma_{l2}, {\beta}_l) = \max(\min(0.49,0.1), \min(0.9,0.4))=0.4$
\end{itemize}
\noindent and then: $0.6 \rightarrow_G 0.26=0.26$ and $0.49 \rightarrow_G 0.4=0.4$. So the inequality is not satisfied for $\delta = 0$. 

\noindent For solving the inequality, we rely on: $V(1,1) = \{1\}$ and $V(1,2) = \{ 1,2\}$ and we compute: 
\begin{itemize}
    \item $\theta_{11} = \beta_1 - \gamma_{11} = 0.1 - 0.6 = -0.5$, 
    \item $\theta_{12} = \max(\beta_1 - \gamma_{12}, \beta_2 - \gamma_{12})=\max(0.1-0.49,0.4-0.49)=-0.09$,
    \item $\zeta_{11} = \max(\sigma_G(0.1,0.6,0.1),\sigma_G(0.4,0.26,0.1))= 0.15$, 
    \item $\zeta_{12} = \max(\sigma_G(0.1, 0.49,0.1), \sigma_G(0.4, 0.9,0.1)) = 0.15$.  
\end{itemize}
\noindent We take $\delta = 0.15$ and we have $\gamma_{11} > 0$, $\theta_{11} < \delta$ and $\zeta_{11} = \delta$.

\noindent Then we observe that $ G(\underline{\beta}(\delta))_1 = \min(0.6 \rightarrow_G 0.25, 0.49 \rightarrow_G 0.25) = 0.25 \leq \overline{\beta}(\delta)_1 = 0.25$. So the inequality is solved with $\delta = 0.15$. 
\end{example}

\begin{proposition}\label{proposition:minusunmoinsbetamaxthetazeta}
Let $j\in \{1 , 2 , \dots , m\}$. Suppose that $\nabla_j < 1 - \beta_j$. Then, there exists $i^0\in\{1 , 2 , \dots , n\}$ such that:  
$$\gamma_{ji^0  } > 0 \quad \text{ and } \quad \nabla_j \geq \max(\theta_{ji^0 }, \zeta_{ji^0 }).$$
  \end{proposition}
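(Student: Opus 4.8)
The plan is to exploit the finiteness of the index set $\{1,2,\dots,n\}$ together with the characterization of membership in $E_j$ supplied by (Lemma \ref{lemma:equivmin}). Since the infimum $\nabla_j = \inf E_j$ need not be attained, I cannot simply apply (Lemma \ref{lemma:equivmin}) at $\delta = \nabla_j$; instead I would approach $\nabla_j$ through a sequence of admissible values and then use a pigeonhole argument to isolate one index $i^0$ that works uniformly along the sequence.

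First I would use that $\nabla_j = \inf E_j$ and that, by hypothesis, $\nabla_j < 1 - \beta_j$. Fix an integer $k_0$ large enough that $\nabla_j + \tfrac{1}{k_0} < 1 - \beta_j$, and for every $k \geq k_0$ choose, by definition of the infimum, an element $\delta_k \in E_j$ with $\nabla_j \leq \delta_k < \nabla_j + \tfrac{1}{k}$. By construction each such $\delta_k$ satisfies $\delta_k < 1 - \beta_j$, so (Lemma \ref{lemma:equivmin}) is applicable at $\delta = \delta_k$: from $\delta_k \in E_j$, i.e. $G(\underline{\beta}(\delta_k))_j \leq \overline{\beta}(\delta_k)_j$, it produces an index $i_k \in \{1,\dots,n\}$ with $\gamma_{ji_k} > 0$, $\theta_{ji_k} < \delta_k$ and $\zeta_{ji_k} \leq \delta_k$.

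Next I would invoke the pigeonhole principle: since the index set $\{1,\dots,n\}$ is finite while $(i_k)_{k \geq k_0}$ is an infinite sequence in it, some value $i^0$ is attained for infinitely many $k$. Extracting the corresponding subsequence $(\delta_{k_p})_p$, I then have $\gamma_{ji^0} > 0$, $\theta_{ji^0} < \delta_{k_p}$ and $\zeta_{ji^0} \leq \delta_{k_p}$ for every $p$. Because $\nabla_j \leq \delta_{k_p} < \nabla_j + \tfrac{1}{k_p}$, the squeeze gives $\delta_{k_p} \to \nabla_j$ as $p \to \infty$; passing to the limit in the two inequalities yields $\theta_{ji^0} \leq \nabla_j$ and $\zeta_{ji^0} \leq \nabla_j$, hence $\nabla_j \geq \max(\theta_{ji^0}, \zeta_{ji^0})$ with $\gamma_{ji^0} > 0$, which is exactly the assertion.

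The delicate point is the passage to the limit: the strict inequality $\theta_{ji^0} < \delta_{k_p}$ only survives as the non-strict $\theta_{ji^0} \leq \nabla_j$, but since the statement asks only for $\nabla_j \geq \max(\theta_{ji^0}, \zeta_{ji^0})$, no strictness is actually needed. The genuinely essential ingredient is the finiteness of $\{1,\dots,n\}$: it is what allows a single index $i^0$ to be selected that remains valid along an entire sequence shrinking to $\nabla_j$, and it is precisely this device that accommodates the case where $\nabla_j$ is an infimum that is not a minimum.
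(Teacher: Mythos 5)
Your proof is correct and follows essentially the same route as the paper's own argument: a sequence $\delta_k \in E_j$ converging down to $\nabla_j$ with $\delta_k < 1-\beta_j$, an application of (Lemma \ref{lemma:equivmin}) to each $\delta_k$, extraction of a stationary (constant) subsequence of indices by finiteness of $\{1,\dots,n\}$, and passage to the limit to get $\theta_{ji^0} \leq \nabla_j$ and $\zeta_{ji^0} \leq \nabla_j$. The only differences are cosmetic (your explicit $1/k$ bounds and the name ``pigeonhole'' versus the paper's citation of Rudin's Theorem 3.6).
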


\begin{proof}
The inequality $\nabla_j < 1 - \beta_j$ and the properties of the  lower bound $\nabla_j = \inf E_j$ allow us to find a sequence $(\delta_k)$ in $[0 , 1]$ such that:
$$\forall  k \,,\,\delta_k \in E_j\, \, \text{i.e}\, \,G(\underline{\beta}(\delta_k))_j \leq \overline{\beta}(\delta_k)_j,  \quad (\delta_k) \rightarrow 
 \nabla_j,  \quad  \forall  k \,,\, \nabla_j \leq \delta_k < 1 - \beta_j.$$
By applying (Lemma \ref{lemma:equivmin}) to each  $\delta_k$, we get  an integer $i_k \in \{1 , 2 , \dots , n\}$ such that: 
$$\gamma_{ji_k  } > 0, \quad  \theta_{ji_k  }   < \delta_k, \quad   \zeta_{ji_k  } \leq  \delta_k.$$
The sequence of integers $k \mapsto i_k$ takes its values in the finite set $\{1 , 2 , \dots , n\}$, so by (Theorem 3.6) of \cite{rudin1976}, it admits a subsequence  $k \mapsto i_{\alpha(k)}$, which is {\it stationary} (see the proof of (Theorem 3.6) of \cite{rudin1976}). Therefore, we have an integer $i^0\in  \{1 , 2 , \dots , n\}$ such that:
$$\forall k \,, \, i_{\alpha(k)} = i^0.$$
Then, we deduce:
$$\gamma_{ji^0  } > 0, \quad \forall k \,, \,       \theta_{ji^0  }   < \delta_{\alpha(k)}, \quad  \zeta_{ji^0  } \leq  \delta_{\alpha(k)}.$$
By passage to the limit when $k \rightarrow \infty$, we obtain:
$$  \theta_{ji^0  }   \leq   \nabla_j, \quad   \zeta_{ji^0 }  \leq   \nabla_j.$$
 \end{proof}
 As a direct consequence, we have:
\begin{corollary}\label{corollary:c1}
Let $j\in \{1 , 2 , \dots , m\}$. Suppose that $\nabla_j < 1 - \beta_j$. Then, the set ${\cal A}_j$, see (\ref{eq:setAj}), is non empty.
\end{corollary}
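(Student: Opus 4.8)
The plan is to read off the conclusion directly from (Proposition \ref{proposition:minusunmoinsbetamaxthetazeta}), since (Corollary \ref{corollary:c1}) is stated precisely as its immediate consequence. The hypothesis of the corollary, namely $\nabla_j < 1 - \beta_j$, is identical to the hypothesis of the proposition, so I would simply invoke the proposition to produce an index witnessing membership in ${\cal A}_j$.

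Concretely, first I would apply (Proposition \ref{proposition:minusunmoinsbetamaxthetazeta}) with the given $j$, which under the assumption $\nabla_j < 1 - \beta_j$ yields an integer $i^0 \in \{1, 2, \dots, n\}$ satisfying $\gamma_{ji^0} > 0$ together with the bound $\nabla_j \geq \max(\theta_{ji^0}, \zeta_{ji^0})$. For the present statement only the first of these two conclusions is needed: the strict positivity $\gamma_{ji^0} > 0$.

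Next I would recall the definition of ${\cal A}_j$ given in (\ref{eq:setAj}), that is, ${\cal A}_j = \{i \in \{1, 2, \dots, n\} \mid \gamma_{ji} > 0\}$. Since the index $i^0$ produced above satisfies exactly the defining condition $\gamma_{ji^0} > 0$, it follows that $i^0 \in {\cal A}_j$. Hence ${\cal A}_j$ contains at least one element and is therefore non-empty, which is the desired conclusion.

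I do not expect any genuine obstacle here: all the substantive work, including the subsequence/stationarity argument via (Theorem 3.6) of \cite{rudin1976} and the passage to the limit, has already been carried out in the proof of (Proposition \ref{proposition:minusunmoinsbetamaxthetazeta}). The corollary merely extracts and reinterprets the positivity clause $\gamma_{ji^0} > 0$ in terms of the set ${\cal A}_j$, so the proof is a one-line deduction and the only care required is to cite the correct proposition and definition.
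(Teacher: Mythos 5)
Your proof is correct and matches the paper exactly: the paper states (Corollary \ref{corollary:c1}) as a direct consequence of (Proposition \ref{proposition:minusunmoinsbetamaxthetazeta}), which is precisely your deduction that the index $i^0$ with $\gamma_{ji^0} > 0$ witnesses that ${\cal A}_j$ is non-empty.
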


From  (Proposition \ref{proposition:minusunmoinsbetamaxthetazeta})  and (Corollary \ref{corollary:c1}), we establish a formula for computing $\nabla_j$ if $\nabla_j < 1 - \beta_j$:

 \begin{proposition}\label{enplus1} Suppose that $\nabla_j < 1 - \beta_j$, then we have:
$$\nabla_j = \min_{i\in {\cal A}_j}\, 
\max(\theta_{ji} , \zeta_{ji}).$$
\end{proposition}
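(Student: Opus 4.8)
The plan is to set $M = \min_{i\in {\cal A}_j}\max(\theta_{ji},\zeta_{ji})$, a quantity that is well defined because ${\cal A}_j \subseteq \{1,\dots,n\}$ is finite and, by (Corollary \ref{corollary:c1}), non-empty under the standing hypothesis $\nabla_j < 1-\beta_j$. I would then establish the two inequalities $\nabla_j \geq M$ and $\nabla_j \leq M$ separately.

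First I would read off $\nabla_j \geq M$ directly from (Proposition \ref{proposition:minusunmoinsbetamaxthetazeta}): that proposition furnishes an index $i^0$ with $\gamma_{ji^0} > 0$, hence $i^0 \in {\cal A}_j$, and $\nabla_j \geq \max(\theta_{ji^0},\zeta_{ji^0})$. Since $i^0$ is one of the competitors in the minimum defining $M$, we have $\max(\theta_{ji^0},\zeta_{ji^0}) \geq M$, and the first inequality follows immediately.

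For the reverse inequality $\nabla_j \leq M$, I would let $i^\ast \in {\cal A}_j$ be an index achieving the minimum, so that $\gamma_{ji^\ast} > 0$, $\theta_{ji^\ast} \leq M$ and $\zeta_{ji^\ast} \leq M$. Note first that $M \geq 0$, because $\sigma_G \geq 0$ forces $\zeta_{ji^\ast} \geq 0$, and that $M \leq \nabla_j < 1-\beta_j$ by the inequality just proved together with the hypothesis; hence the interval $(M,\, 1-\beta_j)$ is a non-empty subset of $[0,1]$. For any $\delta$ in this interval we have $\theta_{ji^\ast} \leq M < \delta$ and $\zeta_{ji^\ast} \leq M < \delta$, so the three conditions of the second statement of (Lemma \ref{lemma:equivmin}) are satisfied for the index $i^\ast$; since moreover $\delta < 1-\beta_j$, that lemma yields $G(\underline{\beta}(\delta))_j \leq \overline{\beta}(\delta)_j$, i.e. $\delta \in E_j$. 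Thus $(M,\,1-\beta_j) \subseteq E_j$, and taking the infimum over this interval gives $\nabla_j = \inf E_j \leq M$, which closes the argument.

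The only genuinely delicate point is the interplay between the strict inequality $\theta_{ji} < \delta$ demanded by (Lemma \ref{lemma:equivmin}) and the non-strict inequality $\zeta_{ji} \leq \delta$: this is precisely why one argues with $\delta$ ranging strictly above $M$ rather than substituting $\delta = M$ directly, for which the lemma need not apply. This is exactly the mechanism behind the infimum-versus-minimum phenomenon examined later in the section. The other piece of bookkeeping to respect is that every invocation of (Lemma \ref{lemma:equivmin}) must stay within the admissible range $\delta < 1-\beta_j$, which is guaranteed here by $M < 1-\beta_j$.
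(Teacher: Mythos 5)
Your proof is correct and takes essentially the same approach as the paper: the inequality $\nabla_j \geq M$ is read off (Proposition \ref{proposition:minusunmoinsbetamaxthetazeta}) exactly as in the paper's first step, and the reverse inequality applies (Lemma \ref{lemma:equivmin}) to values of $\delta$ strictly between $M$ and $1-\beta_j$, which is the paper's mechanism as well. The only cosmetic difference is that the paper runs the second step as a proof by contradiction with $\delta$ chosen in $(\tau_j,\nabla_j)$, whereas you argue directly that $(M,\,1-\beta_j)\subseteq E_j$ and pass to the infimum.
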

\begin{proof}
We put $\tau_j = \min_{i\in {\cal A}_j}\, 
\max(\theta_{ji} , \zeta_{ji})$ and we will show 
$\nabla_j = \tau_j$ in the following two steps:

\begin{itemize}
    \item $\nabla_j \geq \tau_j$: by (Proposition \ref{proposition:minusunmoinsbetamaxthetazeta}), we have an index $i^0 \in \{1,2,\dots,n\}$ such that:
    $$\gamma_{ji^0 } > 0, \quad \max(\theta_{ji^0} , \zeta_{ji^0})   \leq   \nabla_j .$$
\noindent Therefore $i^0 \in \mathcal{A}_j$ and $\max(\theta_{ji^0},\zeta_{ji^0}) \leq \nabla_j$ and we deduce: 
\[ \tau_j \leq \max(\theta_{ji^0},\zeta_{ji^0}) \leq \nabla_j. \]
\item $\nabla_j = \tau_j$. To prove this equality, we proceed by contradiction:   we suppose that  $\nabla_j \neq \tau_j$. We then have $\nabla_j > \tau_j$. The set $\mathcal{A}_j$ being non-empty (Corollary \ref{corollary:c1}), let $i \in \mathcal{A}_j$ such that $\tau_j = \max(\theta_{ji}, \zeta_{ji})$, so we have:
\begin{equation}
    \gamma_{ji} > 0, \quad \max(\theta_{ji}, \zeta_{ji}) = \tau_j < \nabla_j.
\end{equation}
We take a number $\delta$ verifying $\tau_j < \delta < \nabla_j$, then $\delta$ satisfy:
\[ \delta < \nabla_j < 1 - \beta_j,\]
\noindent and the index $i \in \mathcal{A}_j$ verify:
\[ \gamma_{ji} > 0, \quad \max(\theta_{ji}, \zeta_{ji}) = \tau_j < \delta < 1 - \beta_j. \]
\noindent By (Lemma \ref{lemma:equivmin}), we deduce that $\delta \in E_j$, i.e.,  $G(\underline{\beta}(\delta))_j \leq  \overline{\beta}(\delta)_j$), so $\nabla_j= \inf E_j \leq \delta$, which is a contradiction. 
\end{itemize}

\end{proof}

We then give the explicit analytical formula for computing the Chebyshev distance $\nabla$ (see (Definition \ref{def:chebyshevdistMinfleche})   and  (Proposition \ref{prop:nablafunc})), associated to the second member $\beta$ of a system of $\min-\rightarrow_G$ fuzzy relational equations $(\Sigma) : \Gamma  \Box_{\rightarrow_{G}}^{\min} x =  \beta$, see (\ref{eq:minsys}):
\begin{theorem}\label{theorem:chebming}
$$\nabla = \max_{1 \leq j \leq m} \nabla_j \quad  \text{where for }  j \in \{1,2,\dots,m\}, \quad  \nabla_j = 
\min(1 - \beta_j , \tau_j)$$
$$\text{ with }  \quad 
\tau_j = \min_{i\in {\cal A}_j}\, \max(\theta_{ji}, \zeta_{ji}) \quad \text{ and the convention }   \min_\emptyset = 1.$$ 
\end{theorem}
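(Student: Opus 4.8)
The plan is to use the decomposition $\nabla = \max_{1 \leq j \leq m} \nabla_j$ with $\nabla_j = \inf E_j$, already established in (Proposition \ref{prop:nablafunc}), to reduce the whole statement to the single scalar identity $\nabla_j = \min(1 - \beta_j, \tau_j)$ for each fixed $j$. Once this per-component formula is proved, the global formula follows by taking the maximum over $j$. So I fix $j \in \{1, 2, \dots, m\}$ and focus on the scalar equality, using (Lemma \ref{lemma:nablajvsunmoinsbeta}) as the entry point: it always gives $\nabla_j \leq 1 - \beta_j$, which lets me split into the strict case $\nabla_j < 1 - \beta_j$ and the boundary case $\nabla_j = 1 - \beta_j$.

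In the strict case $\nabla_j < 1 - \beta_j$ the work is already done by the preceding results: (Corollary \ref{corollary:c1}) guarantees that $\mathcal{A}_j$ is non-empty, so $\tau_j = \min_{i \in \mathcal{A}_j}\max(\theta_{ji}, \zeta_{ji})$ is a genuine minimum, and (Proposition \ref{enplus1}) yields $\nabla_j = \tau_j$. Since here $\tau_j = \nabla_j < 1 - \beta_j$, the minimum $\min(1 - \beta_j, \tau_j)$ equals $\tau_j = \nabla_j$, which is exactly the claim. This case is essentially bookkeeping on top of (Proposition \ref{enplus1}).

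The boundary case $\nabla_j = 1 - \beta_j$ is where the new content lies, and it reduces to showing $\tau_j \geq 1 - \beta_j$, since this forces $\min(1 - \beta_j, \tau_j) = 1 - \beta_j = \nabla_j$. If $\mathcal{A}_j = \emptyset$, the convention $\min_\emptyset = 1$ makes $\tau_j = 1 \geq 1 - \beta_j$ immediate. If $\mathcal{A}_j \neq \emptyset$, I would argue by contradiction: assuming $\tau_j < 1 - \beta_j$, choose $i^0 \in \mathcal{A}_j$ realizing $\tau_j = \max(\theta_{ji^0}, \zeta_{ji^0})$ and any $\delta$ with $\tau_j < \delta < 1 - \beta_j$. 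Then $\gamma_{ji^0} > 0$, $\theta_{ji^0} < \delta$ and $\zeta_{ji^0} \leq \delta$, so the $\Longleftarrow$ implication of (Lemma \ref{lemma:equivmin}) gives $G(\underline{\beta}(\delta))_j \leq \overline{\beta}(\delta)_j$, i.e. $\delta \in E_j$; hence $\nabla_j = \inf E_j \leq \delta < 1 - \beta_j$, contradicting $\nabla_j = 1 - \beta_j$. Therefore $\tau_j \geq 1 - \beta_j$ and the identity holds.

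I expect the main obstacle to be precisely this threshold case, because (Proposition \ref{enplus1}) computes $\tau_j$ only under the \emph{strict} hypothesis $\nabla_j < 1 - \beta_j$; the theorem's real job beyond the earlier lemmas is to glue the two regimes together through the outer $\min$ and the $\min_\emptyset = 1$ convention, and to verify that $\tau_j$ never undercuts $1 - \beta_j$ when $\nabla_j$ sits at the boundary. The delicate point is the careful use of the constraint $\delta < 1 - \beta_j$ required by (Lemma \ref{lemma:equivmin}), which is what makes the contradiction argument valid. Once the scalar identity $\nabla_j = \min(1 - \beta_j, \tau_j)$ is secured in both cases, the final formula $\nabla = \max_{1 \leq j \leq m} \min(1 - \beta_j, \tau_j)$ is immediate from (Proposition \ref{prop:nablafunc}).
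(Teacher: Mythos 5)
Your proposal is correct and follows essentially the same route as the paper's proof: both reduce to the scalar identity via (Proposition \ref{prop:nablafunc}), handle the empty-$\mathcal{A}_j$ case through (Corollary \ref{corollary:c1}) and the $\min_\emptyset = 1$ convention, invoke (Proposition \ref{enplus1}) for the regime $\nabla_j < 1 - \beta_j$, and use the $\Longleftarrow$ direction of (Lemma \ref{lemma:equivmin}) with a $\delta$ chosen strictly between $\tau_j$ and $1 - \beta_j$ (resp.\ $\nabla_j$) to derive the key contradiction. The only difference is organizational — you split on $\nabla_j < 1-\beta_j$ versus $\nabla_j = 1-\beta_j$ while the paper splits on $\beta_j$ and $\mathcal{A}_j$ and then glues the cases at the end — but the mathematical content is identical.
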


Following (Proposition \ref{prop:nablafunc}), we have to prove that $\text{for all }  j \in \{1,2,\dots,m\}, \nabla_j =  \inf E_j$ is equal to $\min(1 - \beta_j , \tau_j)$.

\begin{proof}
We distinguish the following cases:
\begin{itemize}
    \item If $\beta_j = 1$, we deduce from (Lemma \ref{lemma:nablajvsunmoinsbeta}) that 
    $\nabla_j = 0 = \min(1-\beta_j,\tau_j).$
    \item Suppose that $\beta_j < 1$. We then have:
    \begin{itemize}
        \item If $\mathcal{A}_j = \emptyset$, then by (Lemma \ref{lemma:nablajvsunmoinsbeta}) and (Corollary \ref{corollary:c1}), we have $\nabla_j = 1 - \beta_j$ and $\tau_j=1$ (by the convention). Therefore, $\nabla_j   = \min(1-\beta_j,\tau_j),$
        \item If $\mathcal{A}_j \neq \emptyset$, then there exists an index $i \in \mathcal{A}_j$ such that $\tau_j = \max(\theta_{ji}, \zeta_{ji})$.\\ Let us prove    the inequality $\nabla_j \leq \tau_j$ by contradiction.\\
        Assume that we have  $\tau_j < \nabla_j$. We know from (Lemma \ref{lemma:nablajvsunmoinsbeta}) that $\nabla_j \leq 1 - \beta_j$. Let $\delta$ be a number verifying  
$\tau_j < \delta < \nabla_j$. Then, we have:
$$\gamma_{ji} > 0 \quad \text{ and } \quad 
\tau_j = \max(\theta_{ji}, \zeta_{ji}) < \delta 
< \nabla_j \leq 1 - \beta_j.$$
By (Lemma \ref{lemma:equivmin}), we deduce $\delta \in E_j$, so $\nabla_j= \inf E_j \leq \delta$ which is a contradiction.
    \end{itemize}
\end{itemize}
\noindent
To summarize, we know that:
$$\nabla_j\leq \min(1 - \beta_j, \tau_j)
\quad \text{ and by (\ref{enplus1})} \quad \nabla_j < 1 - \beta_j\Longrightarrow \nabla_j = \tau_j$$
so $\nabla_j = \min(1 - \beta_j , \tau_j)$.
\end{proof}
We illustrate this theorem:
\begin{example}
We continue (Example \ref{ex:illuduresPropositiontaree}). For $j=1$, we compute  $\tau_1 = \min_{i \in \mathcal{A}_1}\max(\theta_{1i}, \zeta_{1i})$ where $\mathcal{A}_1 = \{ 1,2 \}$. We obtain $\tau_1 = 0.15$. As  $1 - \beta_1 = 0.9$, we have $\nabla_1 = \min(0.9,0.15) = 0.15$. 
\end{example}

As we will see in   (Example \ref{ex:5counter}), $\nabla_j = \inf E_j$
\textit{does not always belong to the set $E_j$} (the set $E_j$ is defined in (\ref{eq:setEj})) i.e., $E_j$ does not necessarily admit a minimum element. Our aim in what follows is to characterize the case in which $E_j$ admits a minimum element i.e., $\nabla_j \in E_j$. 

\noindent
We know from (Lemma \ref{lemma:nablajvsunmoinsbeta}) that $1- \beta_j \in E_j$ and  $\nabla_j \leq 1 - \beta_j$. We are left with the case where  $\nabla_j < 1 - \beta_j$. Thus, we have $\beta_j <  1$. We will need the following lemma:
\begin{lemma}\label{l1}
Suppose that we have $\beta_j < 1$, then for all $i\in{\cal A}_j$, we have $\zeta_{ji} < 1 - \beta_j.$
\end{lemma}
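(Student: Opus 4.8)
The plan is to bound $\zeta_{ji}$ directly from its definition, exploiting the factor $\tfrac{1}{2}$ appearing in the first argument of $\sigma_G$. Recall that $\zeta_{ji} = \max_{1 \le l \le m} \sigma_G(\beta_l, \gamma_{li}, \beta_j)$ and that $\sigma_G(x,y,z) = \min\bigl(\tfrac{(x-z)^+}{2}, (y-z)^+\bigr)$. It therefore suffices to produce, for each index $l$, a strict upper bound on $\sigma_G(\beta_l, \gamma_{li}, \beta_j)$ that is uniform in $l$, and then pass to the maximum.

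First I would discard the second argument of the min: since a minimum is at most each of its arguments, we have $\sigma_G(\beta_l, \gamma_{li}, \beta_j) \le \tfrac{(\beta_l - \beta_j)^+}{2}$ for every $l$. Next I would use that all components of $\beta$ lie in $[0,1]$, so $\beta_l \le 1$ and hence, since $x \mapsto x^+$ is non-decreasing, $(\beta_l - \beta_j)^+ \le (1 - \beta_j)^+ = 1 - \beta_j$, where the last equality uses the hypothesis $\beta_j < 1$. Combining these two estimates gives $\sigma_G(\beta_l, \gamma_{li}, \beta_j) \le \tfrac{1 - \beta_j}{2}$ for every $l$.

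Finally I would take the maximum over $l$ to obtain $\zeta_{ji} \le \tfrac{1 - \beta_j}{2}$, and conclude by noting that $\beta_j < 1$ forces $1 - \beta_j > 0$, so that $\tfrac{1 - \beta_j}{2} < 1 - \beta_j$; this yields the desired strict inequality $\zeta_{ji} < 1 - \beta_j$. There is no genuine obstacle here: the whole argument rests on the single observation that the halving in the first coordinate of $\sigma_G$ turns the only available bound $\beta_l - \beta_j \le 1 - \beta_j$ into a strictly smaller quantity. I note in passing that the restriction $i \in \mathcal{A}_j$ plays no role in the estimate — the bound in fact holds for every column index $i$ — but stating it for $i \in \mathcal{A}_j$ is all that will be needed in the sequel.
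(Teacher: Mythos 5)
Your proof is correct and follows exactly the paper's argument: drop the second term of the $\min$ in $\sigma_G$, bound $(\beta_l-\beta_j)^+$ by $1-\beta_j$, and use the factor $\tfrac12$ together with $1-\beta_j>0$ to get the strict inequality. Your side remark that the hypothesis $i\in\mathcal{A}_j$ is not actually used is also accurate — the paper's proof does not use it either.
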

\begin{proof}
\begin{align}
\zeta_{ji} &= \max_{1 \leq l \leq m}
\sigma_G(\beta_l  , \gamma_{li} , \beta_j)\nonumber\\
&= \max_{1 \leq l \leq m}
\min(\dfrac{(\beta_l - \beta_j)^+}{2}  , (\gamma_{li} -    \beta_j)^+)\nonumber\\
&\leq \max_{1 \leq l \leq m}
\dfrac{(\beta_l - \beta_j)^+}{2}\nonumber\\ 
&\leq \dfrac{1 - \beta_j}{2} < 1 - \beta_j. \nonumber 
\end{align}
\end{proof}
 
We use the following sets:
\begin{notation}For $j \in \{1,2,\dots,m\}$: \\
    \begin{equation}\label{eq:fjset}
    \mathcal{F}_j = \{ i \in \mathcal{A}_j \mid \theta_{ji} < \zeta_{ji} \} \quad \text{and} \quad  \widetilde \nabla_j = \min_{i \in \mathcal{F}_j} \zeta_{ji} \quad \text{ with the convention }   \min_\emptyset = 1.
\end{equation}
\end{notation}

\begin{lemma}\label{l2}Suppose we have $\beta_j < 1$. 
We have: $\widetilde \nabla_j\in  E_j$.    
\end{lemma}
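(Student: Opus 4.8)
The plan is to prove $\widetilde{\nabla}_j \in E_j$ by unwinding the definition of $\widetilde{\nabla}_j$ in (\ref{eq:fjset}) and splitting into two cases according to whether the index set $\mathcal{F}_j$ is empty.

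First I would dispose of the degenerate case $\mathcal{F}_j = \emptyset$. Here the convention $\min_\emptyset = 1$ gives $\widetilde{\nabla}_j = 1$, and I need only observe that for $\delta = 1$ one has $\overline{\beta}(\delta)_j = \min(\beta_j + 1, 1) = 1$, so the defining inequality $G(\underline{\beta}(1))_j \leq \overline{\beta}(1)_j = 1$ holds automatically because $G$ takes values in $[0,1]^{m \times 1}$. Alternatively, since $1 - \beta_j \in E_j$ by (Lemma \ref{lemma:nablajvsunmoinsbeta}) and $1 - \beta_j \leq 1$, the upward-closure of $E_j$ from (Lemma \ref{lemdec}) already yields $1 \in E_j$.

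The substantive case is $\mathcal{F}_j \neq \emptyset$. Here I would select an index $i_0 \in \mathcal{F}_j$ attaining the minimum, so that $\widetilde{\nabla}_j = \zeta_{ji_0}$ with $i_0 \in \mathcal{A}_j$ and $\theta_{ji_0} < \zeta_{ji_0}$. Setting $\delta := \widetilde{\nabla}_j = \zeta_{ji_0}$, the goal is to show $\delta \in E_j$, which I would obtain from the $\Longleftarrow$ direction of (Lemma \ref{lemma:equivmin}). To invoke that lemma I must first check its standing hypothesis $\delta < 1 - \beta_j$: this is exactly where (Lemma \ref{l1}) enters, since $\beta_j < 1$ and $i_0 \in \mathcal{A}_j$ give $\zeta_{ji_0} < 1 - \beta_j$. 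With the hypothesis secured, I verify the three conditions of (Lemma \ref{lemma:equivmin}) for the witness $i = i_0$: $\gamma_{ji_0} > 0$ because $i_0 \in \mathcal{A}_j$; $\theta_{ji_0} < \delta$ because $i_0 \in \mathcal{F}_j$ forces $\theta_{ji_0} < \zeta_{ji_0} = \delta$; and $\zeta_{ji_0} \leq \delta$ holds with equality by the choice of $\delta$. (Lemma \ref{lemma:equivmin}) then delivers $G(\underline{\beta}(\delta))_j \leq \overline{\beta}(\delta)_j$, i.e. $\widetilde{\nabla}_j \in E_j$.

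The argument reduces to a single application of (Lemma \ref{lemma:equivmin}), so the only real care-point is the strictness bookkeeping between its hypotheses: that lemma requires the \emph{strict} inequality $\theta_{ji_0} < \delta$ but only the \emph{non-strict} $\zeta_{ji_0} \leq \delta$, while $\delta$ equals $\zeta_{ji_0}$ exactly. It is precisely the definition of $\mathcal{F}_j$ via the strict inequality $\theta_{ji} < \zeta_{ji}$ that makes the strict condition survive at the attained minimum, whereas the non-strict condition holds trivially as an equality. Thus the choice $\delta = \zeta_{ji_0}$ threads both requirements simultaneously, which is the reason $\mathcal{F}_j$ is defined with a strict inequality rather than $\theta_{ji} \leq \zeta_{ji}$.
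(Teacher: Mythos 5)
Your proof is correct and follows essentially the same route as the paper: the same case split on whether $\mathcal{F}_j$ is empty, the same use of (Lemma \ref{l1}) to secure the hypothesis $\delta < 1-\beta_j$, and the same application of the converse direction of (Lemma \ref{lemma:equivmin}) with $\delta = \zeta_{ji_0}$ at the attained minimum. Your closing remark on why the strict inequality in the definition of $\mathcal{F}_j$ is exactly what makes the witness conditions compatible is a nice clarification that the paper leaves implicit.
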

\begin{proof}
We distinguish the following two cases:
\begin{itemize}
    \item  If $\mathcal{F}_j = \emptyset$, then $\widetilde \nabla_j  = 1 \in E_j$.
    \item If $\mathcal{F}_j \neq \emptyset$, let $i\in \mathcal{F}_j$ such that 
\[ \widetilde \nabla_j  = \zeta_{ji}.\]
From $i \in \mathcal{F}_j$ and (Lemma \ref{l1}), we deduce: 
\[ \gamma_{ji} > 0 \quad \text{and} \quad 
\theta_{ji} < \zeta_{ji} < 1 - \beta_j\]
\end{itemize}

\noindent
By taking $\delta = \zeta_{ji}< 1 - \beta_j$, we get:
\[ \gamma_{ji} > 0 \,,\quad \theta_{ji} < \delta \quad \text{and} \quad 
  \zeta_{ji} = \delta   \]
We deduce from  (Lemma \ref{lemma:equivmin}) that we have: 
\[ \widetilde \nabla_j  = \zeta_{ji} = \delta  \in E_j. \]
\end{proof}
 The characterization of the case where $E_j$ admits a minimum element, here $\nabla_j$,  is given by:
\begin{proposition}\label{prop:minGsmallestelem}
Suppose we have $\nabla_j < 1 - \beta_j$. Then, we have:
\begin{enumerate}
\item $\nabla_j \leq \widetilde\nabla_j$, 
  \item $\nabla_j \in E_j
 \,\Longleftrightarrow\,
   \nabla_j = \widetilde \nabla_j.$ 
\end{enumerate}   
\end{proposition}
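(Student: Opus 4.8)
The plan is to lean entirely on the two already-established facts that $\widetilde\nabla_j \in E_j$ (Lemma \ref{l2}) and that $\nabla_j = \min_{i\in\mathcal{A}_j}\max(\theta_{ji},\zeta_{ji})$ whenever $\nabla_j < 1-\beta_j$ (Proposition \ref{enplus1}). First note that the standing hypothesis $\nabla_j < 1-\beta_j$ forces $\beta_j < 1$, since $\nabla_j \geq 0$ rules out $1-\beta_j = 0$. Consequently Lemma \ref{l2} applies and gives $\widetilde\nabla_j \in E_j$. Because $\nabla_j = \inf E_j$, the first statement $\nabla_j \leq \widetilde\nabla_j$ follows immediately, with no further computation.

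For the second statement, the direction ($\Leftarrow$) is equally short: if $\nabla_j = \widetilde\nabla_j$, then the membership $\widetilde\nabla_j \in E_j$ supplied by Lemma \ref{l2} directly yields $\nabla_j \in E_j$.

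The real content is the forward direction ($\Rightarrow$). Assuming $\nabla_j \in E_j$ and using $\nabla_j < 1-\beta_j$, I would apply Lemma \ref{lemma:equivmin} at $\delta = \nabla_j$ (legitimate precisely because $\nabla_j \in E_j$ encodes $G(\underline{\beta}(\nabla_j))_j \leq \overline{\beta}(\nabla_j)_j$). This produces an index $i$ with $\gamma_{ji} > 0$, hence $i \in \mathcal{A}_j$, together with $\theta_{ji} < \nabla_j$ and $\zeta_{ji} \leq \nabla_j$; in particular $\max(\theta_{ji},\zeta_{ji}) \leq \nabla_j$. On the other hand, Proposition \ref{enplus1} gives $\nabla_j \leq \max(\theta_{ji},\zeta_{ji})$ for every member of $\mathcal{A}_j$, so the two bounds collapse to $\max(\theta_{ji},\zeta_{ji}) = \nabla_j$. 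Since $\theta_{ji}$ lies strictly below this common value, the maximum must be carried by $\zeta_{ji}$, giving $\zeta_{ji} = \nabla_j$ and $\theta_{ji} < \zeta_{ji}$, so that $i \in \mathcal{F}_j$. Therefore $\widetilde\nabla_j = \min_{i'\in\mathcal{F}_j}\zeta_{ji'} \leq \zeta_{ji} = \nabla_j$, which combined with the already-proved inequality $\nabla_j \leq \widetilde\nabla_j$ yields $\nabla_j = \widetilde\nabla_j$.

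The one delicate point, and the step I expect to be the main obstacle, is the inference $\zeta_{ji} = \nabla_j$: it hinges on the inequality for $\theta_{ji}$ coming out of Lemma \ref{lemma:equivmin} being \emph{strict} ($\theta_{ji} < \nabla_j$ rather than merely $\theta_{ji} \leq \nabla_j$), since it is exactly this strictness that forces the maximum to be attained by $\zeta_{ji}$ and hence places $i$ inside $\mathcal{F}_j$, whose defining condition $\theta_{ji} < \zeta_{ji}$ is itself strict. Everything surrounding this—extracting the index, the sandwich argument, and the infimum bookkeeping—is routine once the cited lemmas are in place.
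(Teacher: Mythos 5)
Your proof is correct. Parts 1 and the ($\Leftarrow$) direction of part 2 coincide with the paper's argument: both rest on Lemma \ref{l2} together with the definition $\nabla_j = \inf E_j$, and your observation that $\nabla_j < 1-\beta_j$ forces $\beta_j < 1$ is exactly the justification needed to invoke that lemma. For the forward direction you and the paper start identically, applying Lemma \ref{lemma:equivmin} at $\delta = \nabla_j$ (legitimate since $\nabla_j \in E_j$ and $\nabla_j < 1-\beta_j$) to extract an index $i \in \mathcal{A}_j$ with $\theta_{ji} < \nabla_j$ and $\zeta_{ji} \leq \nabla_j$; the difference lies in how membership $i \in \mathcal{F}_j$ is then secured. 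The paper argues by contradiction: assuming $\zeta_{ji} \leq \theta_{ji}$, it picks $\delta$ with $\theta_{ji} < \delta < \nabla_j$ and re-applies Lemma \ref{lemma:equivmin} in the converse direction to place $\delta \in E_j$, contradicting the minimality of $\nabla_j$. You instead invoke Proposition \ref{enplus1}: since $\nabla_j = \min_{i' \in \mathcal{A}_j}\max(\theta_{ji'},\zeta_{ji'})$, the two bounds $\nabla_j \leq \max(\theta_{ji},\zeta_{ji}) \leq \nabla_j$ collapse to an equality, and the strictness of $\theta_{ji} < \nabla_j$ forces $\zeta_{ji} = \nabla_j$, hence $\theta_{ji} < \zeta_{ji}$. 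Your route is shorter and avoids a second pass through Lemma \ref{lemma:equivmin}, at the cost of depending on Proposition \ref{enplus1} --- which is proved earlier in the paper, so there is no circularity; its proof in fact contains essentially the same intermediate-$\delta$ contradiction that the paper redoes inline here. Your closing remark correctly identifies the crux shared by both versions: it is the strict inequality $\theta_{ji} < \delta$ delivered by Lemma \ref{lemma:equivmin} that forces the maximum onto $\zeta_{ji}$ and places $i$ in $\mathcal{F}_j$.
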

We may have $\nabla_j \notin E_j$ (see in (Example \ref{ex:5counter})). 
\begin{proof}
We know from (Lemma \ref{l2}) that $\widetilde \nabla_j\in E_j$. Then the definition of $\nabla_j = \inf E_j$ implies that $\nabla_j \leq \widetilde\nabla_j.$

\noindent
To prove the equivalence of the second statement, since from (Lemma \ref{l2}) we know that 
$\widetilde\nabla_j \in E_j$, the equality 
$\nabla_j = \widetilde\nabla_j$ implies that $\nabla_j \in E_j.$

\noindent
To prove the implication $\Longrightarrow$, let us first remark that the hypothesis $\nabla_j \in E_j$ means that 
$\nabla_j = \min E_j$. 

\noindent
As we suppose  $\nabla_j < 1 - \beta_j$, we deduce from 
(Lemma \ref{lemma:equivmin}) that there is an index $i^0\in{\cal A}_j$ such that 
\[ \theta_{ji^0} < \nabla_j \quad \text{and} \quad \zeta_{ji^0} \leq \nabla_j.\]
Let us show that we have $i^0 \in\mathcal{F}_j$ i.e.,  
$\theta_{ji^0} < \zeta_{ji^0}$ by contradiction.

\noindent
Suppose we have $\zeta_{ji^0} \leq \theta_{ji^0}$. Let us take a number $\delta$ verifying:  
\[ \theta_{ji^0} < \delta < \nabla_j.\]
We then have:
\[\gamma_{ji^0} > 0 \quad \text{and} \quad \zeta_{ji^0} \leq \theta_{ji^0} < \delta < \nabla_j < 1 -\beta_j. \]
By applying (Lemma \ref{lemma:equivmin}) to $\delta$,we obtain that
$\delta\in E_j$, so $\nabla_j \leq \delta$ which is a contradiction.

\noindent 
Finally, we have:
 \[\widetilde\nabla_j = \min_{i\in\mathcal{F}_j} \zeta_{ji} \leq 
\zeta_{ji^0} \leq  \nabla_j\]
so $\nabla_j = \widetilde\nabla_j$.
\end{proof} 

We immediately deduce from (Lemma \ref{lemdec}) and (Proposition \ref{prop:nablafunc}): 
\begin{corollary}\label{cor:nablainE}
Assume that for all $j \in \{1,2,\dots,m\}$, we have either $\nabla_j = 1 - \beta_j$ or $\nabla_j = \widetilde \nabla_j$, then we have $\nabla \in E$. 
\end{corollary}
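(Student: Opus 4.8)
The plan is to reduce membership in $E$ to membership in each $E_j$: by the first statement of Lemma \ref{lemdec} we have $E = \bigcap_{1 \leq j \leq m} E_j$, so proving $\nabla \in E$ amounts to proving $\nabla \in E_j$ for every $j \in \{1,2,\dots,m\}$. I would therefore work componentwise and only at the end intersect.

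As an intermediate claim I would first establish that $\nabla_j \in E_j$ for each $j$, using the hypothesis through a short case analysis. If $\nabla_j = 1 - \beta_j$, then the first statement of Lemma \ref{lemma:nablajvsunmoinsbeta} gives $1 - \beta_j \in E_j$, hence $\nabla_j \in E_j$. If instead $\nabla_j = \widetilde\nabla_j$, I would split on whether $\beta_j < 1$ or $\beta_j = 1$: when $\beta_j < 1$, Lemma \ref{l2} yields $\widetilde\nabla_j \in E_j$ directly, so $\nabla_j \in E_j$. The one delicate point is the boundary case $\beta_j = 1$, where Lemma \ref{l2} does not apply; there the last statement of Lemma \ref{lemma:nablajvsunmoinsbeta} forces $\nabla_j = 0 = 1 - \beta_j$, which sends us back to the first case and again gives $\nabla_j \in E_j$. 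Thus $\nabla_j \in E_j$ holds in all cases.

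Having $\nabla_j \in E_j$ for every $j$, I would upgrade it to $\nabla \in E_j$. By Proposition \ref{prop:nablafunc} we have $\nabla = \max_{1 \leq j \leq m} \nabla_j \geq \nabla_j$, and the second statement of Lemma \ref{lemdec} tells us that each $E_j$ is upward closed (stable under replacing an element by any larger one in $[0,1]$). Combining these, $\nabla_j \in E_j$ together with $\nabla_j \leq \nabla$ gives $\nabla \in E_j$.

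Since this holds for every $j \in \{1,2,\dots,m\}$, the first statement of Lemma \ref{lemdec} concludes $\nabla \in \bigcap_{1 \leq j \leq m} E_j = E$, as desired. The argument is essentially bookkeeping layered on top of the cited results; the only genuine obstacle is recognizing that the case $\nabla_j = \widetilde\nabla_j$ with $\beta_j = 1$ must be redirected to the first case, because Lemma \ref{l2} is available only under the assumption $\beta_j < 1$.
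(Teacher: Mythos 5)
Your proof is correct and follows essentially the same route the paper intends: establish $\nabla_j \in E_j$ for each $j$ via Lemma \ref{lemma:nablajvsunmoinsbeta} and Lemma \ref{l2}, then use the upward closedness of each $E_j$ and the decomposition $E = \bigcap_{1 \leq j \leq m} E_j$ from Lemma \ref{lemdec} together with $\nabla = \max_{1 \leq j \leq m}\nabla_j$ from Proposition \ref{prop:nablafunc}. Your explicit handling of the boundary case $\beta_j = 1$ (where Lemma \ref{l2} does not apply and one must fall back on $\nabla_j = 0 = 1-\beta_j$) is a detail the paper leaves implicit, and it is resolved correctly.
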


We then deduce:
\begin{corollary}\label{cor:chebnablainE}
 If $\nabla \in E$, then $G(\underline{\beta}(\nabla))$ is the lowest Chebychev approximation for $\beta$. Moreover, $\xi = \Gamma^t \Box_{\min}^{\max} \underline{\beta}(\nabla)$ is an approximate solution for the system 
$(\Sigma) : \Gamma \Box_{\rightarrow_{G}}^{\min} x = \beta$.   
\end{corollary}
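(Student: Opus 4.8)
The plan is to introduce the candidate $d^\star = G(\underline{\beta}(\nabla))$ and prove, in order, that (i) $d^\star \in \mathcal{D}$, (ii) $\Vert \beta - d^\star \Vert = \nabla$ so that $d^\star$ is a Chebyshev approximation, (iii) $d^\star$ is the \emph{lowest} such approximation, and (iv) the displayed vector $\xi$ solves the associated consistent system. For (i) I would simply invoke (\ref{qu:eq00}), which guarantees that $G$ maps any vector into $\mathcal{D}$; since $d^\star$ is the image of $\underline{\beta}(\nabla)$ under $G$, it lies in $\mathcal{D}$ at once.

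For (ii) the hypothesis $\nabla \in E$ is, by the definition of $E$, precisely the inequality $G(\underline{\beta}(\nabla)) \leq \overline{\beta}(\nabla)$, i.e. $d^\star \leq \overline{\beta}(\nabla)$, while (Proposition \ref{propo1}) supplies $\underline{\beta}(\nabla) \leq G(\underline{\beta}(\nabla)) = d^\star$. Chaining these yields the sandwich $\underline{\beta}(\nabla) \leq d^\star \leq \overline{\beta}(\nabla)$, which the equivalence (\ref{ineq:bbbar}) converts into $\Vert \beta - d^\star \Vert \leq \nabla$. Since $d^\star \in \mathcal{D}$ and $\nabla = \inf_{d \in \mathcal{D}} \Vert \beta - d \Vert$, the reverse inequality $\Vert \beta - d^\star \Vert \geq \nabla$ is immediate, forcing equality and making $d^\star$ a genuine Chebyshev approximation.

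The one step requiring a little care is (iii), minimality among all Chebyshev approximations. Here I would take an arbitrary $d \in \mathcal{D}_\beta$, so that $d \in \mathcal{D}$ and $\Vert \beta - d \Vert = \nabla$. The equivalence (\ref{ineq:bbbar}) again gives $\underline{\beta}(\nabla) \leq d$; applying the increasing property of $G$ (Proposition \ref{propo2}) and then using that membership in $\mathcal{D}$ is \emph{exactly} the fixed-point condition $G(d) = d$ of (\ref{def:setofsecondmembersBeta0}), I obtain $d^\star = G(\underline{\beta}(\nabla)) \leq G(d) = d$. Thus $d^\star$ lies below every Chebyshev approximation, so it is the lowest one. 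The crux of the whole argument is this conversion of the monotonicity of $G$ into minimality via the identity $G(d) = d$ on $\mathcal{D}$, and it is the only place where the fixed-point characterization of $\mathcal{D}$ is really used.

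Finally, for (iv) I would read off the definition (\ref{eq::appG}) of $G$ in the Gödel case, where $T = T_M$ and $\mathcal{I}_T = \rightarrow_G$, to see that the vector $\xi = \Gamma^t \Box_{\min}^{\max} \underline{\beta}(\nabla)$ satisfies $\Gamma \Box_{\rightarrow_G}^{\min} \xi = \Gamma \Box_{\rightarrow_G}^{\min}(\Gamma^t \Box_{\min}^{\max} \underline{\beta}(\nabla)) = G(\underline{\beta}(\nabla)) = d^\star$. Hence $\xi$ is an actual solution of the consistent system $\Gamma \Box_{\rightarrow_G}^{\min} x = d^\star$ whose second member $d^\star$ is a Chebyshev approximation of $\beta$, which is exactly what it means for $\xi$ to be an approximate solution of $(\Sigma)$. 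I expect no genuine obstacle anywhere: each step is a direct application of an earlier result, and the proof is essentially a careful bookkeeping of the sandwich inequality together with the increasing and fixed-point properties of $G$.
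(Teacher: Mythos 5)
Your proposal is correct and follows essentially the same route as the paper's own proof: the sandwich $\underline{\beta}(\nabla) \leq G(\underline{\beta}(\nabla)) \leq \overline{\beta}(\nabla)$ from (Proposition \ref{propo1}) and $\nabla \in E$, conversion via (\ref{ineq:bbbar}), and minimality by applying the increasing map $G$ to $\underline{\beta}(\nabla) \leq d$ together with the fixed-point identity $G(d)=d$ on $\mathcal{D}$. The only cosmetic differences are that you invoke (\ref{qu:eq00}) for membership in $\mathcal{D}$ where the paper reads it off from $G(\underline{\beta}(\nabla)) = \Gamma \Box_{\rightarrow_G}^{\min}\xi$, and you order the two inequalities giving $\Vert \beta - G(\underline{\beta}(\nabla))\Vert = \nabla$ in the opposite sequence.
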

(From the second part of the following proof, one can see that if   $\nabla = \min_{d \in \mathcal{D}} \Vert \beta - d \Vert$ then $\nabla \in E$).
\begin{proof}
By definition of the application $G$, we have 
$G(\underline{\beta}(\nabla)) = \Gamma \Box_{\rightarrow_{G}}^{\min} \xi$, so
$G(\underline{\beta}(\nabla)) \in{\cal D}$ is a consistent second member. By definition of $\nabla$ (Definition \ref{def:chebyshevdistMinfleche}), we have $\Vert \beta - G(\underline{\beta}(\nabla))\Vert \geq \nabla$.

\noindent
From (Proposition \ref{propo1}) and the equality $\nabla = \min E$, we deduce the double inequality:
\[ \underline{\beta}(\nabla) \leq  G(\underline{\beta}(\nabla)) \leq \overline{\beta}(\nabla). \]
By applying the inequality (\ref{ineq:bbbar}), we get: 
$$\Vert \beta - G(\underline{\beta}(\nabla))\Vert \leq \nabla\quad \text{so} \quad  \Vert \beta - G(\underline{\beta}(\nabla))\Vert = \nabla.$$
We have proven that  $G(\underline{\beta}(\nabla))$ is a Chebyshev approximation of $\beta$ and $\xi$ is an approximate solution for the system 
$(\Sigma)$. 

\noindent
Let $c \in {\cal D}_\beta$ be any   Chebyshev approximation of $\beta$, i.e., $G(c) = c$ and $\Vert  \beta - c \Vert = \nabla$. By reusing the inequality (\ref{ineq:bbbar}), we get:
\[ \underline{\beta}(\nabla) \leq  c  \leq \overline{\beta}(\nabla). \]
As $G$ is an increasing application, see (Proposition \ref{propo2}), and applying this property  to the inequality $
\underline{\beta}(\nabla) \leq  c$ , we get:
\[ G(\underline{\beta}(\nabla)) \leq G(c) = c. \]

\end{proof}

\noindent With the help of the following example, we show that $E_j$ does not necessarily admit a minimum  element and $\nabla \notin E$:

\begin{example}\label{ex:5counter}
We use $\Gamma = \begin{bmatrix}
    0.41 & 0.07\\
0.29 & 0.31
\end{bmatrix}$ and $\beta = \begin{bmatrix}
    0.88\\
0.46
\end{bmatrix}$. 
We compute:
\begin{itemize}
    \item $V(1,1) = \{ 1 \}$, $\theta_{11} = 0.47$, $\zeta_{11} = 0$,
    \item $V(1,2) = \{  1,2\}$, $\theta_{12} = 0.81$, $\zeta_{12} = 0$, 
     \item $V(2,1) = \{ 1,2  \}$, $\theta_{21} = 0.59$, $\zeta_{21} = 0$, 
      \item $V(2,2) = \{  2\}$, $\theta_{22} = 0.15$, $\zeta_{22} = 0$.
\end{itemize}
\noindent
We deduce that:
$$\tau_1 = 0.47, \quad 1 - \beta_1 = 0.12, \quad \nabla_1 = \min(0.12 , 0.47) = 0.12 = 1 - \beta_1\in E_1$$
$$\tau_2 = 0.15, \quad 1 - \beta_2 = 0.54, \quad \nabla_2 = \min(0.54 ,   0.15) = 0.15 < 1 - \beta_2.$$

We have $\nabla = \max(\nabla_1,\nabla_2) = 0.15$. We note that for all $j \in \{1,2\}$, we have ${\cal F}_j = \emptyset$. But, we remark that:

\noindent
$\bullet\,$ We have $\nabla_1 \in E_1$  although ${\cal F}_1 = \emptyset$ because $\nabla_1 = 1 - \beta_1$.

\noindent
$\bullet\,$ We have $\nabla_2 \notin E_2$  because 
$\nabla_2 < 1 - \beta_2$ and ${\cal F}_2 = \emptyset$ which implies $\nabla_2 < \widetilde{\nabla_2} = 1$.

\noindent $\bullet\,$ Therefore, we have $\nabla = \max(\nabla_1,\nabla_2) = \nabla_2 \notin E_2$. So, as  $E = E_1 \cap E_2$,  we have $\nabla \notin E$ i.e., the set $E$ does not admit a minimum element. 
\end{example}
As we have seen, the Chebyshev distance $\nabla$ may be an infimum. Therefore, as one might expect, we have: 
\begin{lemma}\label{lemma:minGchebempty}
If $\nabla \notin E$, then the set of Chebyshev approximations $\mathcal{D}_\beta$, see (\ref{eq:setDb}), of the second member $\beta$ of the system $\Gamma \Box_{\rightarrow_G}^{\min}x = \beta$ is empty.  
\end{lemma}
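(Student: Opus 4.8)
The plan is to prove the contrapositive: assuming the set of Chebyshev approximations $\mathcal{D}_\beta$ is non-empty, I will show that $\nabla$ must belong to $E$. Since $\nabla = \inf E$ by (Theorem \ref{th1}), this amounts to showing the infimum is attained, i.e.\ $\nabla = \min E$. The argument is essentially a specialization of the second step ($\nabla' \leq \nabla$) in the proof of (Theorem \ref{th1}), applied to a minimizing element rather than to an arbitrary $d \in \mathcal{D}$.

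First I would pick any $d \in \mathcal{D}_\beta$. By definition (see (\ref{eq:setDb})), this means $d \in \mathcal{D}$ and $\Vert \beta - d \Vert = \nabla$. In particular $\Vert \beta - d \Vert \leq \nabla$, so applying the equivalence (\ref{ineq:bbbar}) with $\delta = \nabla$ yields the double inequality $\underline{\beta}(\nabla) \leq d \leq \overline{\beta}(\nabla)$.

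Next, since $d \in \mathcal{D}$, the characterization (\ref{qu:eq0}) (equivalently (\ref{def:setofsecondmembersBeta0})) gives $G(d) = d$. Applying the increasing property of $G$ (Proposition \ref{propo2}) to the left inequality $\underline{\beta}(\nabla) \leq d$, and then using $G(d) = d$ together with the right inequality $d \leq \overline{\beta}(\nabla)$, I obtain
$$G(\underline{\beta}(\nabla)) \leq G(d) = d \leq \overline{\beta}(\nabla).$$
Hence $G(\underline{\beta}(\nabla)) \leq \overline{\beta}(\nabla)$, which is exactly the defining condition for $\nabla \in E$ (see (\ref{qu:ETot})). This contradicts the hypothesis $\nabla \notin E$, so no such $d$ can exist and $\mathcal{D}_\beta = \emptyset$.

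There is no real technical obstacle here: the statement follows directly from the monotonicity of $G$, the fixed-point characterization of consistent second members, and the $L_\infty$-ball equivalence (\ref{ineq:bbbar}). The only point requiring care is the logical direction — working through the contrapositive — and noting that it is precisely because $d$ attains the distance $\nabla$ that (\ref{ineq:bbbar}) confines $d$ to the band between $\underline{\beta}(\nabla)$ and $\overline{\beta}(\nabla)$ at the exact radius $\nabla$, which is what forces $\nabla$ into $E$.
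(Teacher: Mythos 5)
Your proof is correct and follows essentially the same route as the paper's: take $d \in \mathcal{D}_\beta$, use the equivalence (\ref{ineq:bbbar}) to trap $d$ between $\underline{\beta}(\nabla)$ and $\overline{\beta}(\nabla)$, then apply the monotonicity of $G$ (Proposition \ref{propo2}) and the fixed-point property $G(d)=d$ to conclude $G(\underline{\beta}(\nabla)) \leq \overline{\beta}(\nabla)$, i.e.\ $\nabla \in E$. The paper phrases this as a direct contradiction rather than a contrapositive, but the mathematical content is identical.
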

\begin{proof}
We prove that  $\mathcal{D}_\beta$ is empty by contradiction. \\
Assume that we have an element $d \in \mathcal{D}_\beta$ i.e., $G(d) = d$ and $\Vert d - \beta \Vert = \nabla$. 
From the inequality (\ref{ineq:bbbar}), we deduce:
\begin{equation*}
    \underline{\beta}(\nabla) \leq d \leq \overline{\beta}(\nabla),
\end{equation*}
\noindent Then, by applying the increasing application $G$ to the inequality $\underline{\beta}(\nabla) \leq d$, we have:
\begin{equation*}
    G(\underline{\beta}(\nabla)) \leq G(d) = d \leq \overline{\beta}(\nabla) 
\end{equation*}
Then, we have $G(\underline{\beta}(\nabla)) \leq \overline{\beta}(\nabla)$ i.e., $\nabla \in E$, which is contradiction. 
\end{proof}

In the next two sections, we will show that for  $\min-\rightarrow_{GG}$  and $\min-\rightarrow_L$ systems, the sets $E_j$ always admit a minimum element, i.e., we always have $\nabla_j \in E_j$ and therefore by (Lemma \ref{lemdec}) and (Proposition \ref{prop:nablafunc}), we have  $\nabla \in E$. 

\section{\texorpdfstring{Chebyshev distance associated to the second member of systems based on $\min-\rightarrow_{GG}$ composition}{Chebyshev distance associated to the second member of a system of min→goguen fuzzy relational equations}}
\label{sec:mingoguen}
In this section,  we study the Chebyshev distance $\nabla$ associated to the second member of a system of $\min-\rightarrow_{GG}$ fuzzy relational equations $(\Sigma) : \Gamma  \Box_{\rightarrow_{GG}}^{\min} x =  \beta$, see (\ref{eq:minsys}), where $\rightarrow_{GG}$ is the Goguen implication, see (\ref{eq:tnormproduct}). 

The proofs of some (but not all) of the statements in this section are similar to those of some of the results in the previous section.
To avoid repeating the same arguments in the proofs, we have retained the notations $\nabla_j , \theta_{ji}$ and $\zeta_{ji}$ from the previous section, whose definitions are here adapted to the case of the $\min - \rightarrow_{GG}$ system. They differ from those given for the $\min - \rightarrow_{G}$ system in the previous section.

\noindent The formula for computing the Chebyshev distance associated to the second member of a system of $\min - \rightarrow_{GG}$ fuzzy relational equations  is established in (Theorem \ref{thformingg}). Thanks to the choice of the same notations ($\nabla_j, \theta_{ji}$ and $\zeta_{ji}$)  for the $\min - \rightarrow_{G}$ system  and the $\min - \rightarrow_{GG}$ system, (Theorem \ref{thformingg}) is formulated in the same way as  (Theorem \ref{th1}) for the  $\min - \rightarrow_{G}$ system.

\noindent We remind that we have $\nabla = \max_{1\leq j \leq m} \nabla_j$ where $\nabla_j = \inf E_j$ (Proposition \ref{prop:nablafunc}) and the set $E_j$ is defined in (\ref{eq:setEj}). Unlike the  $\min - \rightarrow_{G}$ system, which we studied in the previous section, we will show that in the case of the  $\min - \rightarrow_{GG}$ system, for all $j \in \{1,2,\cdots,m\}$, we always have $\nabla_j \in E_j$, i.e., $\nabla_j$ is always the minimum element of the set $E_j$ although the Goguen's implication $\rightarrow_{GG}$ is not continuous with respect to its two variables. By applying (Lemma \ref{lemdec}) and (Proposition \ref{prop:nablafunc}), we obtain  $\nabla = \min E$ (Theorem \ref{th:minGGisMin}), where $E$ is defined in (\ref{qu:ETot}). Therefore,  in the case of a $\min-\rightarrow_{GG}$ system, we will show that  the Chebyshev distance $\nabla$ (Definition \ref{def:chebyshevdistMinfleche}) is always a minimum, see (Corollary \ref{cor:minGGmin}). In fact, we can always compute the lowest Chebyshev approximation of the second member
and an approximate solution of the system $(\Sigma)$, see (Corollary \ref{cor:minGGcheb}).

We begin by introducing a new function based on the function $\sigma_{GG}$, see (\ref{eq:sigmagg}), which was introduced in \cite{baaj2023chebyshev}:
\begin{equation}\label{eq:func:p}
    P(u,x,y,z) = \begin{cases}
        0 & \text{ if } u = 0,\\
        0& \text{ if } y = 0,\\
        \sigma_{GG}(\dfrac{u}{y}, x, 1, z)  & \text{ if } u > 0 \text{ and } y > 0 
    \end{cases}.
\end{equation}
\noindent We have:
\begin{equation}\label{eq:egalitesursigmagg}
    \sigma_{GG}(\dfrac{u}{y} , x , 1 , z) = \max[(x - \dfrac{u}{y})^+ , \min(\dfrac{(x y -  u z)^+}{u + y} , 1 - z)].
\end{equation}

\noindent We use the following notations:
\begin{notation}
For $1 \leq j \leq m, 1 \leq i \leq n$, to each coefficient $\gamma_{ji}$ of the matrix $\Gamma$ of the system $(\Sigma)$, see (\ref{eq:minsys}), we associate:
\begin{itemize}
    \item $W(j , i) = \{l\in \{1 , 2 , \dots , m\}\,\mid \, \gamma_{ji} \leq \gamma_{li} \,\text{ and }\, \gamma_{li} > 0\}$,
    \item $\theta_{ji} = \max_{l\in W(j , i)} (\beta_l - \dfrac{\gamma_{ji}}{\gamma_{li}})$ with the convention $\max\limits_{\emptyset} = 0$,
    \item $\zeta_{ji} = \max_{1 \leq l \leq m}P(\gamma_{ji} , \beta_l , \gamma_{li} , \beta_j)$.
\end{itemize}
\end{notation}
\noindent The function $P$, see (\ref{eq:func:p}), allows us to solve a particular inequality: 
\begin{lemma}\label{l3} For all $u ,x , y , z , \delta\in [0 , 1]$ with $u > 0$  and $\delta < 1 - z$,  we have:
$$ \dfrac{ y \, \underline{x}(\delta)}{u}    \leq \overline{z}(\delta) \,
\Longleftrightarrow \,
P(u , x , y   , z)\leq \delta.$$
\end{lemma}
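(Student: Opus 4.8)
The plan is to reduce the scalar inequality to a single threshold condition on $\delta$ and then to identify that threshold with $P(u,x,y,z)$, using the hypothesis $\delta < 1-z$ at two separate points. First I would dispose of the degenerate case $y = 0$, where both statements hold trivially: the left-hand side is $0 \le \overline{z}(\delta)$ and, by definition (\ref{eq:func:p}), $P(u,x,y,z) = 0 \le \delta$. So I may assume $y > 0$ (while $u > 0$ is part of the hypothesis), and record the two simplifications that the constraints provide: since $\delta < 1-z$ we have $\overline{z}(\delta) = z+\delta$, and in general $\underline{x}(\delta) = (x-\delta)^+$.

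Next I would rewrite $P$ by means of the explicit expression (\ref{eq:egalitesursigmagg}). Setting $B = (x - \tfrac{u}{y})^+$ and $D = \tfrac{(xy-uz)^+}{u+y}$, this reads $P(u,x,y,z) = \max\bigl(B,\, \min(D, 1-z)\bigr)$. The first key observation is that, under $\delta < 1-z$, the inner minimum simplifies: $\min(D, 1-z) \le \delta$ reduces to $D \le \delta$, the alternative $1-z \le \delta$ being excluded by hypothesis. Hence
\[
P(u,x,y,z) \le \delta \iff B \le \delta \ \text{ and }\ D \le \delta .
\]

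I would then solve the scalar inequality $\tfrac{y(x-\delta)^+}{u} \le z+\delta$ directly, by the sign of $x-\delta$: when $x \le \delta$ the left-hand side vanishes and the inequality holds, while when $x > \delta$ it is equivalent, after clearing the positive factor $u$, to $xy - uz \le (u+y)\delta$, i.e. $\delta \ge D$. Since one checks that $D \le x$ in all cases, both branches collapse to the clean criterion
\[
\frac{y\,\underline{x}(\delta)}{u} \le \overline{z}(\delta) \iff \delta \ge D .
\]

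It remains to reconcile this criterion with ``$B \le \delta$ and $D \le \delta$''. One implication is immediate, since $B \le \delta$ and $D \le \delta$ give $\delta \ge D$. The reverse --- recovering $B \le \delta$ from $\delta \ge D$ --- is the delicate point, and is exactly where $\delta < 1-z$ is used a second time. The crux is the algebraic equivalence $B \le D \iff x+z-1 \le \tfrac{u}{y}$, to be set against $D < 1-z \iff x+z-1 < \tfrac{u}{y}$ in the regime $xy > uz$ (the regime $xy \le uz$ forces $B = D = 0$). Together these show $D < 1-z \Rightarrow B \le D$; so whenever $\delta \ge D$ and $\delta < 1-z$, we get $D < 1-z$, hence $B \le D \le \delta$, which finishes the proof. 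I expect this comparison of the two thresholds $B$ and $D$ against the ceiling $1-z$ to be the main obstacle; everything else is routine case analysis.
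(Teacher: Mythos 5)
Your proof is correct, but it takes a genuinely different route from the paper's. The paper disposes of the lemma in a few lines by reduction to an external result: since $\delta < 1-z$ forces $\overline{z}(\delta) = z+\delta < 1$, one has $1 \rightarrow_{GG} \overline{z}(\delta) = \overline{z}(\delta)$, so the left-hand inequality becomes $\underline{x}(\delta) \leq \frac{u}{y}\,(1 \rightarrow_{GG} \overline{z}(\delta))$, which is exactly the situation covered by the equivalence (\ref{eq:trucpiqueachebmaxprod}) quoted from Proposition 2 of \cite{baaj2023chebyshev}; that equivalence immediately gives $\sigma_{GG}(\frac{u}{y},x,1,z) \leq \delta$, i.e.\ $P(u,x,y,z)\leq\delta$. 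You instead re-derive everything from scratch: you solve the scalar inequality directly to obtain the single-threshold criterion $\delta \geq D$ with $D = \frac{(xy-uz)^+}{u+y}$ (using $D \leq x$ to merge the two sign branches), and then show that under $\delta < 1-z$ the term $B = (x-\frac{u}{y})^+$ appearing in $P$ is automatically dominated, via the two algebraic equivalences $B \leq D \iff x+z-1 \leq \frac{u}{y}$ and $D < 1-z \iff x+z-1 < \frac{u}{y}$ in the regime $xy > uz$. I checked these computations (clearing the positive factors $u$ and $u+y$, and the degenerate regimes $y=0$ and $xy \leq uz$) and they are sound. What the paper's route buys is brevity, at the price of depending on the earlier max-product paper; what your route buys is a self-contained, elementary proof, plus a sharper intermediate fact the paper never makes explicit: under the standing hypothesis $\delta < 1-z$, the left-hand inequality is equivalent to $\delta \geq D$ alone, so the $(x-\frac{u}{y})^+$ component of $P$ is redundant in this regime.
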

\begin{proof}\mbox{}\\
We remind that if $y > 0$,     the following equivalence was established in (Proposition 2) of  \cite{baaj2023chebyshev}:  
\begin{equation}\label{eq:trucpiqueachebmaxprod}
    \underline{x}(\delta) \leq \dfrac{u}{y} (1 \rightarrow_{GG} \overline{z}(\delta)) \,\Longleftrightarrow\,
\sigma_{GG}(\dfrac{u}{y} , x , 1 , z) \leq \delta.
\end{equation}

$\Longrightarrow$:\\
If $y = 0$, we have 
$$P(u , x , y , z) = 0 \leq  \delta.$$
If $y > 0$, from the inequality $\overline{z}(\delta) < 1$ , we deduce  : 
$$\underline{x}(\delta) \leq \dfrac{u}{y} \overline{z}(\delta) = \dfrac{u}{y} (1 \rightarrow_{GG} \overline{z}(\delta)).$$
So, by (\ref{eq:trucpiqueachebmaxprod}), we have $P(u , x , y , z) = \sigma_{GG}(\dfrac{u}{y} , x , 1 , z)\leq \delta$.

\noindent
$\Longleftarrow$:\\
Suppose that 
$P(u , x , y , z) \leq \delta$. 

$\bullet$  
If $y = 0$, the inequality $\dfrac{ y \, \underline{x}(\delta)}{u}    \leq \overline{z}(\delta)$ is trivial.

$\bullet$
If $y > 0$, since $\overline{z}(\delta) < 1$, the inequality to prove is:  
$$\underline{x}(\delta) \leq \dfrac{u}{y}   \overline{z}(\delta)) =  \dfrac{u}{y} (1 \rightarrow_{GG} \overline{z}(\delta)).$$
As we have $P(u , x , y , z) = \sigma_{GG}(\dfrac{u}{y} , x , 1 , z) \leq \delta$, by (\ref{eq:trucpiqueachebmaxprod}) the inequality  
$\underline{x}(\delta) \leq   \dfrac{u}{y} (1 \rightarrow_{GG} \overline{z}(\delta))$ is verified.
\end{proof}
\noindent We illustrate this result:

\begin{example}
 Let $u=0.3, x=0.6, y = 0.4$ and $z=0.2$. We compute $\delta := P(0.3,0.6,0.4,0.2) = \dfrac{xy - uz}{u + y}$ then $\dfrac{ y \, \underline{x}(\delta)}{u}  = \overline{z}(\delta)$ so the inequality is satisfied with $\delta \simeq 0.257$.

\end{example}
We establish  the following version of (Lemma \ref{lemma:equivmin})  for the $\min-\rightarrow_{GG}$ system: 
\begin{lemma}\label{lemma:sametrucformingg}
Let $j\in \{1 , 2 , \dots , m\}$. 
for all $\delta \in [0 , 1]$ such that $\delta  < 1 - \beta_j$, we have the following equivalence between the following two statements:
\begin{enumerate}
\item $G(\underline{\beta}(\delta))_j \leq  \overline{\beta}(\delta)_j$, 
\item there exists $i\in \{1 , 2 , \dots, n\}$ such that:
$$ \gamma_{ji} > 0, \quad  \theta_{ji}   < \delta, \quad   \zeta_{ji}  \leq  \delta.$$ 
\end{enumerate}
\end{lemma}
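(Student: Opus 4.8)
The plan is to mirror the proof of (Lemma \ref{lemma:equivmin}) for the Gödel case, substituting the Goguen implication $\rightarrow_{GG}$ and the associated quantities $\theta_{ji}, \zeta_{ji}, W(j,i)$, and using (Lemma \ref{l3}) in place of (Lemma \ref{lemma:godel:solvineq}) to convert the scalar inequalities into conditions on $\zeta_{ji}$. First I would write out $G(\underline{\beta}(\delta))_j = \min_{1\leq i \leq n}\bigl(\gamma_{ji} \rightarrow_{GG} \max_{1\leq l \leq m}\min(\gamma_{li},\underline{\beta}(\delta)_l)\bigr)$, noting that the t-norm here is the product, so $T(\gamma_{li},\underline{\beta}(\delta)_l)$ should in fact read as the product; I would double-check which composition enters $\Gamma^t \Box_T^{\max}$ and adapt the inner expression accordingly, since in the Goguen setting the $\min$ inside is replaced by the product $\gamma_{li}\cdot\underline{\beta}(\delta)_l$.

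For the direction $\Longrightarrow$, the hypothesis $G(\underline{\beta}(\delta))_j \leq \beta_j + \delta < 1$ forces some index $i$ with $\gamma_{ji} \rightarrow_{GG}(\max_l \gamma_{li}\cdot\underline{\beta}(\delta)_l) < 1$, which by the definition of $\rightarrow_{GG}$ means $\gamma_{ji} > \max_l \gamma_{li}\cdot\underline{\beta}(\delta)_l$ and the ratio $\tfrac{1}{\gamma_{ji}}\max_l \gamma_{li}\cdot\underline{\beta}(\delta)_l \leq \beta_j+\delta$. This immediately gives $\gamma_{ji} > 0$. I would then extract, for each $l$, the inequality $\tfrac{\gamma_{li}\,\underline{\beta}(\delta)_l}{\gamma_{ji}} \leq \overline{\beta}(\delta)_j$ and feed it to (Lemma \ref{l3}) (with $u = \gamma_{ji}$, $x = \beta_l$, $y = \gamma_{li}$, $z = \beta_j$, using $\delta < 1-\beta_j$) to conclude $P(\gamma_{ji},\beta_l,\gamma_{li},\beta_j) \leq \delta$, hence $\zeta_{ji} \leq \delta$. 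For $\theta_{ji} < \delta$, I would restrict to $l \in W(j,i)$, where $\gamma_{ji}\leq\gamma_{li}$ and $\gamma_{li}>0$, and show that $\gamma_{ji} > \gamma_{li}\cdot\underline{\beta}(\delta)_l$ combined with $\gamma_{ji}\leq\gamma_{li}$ yields $\underline{\beta}(\delta)_l < \tfrac{\gamma_{ji}}{\gamma_{li}}$, which unpacks to $\beta_l - \tfrac{\gamma_{ji}}{\gamma_{li}} < \delta$.

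For the converse $\Longleftarrow$, given $i$ with $\gamma_{ji}>0$, $\theta_{ji}<\delta$, $\zeta_{ji}\leq\delta$, I would bound $G(\underline{\beta}(\delta))_j$ above by the $i$-th term and use $x \rightarrow_{GG}\max(y,z) = \max(x\rightarrow_{GG}y, x\rightarrow_{GG}z)$ to reduce to proving $\gamma_{ji}\rightarrow_{GG}(\gamma_{li}\cdot\underline{\beta}(\delta)_l) \leq \overline{\beta}(\delta)_j$ for each $l$. I would split into $l \in W(j,i)$ and $l \in W(j,i)^c$ exactly as in the Gödel proof: in the second case $\gamma_{li}\cdot\underline{\beta}(\delta)_l < \gamma_{ji}$ need not hold directly, so one argues that either $\gamma_{ji}\rightarrow_{GG}(\cdot)$ already does not exceed the target, or the product stays below $\gamma_{ji}$ and (Lemma \ref{l3}) applied via $\zeta_{ji}\leq\delta$ closes the bound. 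The main obstacle I anticipate is the careful handling of the boundary cases where $\gamma_{li}=0$ (so $l\notin W(j,i)$ regardless of the size comparison) and where $\underline{\beta}(\delta)_l = 0$, since these are precisely the situations the piecewise definition of $P$ in (\ref{eq:func:p}) is designed to absorb; verifying that the convention $\max_\emptyset = 0$ for $\theta_{ji}$ and the zero-branches of $P$ make every degenerate case consistent with the equivalence is the delicate bookkeeping that must be done correctly.
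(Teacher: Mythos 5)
Your proposal is correct and follows essentially the same route as the paper: expand $G(\underline{\beta}(\delta))_j$ with the product t-norm inside, use (Lemma \ref{l3}) with $u=\gamma_{ji}$, $x=\beta_l$, $y=\gamma_{li}$, $z=\beta_j$ to convert the scalar inequalities into $P(\gamma_{ji},\beta_l,\gamma_{li},\beta_j)\leq\delta$ (hence $\zeta_{ji}\leq\delta$), extract $\theta_{ji}<\delta$ from $\gamma_{ji}>\gamma_{li}\,\underline{\beta}(\delta)_l$ on $W(j,i)$, and in the converse bound by the $i$-th term, distribute $\rightarrow_{GG}$ over $\max$, and split over $W(j,i)$ versus $W(j,i)^c$. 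The one small inaccuracy is that in the converse the case $l\in W(j,i)^c$ is the automatic one, since $\gamma_{li}\,\underline{\beta}(\delta)_l\leq\gamma_{li}<\gamma_{ji}$ holds directly (it is the case $l\in W(j,i)$ that needs $\theta_{ji}<\delta$ to force the product below $\gamma_{ji}$), but your stated fallback closes that case exactly as the paper does, so nothing is missing.
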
  
\begin{proof}\mbox{}\\
$\Longrightarrow$:\\ 
We have:\\
$$G(\underline{\beta}(\delta))_j  = \min_{1 \leq i \leq n}\gamma_{ji} \rightarrow_{GG} [\max_{1 \leq l \leq m} \gamma_{li}\cdot \underline{\beta}(\delta)_l]\leq 
\overline{\beta}(\delta)_j = \beta_j + \delta < 1.$$ 
We deduce that there exists $i\in \{1 , 2 , \dots  n\}$ such that:  
$$\gamma_{ji} \rightarrow_{GG} [\max_{1 \leq l \leq m} \gamma_{li}\, \underline{\beta}(\delta)_l] =.
G(\underline{\beta}(\delta))_j   \leq \overline{\beta}(\delta)_j = \beta_j + \delta < 1$$
which implies 
$$\gamma_{ji} > \max_{1 \leq l \leq m} \gamma_{li}\, \underline{\beta}(\delta)_l\quad \text{ and } \quad 
\gamma_{ji} \rightarrow_{GG} [\max_{1 \leq l \leq m} \gamma_{li}\, \underline{\beta}(\delta)_l]=\dfrac{1}{\gamma_{ji}} \max_{1 \leq l \leq m} \gamma_{li}\, \underline{\beta}(\delta)_l\leq  \beta_j + \delta$$
so we have
\begin{equation}\label{qu:eq1}\gamma_{ji} > 0\quad  \text{and}
\quad  \forall l\in \{1 , 2 , \dots , m\}\,,\, \gamma_{ji} >  \gamma_{li}\, \underline{\beta}(\delta)_l \quad \text{ and }\quad     \dfrac{\gamma_{li}\, \underline{\beta}(\delta)_l}{\gamma_{ji}} \leq  \beta_j + \delta = \overline{\beta}(\delta)_j < 1.\end{equation}

\noindent By (Lemma \ref{l3}), since $\delta < 1 - \beta_j$, we know that for all $l\in\{1 
 , 2 , \dots ,m\}$, we have:
 $$\dfrac{\gamma_{li}\, \underline{\beta}(\delta)_l}{\gamma_{ji}} \leq  \beta_j + \delta =  \overline{\beta}(\delta)_j
 \Longleftrightarrow
 P(\gamma_{ji} , \beta_l , \gamma_{li} , \beta_j)  \leq \delta$$
so $\zeta_{ji} := \max_{1 \leq l \leq m} P(\gamma_{ji} , \beta_l , \gamma_{li} , \beta_j) \leq \delta$. 

To establish the inequality  $\theta_{ji} < \delta$, we just have to show that we have:
$$\forall l\in W(j , i) \, , \,   \beta_l -   \dfrac{\gamma_{ji}}{\gamma_{li}} <  \delta.   $$
(We have $W(j , i) \not=\emptyset$, because $\gamma_{ji} > 0$, so $j\in W(j , i)$)

Let  $l\in W(j , i)$ i.e.,   
  $0 
 < \gamma_{ji} \leq \gamma_{li}$, and we deduce from (\ref{qu:eq1}):    
$$     \dfrac{\gamma_{li}}{\gamma_{ji}}\, \underline{\beta}(\delta)_l = \max(\dfrac{\gamma_{li}}{\gamma_{ji}} \beta_l - \dfrac{\gamma_{li}}{\gamma_{ji}} \delta , 0 ) < 1.$$  
Therefore, we have:
$$\dfrac{\gamma_{li}}{\gamma_{ji}} \beta_l - \dfrac{\gamma_{li}}{\gamma_{ji}} \delta < 1 $$
so $\beta_l  -   \dfrac{\gamma_{ji}}{\gamma_{li}}  < \delta$ and 
$\theta_{ji} : = \max_{l\in W(j , i)} (\beta_l  -   \dfrac{\gamma_{ji}}{\gamma_{li}})  < \delta.$
\noindent

$\Longleftarrow$: \\ 
Let $i\in \{1 , 2 , \dots, n\}$ such that 
$$ \gamma_{ji} > 0, \quad  \theta_{ji}   < \delta, \quad   \zeta_{ji}  \leq  \delta.$$
We have:  
$$G(\underline{\beta}(\delta))_j  = \min_{1 \leq i' \leq n}\gamma_{ji'} \rightarrow_{GG} [\max_{1 \leq l \leq m} \gamma_{li'}\, \underline{\beta}(\delta)_l]
\leq  \gamma_{ji} \rightarrow_{GG} [\max_{1 \leq l \leq m} \gamma_{li}\, \underline{\beta}(\delta)_l]$$
and it suffices to show that:
$$\gamma_{ji} \rightarrow_{GG} [\max_{1 \leq l \leq m} \gamma_{li}\, \underline{\beta}(\delta)_l] \leq \overline{\beta}(\delta)_j    $$
From the general equality    $x \rightarrow_{GG} \max(y , z) = \max(x \rightarrow_{GG} y , x \rightarrow_{GG} z)$, we deduce:
$$\gamma_{ji} \rightarrow_{GG} [\max_{1 \leq l \leq m} \gamma_{li}\, \underline{\beta}(\delta)_l]  =  \max_{1 \leq l \leq m}  \gamma_{ji}  \rightarrow _{GG}  \gamma_{li}\, \underline{\beta}(\delta)_l.$$
We will establish that 
\begin{equation}\label{eq:mingggammajiineqproof}
    \forall l \in\{1 , 2 , \dots , m\}\,,\, \gamma_{ji}  \rightarrow_{GG}  \gamma_{li}\, \underline{\beta}(\delta)_l \leq 
\overline{\beta}(\delta)_j   = \beta_j + \delta.
\end{equation}

\noindent
Let $l \in\{1 , 2 , \dots , m\}$. We distinguish the following two cases:
\begin{itemize}
    \item Suppose that $l\in W(j , i)$, i.e., $  \gamma_{ji} \leq \gamma_{li}$. We have   $\gamma_{li} > 0$. Then, we have 
$$\beta_l -  \dfrac{\gamma_{ji}}{\gamma_{li}}   \leq  \theta_{ji} < \delta.$$
We deduce: 
$$\dfrac{\gamma_{li}}{\gamma_{ji}}\beta_l -  \dfrac{\gamma_{li}}{\gamma_{ji}} \delta < 1$$.
so 
$$\dfrac{\gamma_{li}}{\gamma_{ji}}\underline{\beta}(\delta)_l          = \max(\dfrac{\gamma_{li}}{\gamma_{ji}}\beta_l  - \dfrac{\gamma_{li}}{\gamma_{ji}} \delta , 0)  < 1$$
which leads to 
$$\gamma_{ji}  \rightarrow_{GG}  \gamma_{li}\, \underline{\beta}(\delta)_l = \dfrac{\gamma_{li}}{\gamma_{ji}} \underline{\beta}(\delta)_l.$$
By the hypothesis, we have $P(\gamma_{ji}  , \beta_l , \gamma_{li} , \beta_j) \leq   \zeta_{ji} \leq \delta$.

As  $\gamma_{ji} > 0$ and $\delta < 1 - \beta_j$ , by (Lemma \ref{l3}), we deduce: 
$$\dfrac{\gamma_{li}}{\gamma_{ji}} \underline{\beta}(\delta)_l \leq \beta_j + \delta.$$
So, in this case, (\ref{eq:mingggammajiineqproof}) is proved.
\item Suppose that $l\in W(j , i)^c$ i.e. $  \gamma_{ji} >  \gamma_{li}$.  Since $\gamma_{li} \geq \gamma_{li} \underline{\beta}(\delta)_l$,  we  have:  
$$\gamma_{ji}  \rightarrow_{GG}  \gamma_{li}\, \underline{\beta}(\delta)_l = \dfrac{\gamma_{li}}{\gamma_{ji}} \underline{\beta}(\delta)_l.$$
As in the previous case, by the inequality  $P(\gamma_{ji}  , \beta_l , \gamma_{li} , \beta_j) \leq   \zeta_{ji} \leq \delta$ and (Lemma \ref{l3}) we then deduce:
$$\gamma_{ji}  \rightarrow _{GG}  \gamma_{li}\, \underline{\beta}(\delta)_l = \dfrac{\gamma_{li}}{\gamma_{ji}} \underline{\beta}(\delta)_l \leq \beta_j + \delta.$$
So, we proved (\ref{eq:mingggammajiineqproof}).
\end{itemize}

\end{proof}

\noindent

In the previous section, for the $\min-\rightarrow_G$ system,  (Lemma \ref{lemma:equivmin}) has allowed us to prove the results stated in (Proposition \ref{proposition:minusunmoinsbetamaxthetazeta}), (Corollary \ref{corollary:c1}), (Proposition \ref{enplus1}) and (Theorem \ref{theorem:chebming}).  Thanks to the choice of notations $E_j$, $\nabla_j$, $\theta_{ji}$ and $\zeta_{ji}$ which are used for the $\min-\rightarrow_G$ system and the $\min-\rightarrow_{GG}$ system, and  using (Lemma \ref{lemma:sametrucformingg}), it is easy to establish the same results stated in  (Proposition \ref{proposition:minusunmoinsbetamaxthetazeta}), (Corollary \ref{corollary:c1}), (Proposition \ref{enplus1}) and (Theorem \ref{theorem:chebming}) for the  $\min-\rightarrow_{GG}$ system. To avoid unnecessary repetition, we only state a new theorem below (Theorem \ref{thformingg}), which corresponds to (Theorem \ref{theorem:chebming}) for the $\min-\rightarrow_{GG}$ system. Both theorems have the same proof.

We then give the explicit analytical formula for computing the Chebyshev distance $\nabla$ (see (Definition \ref{def:chebyshevdistMinfleche})   and  (Proposition \ref{prop:nablafunc})), associated to the second member $\beta$ of a system of $\min-\rightarrow_{GG}$ fuzzy relational equations $(\Sigma) : \Gamma  \Box_{\rightarrow_{GG}}^{\min} x =  \beta$, see (\ref{eq:minsys}):
\begin{theorem}\label{thformingg}
$$\nabla = \max_{1 \leq j \leq m} \nabla_j \quad  \text{where for }  j \in \{1,2,\dots,m\}, \quad  \nabla_j = 
\min(1 - \beta_j , \tau_j)$$
\noindent $$\text{ with } \quad
\tau_j = \min_{i\in {\cal A}_j}\, \max(\theta_{ji}, \zeta_{ji}) \quad \text{ and the convention }  \min_\emptyset = 1.$$ 
\end{theorem}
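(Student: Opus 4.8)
The plan is to reuse the proof architecture of (Theorem \ref{theorem:chebming}) essentially verbatim, the only substitution being that every appeal to (Lemma \ref{lemma:equivmin}) is replaced by an appeal to its Goguen counterpart (Lemma \ref{lemma:sametrucformingg}). Since the notations $E_j$, $\nabla_j$, $\theta_{ji}$, $\zeta_{ji}$ and $\mathcal{A}_j$ have been deliberately preserved (with $\theta_{ji}$ and $\zeta_{ji}$ silently redefined for the $\min-\rightarrow_{GG}$ case), and since (Lemma \ref{lemma:sametrucformingg}) delivers exactly the same equivalence — for $\delta < 1 - \beta_j$, the inequality $G(\underline{\beta}(\delta))_j \leq \overline{\beta}(\delta)_j$ holds iff some $i$ satisfies $\gamma_{ji} > 0$, $\theta_{ji} < \delta$ and $\zeta_{ji} \leq \delta$ — the downstream results transfer unchanged.

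Concretely, I would first observe that the Goguen analogues of (Proposition \ref{proposition:minusunmoinsbetamaxthetazeta}), (Corollary \ref{corollary:c1}) and (Proposition \ref{enplus1}) hold verbatim. Their proofs rely only on (i) the equivalence of (Lemma \ref{lemma:equivmin}), now supplied by (Lemma \ref{lemma:sametrucformingg}); (ii) the implicator-independent (Lemma \ref{lemma:nablajvsunmoinsbeta}); and (iii) generic facts about infima together with the finite pigeonhole / stationary-subsequence argument of (Theorem 3.6) of \cite{rudin1976}. Because the redefined $\theta_{ji}$ and $\zeta_{ji}$ remain fixed scalars independent of $\delta$, the passage to the limit in (Proposition \ref{proposition:minusunmoinsbetamaxthetazeta}) is unaffected and still yields an index $i^0\in\mathcal{A}_j$ with $\theta_{ji^0}\leq\nabla_j$ and $\zeta_{ji^0}\leq\nabla_j$, hence the implication $\nabla_j < 1 - \beta_j \Longrightarrow \nabla_j = \tau_j$ of (Proposition \ref{enplus1}).

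Granting these, I would reproduce the case analysis of (Theorem \ref{theorem:chebming}) on $\nabla_j = \inf E_j$. If $\beta_j = 1$, then (Lemma \ref{lemma:nablajvsunmoinsbeta}) gives $\nabla_j = 0 = \min(1-\beta_j,\tau_j)$. If $\beta_j < 1$ and $\mathcal{A}_j = \emptyset$, then $\nabla_j = 1 - \beta_j$ and $\tau_j = 1$ by the convention $\min_\emptyset = 1$, so again $\nabla_j = \min(1-\beta_j,\tau_j)$. If $\beta_j < 1$ and $\mathcal{A}_j \neq \emptyset$, I would establish $\nabla_j \leq \tau_j$ by contradiction: assuming $\tau_j < \nabla_j$ and picking $\delta$ with $\tau_j < \delta < \nabla_j \leq 1 - \beta_j$, the minimizing index $i$ satisfies $\gamma_{ji} > 0$ and $\max(\theta_{ji},\zeta_{ji}) < \delta < 1-\beta_j$, so (Lemma \ref{lemma:sametrucformingg}) forces $\delta \in E_j$, contradicting $\nabla_j = \inf E_j$. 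Combining $\nabla_j \leq \min(1-\beta_j,\tau_j)$ with $\nabla_j < 1-\beta_j \Rightarrow \nabla_j = \tau_j$ gives $\nabla_j = \min(1-\beta_j,\tau_j)$, and $\nabla = \max_{1\leq j \leq m}\nabla_j$ follows from (Proposition \ref{prop:nablafunc}).

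The hard part is not the present theorem but verifying that (Lemma \ref{lemma:sametrucformingg}) exposes precisely the interface the Gödel-case proofs consume. The single point warranting care is the domain restriction $\delta < 1 - \beta_j$ carried by (Lemma \ref{lemma:sametrucformingg}), inherited from the constraint $\delta < 1 - z$ in the definition of $P$, see (\ref{eq:func:p}): every invocation must occur strictly below $1 - \beta_j$. This is automatic, since in the limit argument the approximating sequence is chosen to stay strictly below $1 - \beta_j$, and in each contradiction argument $\delta$ is chosen below $\nabla_j \leq 1 - \beta_j$.
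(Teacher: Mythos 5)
Your proposal is correct and takes essentially the same approach as the paper: the paper itself gives no separate argument for (Theorem \ref{thformingg}), stating only that it and (Theorem \ref{theorem:chebming}) ``have the same proof'' once (Lemma \ref{lemma:sametrucformingg}) replaces (Lemma \ref{lemma:equivmin}), which is precisely the substitution you carry out, including the transfer of (Proposition \ref{proposition:minusunmoinsbetamaxthetazeta}), (Corollary \ref{corollary:c1}) and (Proposition \ref{enplus1}) to the Goguen setting. Your explicit check that every invocation of (Lemma \ref{lemma:sametrucformingg}) occurs strictly below $1-\beta_j$ is exactly the point the paper leaves implicit.
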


We illustrate this theorem:
\begin{example}
Let us reuse the matrix $\Gamma = \begin{bmatrix}
    0.6 & 0.49 \\ 
    0.26 & 0.9 \\ 
\end{bmatrix}$ and the vector $\beta = \begin{bmatrix}
    0.1\\
    0.4
\end{bmatrix}$. For $j=1$, we have:

\begin{align*}
    G(\underline{\beta}(\delta))_1 &= \min_{1 \leq i \leq 2}(\gamma_{1i}  \rightarrow_{GG} (\max_{1 \leq l \leq m}\gamma_{li}  \cdot  \underline{\beta}(\delta)_l))\\
&=\min(0.6  \rightarrow_{GG} (\max_{1 \leq l \leq 2}\gamma_{l1}  \cdot  \underline{\beta}(\delta)_l), 0.49  \rightarrow_{GG} (\max_{1 \leq l \leq 2}\gamma_{l2}  \cdot  \underline{\beta}(\delta)_l)) 
\end{align*}
\noindent We compute:
\begin{itemize}
    \item $\max_{1 \leq l \leq 2} \gamma_{l1} \cdot {\beta}_l = \max(0.6 \cdot 0.1, 0.26 \cdot 0.4)=0.10$,
    \item $\max_{1 \leq l \leq 2} \gamma_{l2}  \cdot  {\beta}_l = \max(0.49 \cdot 0.1, 0.9 \cdot 0.4)=0.36$
\end{itemize}
\noindent and then: $0.6 \rightarrow_{GG} 0.10 =0.16$ and $0.49 \rightarrow_{GG} 0.36=0.73$. So the inequality $G(\underline{\beta}(\delta))_1 \leq 
\overline{\beta}(\delta)_1$ is not satisfied for $\delta = 0$. 

\noindent For solving the inequality, we rely on: $W(1,1) = \{1\}$ and $W(1,2) = \{ 1,2\}$ and we compute: 
\begin{itemize}
    \item $\theta_{11} = \beta_1 - \dfrac{\gamma_{11}}{\gamma_{11}} = 0.1 - 1 = -0.9$, 
    \item $\theta_{12} = \max(\beta_1 -  \dfrac{\gamma_{12}}{\gamma_{12}}, \beta_2 -  \dfrac{\gamma_{12}}{\gamma_{22}})=\max(0.1-1,0.4-\dfrac{0.49}{0.9})=\max(-0.9, -0.14) = -0.14$,
    \item $\zeta_{11} = \max(P(0.6, 0.1,0.6,0.1),P(0.6, 0.4,0.26,0.1))= 0.05$, 
    \item $\zeta_{12} = \max(P(0.49, 0.1, 0.49,0.1), P(0.49,0.4, 0.9,0.1)) = 0.22$.  
\end{itemize}
 
\noindent We compute 
$\tau_1 = \min(0.05 , 0.22) = 0.05$ so we have $\nabla_1 = \min(0.9,0.05) = 0.05$.

\noindent Then we observe that $ G(\underline{\beta}(\nabla_1))_1 = \min(0.6 \rightarrow_G 0.09, 0.49 \rightarrow_G 0.31) = 0.15 \leq \overline{\beta}(\nabla_1)_1 = 0.15$. So the inequality is solved with $\nabla_1 = 0.05$.
\end{example}

\noindent
Although for the t-norm product, its associated residual implicator $\rightarrow_{GG}$ (called Goguen's implication) is not continuous with respect to its two variables, we show in the remainder of this section that the lower bounds $\nabla_j$ and $\nabla$ are in fact minimum elements of the sets $E_j$ and $E$ respectively, i.e., we have $\nabla_j \in E_j$ and $\nabla \in E$. Recall that, by   (Example \ref{ex:5counter}), these properties are not always true for the $\min-\rightarrow_G$ system.  

\noindent
We begin by establishing two useful lemmas to show that we have $\nabla_j \in E_j$ for all $j\in\{1 , 2 , \dots , m\}$.

\begin{lemma}\label{lemma:fracsup}
Let $u , x , y , z \in [0 , 1]$ satisfy the following two conditions:    
\begin{enumerate}
\item $0 < u < y$
\item $0 < x - \dfrac{u}{y} < 1 - z$.
\end{enumerate}
Then, we have :  
$$\dfrac{(xy - u z)^+}{u + y}  = 
\dfrac{ xy - u z }{u + y}> x - \dfrac{u}{y}.$$
\end{lemma}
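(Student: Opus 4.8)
The plan is to split the claim into the two assertions it actually contains: that the positive part is inactive, i.e.\ $xy - uz \geq 0$ so that $(xy-uz)^+ = xy - uz$, and then the strict inequality $\dfrac{xy-uz}{u+y} > x - \dfrac{u}{y}$. Throughout I will use that the hypotheses guarantee strict positivity of every denominator: $0 < u < y$ gives $y > 0$, and hence also $u + y > 0$.

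First I would dispose of the positive part. The lower bound in condition~(2), namely $x - \dfrac{u}{y} > 0$, is equivalent to $xy > u$ after multiplying by $y > 0$. Moreover, condition~(2) forces $1 - z > x - \dfrac{u}{y} > 0$, so $z < 1$. Combining these, I get $xy - uz > u - uz = u(1-z) > 0$ since $u > 0$, which shows $(xy - uz)^+ = xy - uz$ and settles the first equality.

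For the strict inequality I would clear the positive denominator $u + y$. Setting $w := x - \dfrac{u}{y}$, so that $xy = wy + u$ and therefore $xy - uz = wy + u(1-z)$, the target $\dfrac{wy + u(1-z)}{u+y} > w$ is equivalent, after multiplying through by $u + y > 0$ and cancelling the common term $wy$ on both sides, to $u(1-z) > uw$. Dividing by $u > 0$ reduces this to $1 - z > w = x - \dfrac{u}{y}$, which is precisely the upper bound supplied by condition~(2).

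I do not expect a genuine obstacle here: the computation is elementary, and the only points demanding care are the bookkeeping of strict positivity (to justify both the division by $u + y$ and the removal of the positive part) and the observation that, once the fraction is cleared, the inequality collapses exactly onto the hypothesis $x - \dfrac{u}{y} < 1 - z$. The mild elegance of the argument is that this final reduction makes the role of condition~(2) entirely transparent.
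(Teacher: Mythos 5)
Your proof is correct and takes essentially the same route as the paper's: an elementary algebraic rearrangement that reduces the strict inequality exactly to the hypothesis $x - \frac{u}{y} < 1 - z$, with positivity of $u$, $y$, $u+y$ justifying the manipulations. The only cosmetic difference is that you establish $xy - uz > 0$ directly beforehand, whereas the paper obtains it as a byproduct of the chain $\frac{xy - uz}{u+y} > x - \frac{u}{y} > 0$.
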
   
\begin{proof}
Since by hypothesis   we have $x - \dfrac{u}{y} > 0$ and  $u + y > 0$, 
 We just have to show the strict inequality 
 $\dfrac{ xy - u z }{u + y}> x - \dfrac{u}{y}$.

\noindent
We have:  
\begin{align}
\dfrac{ x y - u z }{u + y}&= 
\dfrac{ x y + x u - x u - u z }{u + y}\nonumber\\
&= x - \dfrac{u}{u + y}(x + z)\nonumber\\
& > x - \dfrac{u}{u + y}(1 + \dfrac{u}{ y})  \nonumber\\
& = x - \dfrac{u}{ y} > 0.  \nonumber
\end{align}
 
\end{proof}
\begin{lemma}\label{lemma:thetajizetaji}
We have: 
$$\forall i \in {\cal A}_j\,,\, \theta_{ji}  \leq \zeta_{ji}.$$
\end{lemma}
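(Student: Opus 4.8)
The plan is to prove the pointwise domination $\theta_{ji} \leq \zeta_{ji}$ by matching, term by term, the quantities inside the two maxima. The decisive structural observation is that the expansion (\ref{eq:egalitesursigmagg}) of $\sigma_{GG}$ carries $(\beta_l - \frac{\gamma_{ji}}{\gamma_{li}})^+$ as the \emph{first} argument of its outer maximum; since $\theta_{ji}$ is built precisely from terms of the form $\beta_l - \frac{\gamma_{ji}}{\gamma_{li}}$, each such term will be dominated by the matching value of $P$ entering $\zeta_{ji}$.

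First I would fix $i \in \mathcal{A}_j$, so that $\gamma_{ji} > 0$. This already guarantees $W(j,i) \neq \emptyset$, because $j \in W(j,i)$ (we have $\gamma_{ji} \leq \gamma_{ji}$ and $\gamma_{ji} > 0$), hence the maximum defining $\theta_{ji}$ is taken over a non-empty set and the convention $\max_\emptyset = 0$ does not come into play. I would then select an index $l \in W(j,i)$ realizing the maximum, so that $\theta_{ji} = \beta_l - \frac{\gamma_{ji}}{\gamma_{li}}$; by definition of $W(j,i)$ this $l$ satisfies $0 < \gamma_{ji} \leq \gamma_{li}$, and in particular $\gamma_{li} > 0$.

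Because both $u = \gamma_{ji} > 0$ and $y = \gamma_{li} > 0$, the third branch of the definition (\ref{eq:func:p}) of $P$ applies, giving $P(\gamma_{ji}, \beta_l, \gamma_{li}, \beta_j) = \sigma_{GG}(\frac{\gamma_{ji}}{\gamma_{li}}, \beta_l, 1, \beta_j)$. Invoking (\ref{eq:egalitesursigmagg}), this value is a maximum whose first entry is $(\beta_l - \frac{\gamma_{ji}}{\gamma_{li}})^+$, so it is at least $(\beta_l - \frac{\gamma_{ji}}{\gamma_{li}})^+ \geq \beta_l - \frac{\gamma_{ji}}{\gamma_{li}}$. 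Chaining these bounds yields $\zeta_{ji} \geq P(\gamma_{ji}, \beta_l, \gamma_{li}, \beta_j) \geq \beta_l - \frac{\gamma_{ji}}{\gamma_{li}} = \theta_{ji}$, which is exactly the assertion. I expect no genuine obstacle here: the argument collapses to a single domination once the matching term in the $\sigma_{GG}$ expansion is identified. The only points deserving a line of care are confirming that $W(j,i)$ is non-empty so the convention is inert, and noting that $\zeta_{ji} \geq 0$ always (each $P$ value is non-negative), which would in any case settle a degenerate empty situation.
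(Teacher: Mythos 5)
Your proof is correct and follows essentially the same route as the paper's: both reduce the claim to the observation that for $l \in W(j,i)$ (non-empty since $\gamma_{ji}>0$ gives $j \in W(j,i)$), the expansion (\ref{eq:egalitesursigmagg}) of $P(\gamma_{ji},\beta_l,\gamma_{li},\beta_j)$ dominates $(\beta_l - \frac{\gamma_{ji}}{\gamma_{li}})^+$, and then chain $\zeta_{ji} \geq P \geq \beta_l - \frac{\gamma_{ji}}{\gamma_{li}}$. The only cosmetic difference is that you instantiate the maximizing index $l$ for $\theta_{ji}$ while the paper proves the bound for every $l \in W(j,i)$ before taking the maximum; these are interchangeable.
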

\begin{proof} Let   $i \in {\cal A}_j$, i.e.,  $\gamma_{ji} > 0$, we deduce:  
   $$W(j , i) = \{l\in \{1 , 2 , \dots , m\}\,\mid \, \gamma_{ji} \leq \gamma_{li} \,\,\text{and}\,\, \gamma_{li} > 0\} \not=\emptyset.$$ 
  We recall that  $\theta_{ji}  := \max_{l\in W(j , i)} (\beta_l - \dfrac{\gamma_{ji}}{\gamma_{li}})$ and 
  $\zeta_{ji} = \max_{1 \leq l \leq m}  P(\gamma_{ji} , \beta_l  , \gamma_{li}  , \beta_j)$.

\noindent  
For all $l\in W(j , i)$, let us show the inequality:
$$\beta_l - \dfrac{\gamma_{ji}}{\gamma_{li}} \leq \zeta_{ji}.$$
Since  $\gamma_{ji} > 0$ and  $ \gamma_{li} > 0$, we have: 
\begin{align}
\zeta_{ji} & \geq P(\gamma_{ji} , \beta_l  , \gamma_{li}  , \beta_j) \nonumber\\
& =   \max((\beta_l - \dfrac{\gamma_{ji}}{\gamma_{li}})^+ , \, 
\min(\dfrac{(\beta_l \gamma_{li} - \beta_j \gamma_{ji})^+}{\gamma_{ji} + \gamma_{li}} , 1 - \beta_j)) \quad \text{see (\ref{eq:egalitesursigmagg})}\nonumber\\
& \geq (\beta_l - \dfrac{\gamma_{ji}}{\gamma_{li}})^+\nonumber\\
& \geq  \beta_l - \dfrac{\gamma_{ji}}{\gamma_{li}}. \nonumber
\end{align}
Therefore
$$\zeta_{ji} \geq \max_{l\in W(j , i)} (\beta_l - \dfrac{\gamma_{ji}}{\gamma_{li}}) = \theta_{ji}.$$
 
   \end{proof}

We are now able to show that for $j \in \{1,2,\dots,m\}$, we  have $\nabla_j \in E_j$ i.e., $\nabla_j = \min E_j$, and therefore $\nabla \in E$ i.e., $\nabla = \min E$.

   \begin{theorem}\label{th:minGGisMin} For all  $j\in\{1 , 2 , \dots , m\}$ , we have:    
$$\nabla_j   = \min  E_j\quad \text{ i.e.,} \quad 
\nabla_j \in  E_j.$$
\noindent Therefore:
\begin{equation*}
    \nabla = \min E. 
\end{equation*}
\end{theorem}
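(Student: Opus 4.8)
The plan is to prove the statement one index $j$ at a time and then assemble the global conclusion. Concretely, once I have shown $\nabla_j \in E_j$ for every $j\in\{1,\dots,m\}$, I would argue as follows: since $\nabla = \max_{1\le j \le m}\nabla_j \ge \nabla_j$ for each $j$ by (Proposition \ref{prop:nablafunc}), the upward-closure of $E_j$ (second item of (Lemma \ref{lemdec})) gives $\nabla \in E_j$ for every $j$; as $E = \bigcap_{1\le j\le m} E_j$ by the first item of (Lemma \ref{lemdec}), this yields $\nabla \in E$. Combined with $\nabla = \inf E$ from (Theorem \ref{th1}), this is exactly $\nabla = \min E$. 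So everything reduces to the local membership $\nabla_j \in E_j$.

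To prove $\nabla_j \in E_j$, I would split on the value of $\nabla_j$ relative to $1-\beta_j$. If $\nabla_j = 1-\beta_j$, then (Lemma \ref{lemma:nablajvsunmoinsbeta}) already gives $1-\beta_j\in E_j$ and we are done. Otherwise $\nabla_j < 1-\beta_j$ (in particular $\beta_j<1$), and I would invoke (Theorem \ref{thformingg}) together with the $\min-\rightarrow_{GG}$ analogue of (Corollary \ref{corollary:c1}) to guarantee that $\mathcal{A}_j\ne\emptyset$ and that the finite minimum $\tau_j=\min_{i\in\mathcal{A}_j}\max(\theta_{ji},\zeta_{ji})=\nabla_j$ is attained at some index $i_0\in\mathcal{A}_j$. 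By (Lemma \ref{lemma:thetajizetaji}) we have $\theta_{ji_0}\le\zeta_{ji_0}$, so $\nabla_j=\max(\theta_{ji_0},\zeta_{ji_0})=\zeta_{ji_0}$. To conclude $\nabla_j\in E_j$ via (Lemma \ref{lemma:sametrucformingg}) applied at $\delta=\nabla_j<1-\beta_j$, I need the three conditions $\gamma_{ji_0}>0$ (clear, as $i_0\in\mathcal{A}_j$), $\zeta_{ji_0}\le\nabla_j$ (an equality here), and the \emph{strict} inequality $\theta_{ji_0}<\nabla_j$.

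The hard part will be this strict inequality $\theta_{ji_0}<\zeta_{ji_0}=\nabla_j$ — this is precisely where the non-continuity of Goguen's implication could have caused a failure, as it does for $\rightarrow_G$ in (Example \ref{ex:5counter}), and where (Lemma \ref{lemma:fracsup}) is designed to intervene. I would argue by cases on the sign of $\theta_{ji_0}$. If $\theta_{ji_0}<0$, then $\theta_{ji_0}<0\le\zeta_{ji_0}$ since $P\ge 0$, and we are done. If $\theta_{ji_0}\ge 0$, I would pick $l_0\in W(j,i_0)$ attaining $\theta_{ji_0}=\beta_{l_0}-\frac{\gamma_{ji_0}}{\gamma_{l_0 i_0}}$ (the set $W(j,i_0)$ is nonempty since $j$ belongs to it) and examine the single term $P(\gamma_{ji_0},\beta_{l_0},\gamma_{l_0 i_0},\beta_j)$, which lower-bounds $\zeta_{ji_0}$. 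Using the explicit form (\ref{eq:egalitesursigmagg}), when $\theta_{ji_0}>0$ the positivity $\beta_{l_0}-\frac{\gamma_{ji_0}}{\gamma_{l_0 i_0}}>0$ forces $0<\gamma_{ji_0}<\gamma_{l_0 i_0}$, and together with $0<\theta_{ji_0}<1-\beta_j$ this lets me apply (Lemma \ref{lemma:fracsup}) to get $\frac{(\beta_{l_0}\gamma_{l_0 i_0}-\beta_j\gamma_{ji_0})^+}{\gamma_{ji_0}+\gamma_{l_0 i_0}}>\theta_{ji_0}$; since also $1-\beta_j>\nabla_j=\theta_{ji_0}$, the minimum of these two quantities strictly exceeds $\theta_{ji_0}$, hence $\zeta_{ji_0}\ge P(\gamma_{ji_0},\beta_{l_0},\gamma_{l_0 i_0},\beta_j)>\theta_{ji_0}$. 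The boundary subcase $\theta_{ji_0}=0$ I would treat by the substitution $\beta_{l_0}\gamma_{l_0 i_0}=\gamma_{ji_0}$, which makes the relevant numerator equal to $\gamma_{ji_0}(1-\beta_j)>0$ and again forces $P(\gamma_{ji_0},\beta_{l_0},\gamma_{l_0 i_0},\beta_j)>0=\theta_{ji_0}$. In every case $\theta_{ji_0}<\zeta_{ji_0}=\nabla_j$, so (Lemma \ref{lemma:sametrucformingg}) yields $G(\underline{\beta}(\nabla_j))_j\le\overline{\beta}(\nabla_j)_j$, that is $\nabla_j\in E_j$, completing the argument and hence, by the first paragraph, giving $\nabla=\min E$.
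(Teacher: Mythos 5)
Your proof is correct and follows essentially the same route as the paper's: reduce to the case $\nabla_j < 1-\beta_j$, identify $\nabla_j = \zeta_{ji_0}$ via (Theorem \ref{thformingg}) and (Lemma \ref{lemma:thetajizetaji}), establish the strict inequality $\theta_{ji_0} < \zeta_{ji_0}$ using (Lemma \ref{lemma:fracsup}), and conclude membership in $E_j$ with (Lemma \ref{lemma:sametrucformingg}), assembling the global statement through (Lemma \ref{lemdec}) exactly as the paper does. The only cosmetic difference is that you case-split on the sign of $\theta_{ji_0}$ and argue directly in each case, whereas the paper splits on $\nabla_j = 0$ versus $\nabla_j > 0$ and runs the latter case by contradiction; the decisive use of (Lemma \ref{lemma:fracsup}) is identical.
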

\begin{proof}
From (Lemma \ref{lemma:nablajvsunmoinsbeta}), we know that $1 - \beta_j \in E_j$ and $\nabla_j  \leq 1 - \beta_j$. 

\noindent
It remains to show that $\nabla_j = \min E_j$ under the hypothesis $\nabla_j  < 1 - \beta_j$.
 
\noindent
From (Theorem \ref{thformingg}) and (Lemma \ref{lemma:thetajizetaji}) we know:
$$\nabla_j = \min(1  - \beta_j , \tau_j) = \tau_j < 1 \quad 
 \text{and} \quad \tau_j = \min_{i\in{\cal A}_j} \max(\theta_{ji} , \zeta_{ji}) = \min_{i\in{\cal A}_j} \zeta_{ji}.$$
Therefore  
 ${\cal A}_j \not=\emptyset$. Let $i\in {\cal A}_j$ i.e., $\gamma_{ji} > 0$, such that:   
 \begin{equation}\label{qu:eqth}
\nabla_j  = \tau_j = \max(\theta_{ji} , \zeta_{ji}) = \zeta_{ji}.
\end{equation}

\noindent
To show that $\nabla_j \in E_j$, we will prove that $\nabla_j$ satisfies the second assumption of (Lemma \ref{lemma:sametrucformingg}). Taking into account the hypothesis $\nabla_j < 1 - \beta_j$ and the equality $\nabla_j = \zeta_{ji}$, see (\ref{qu:eqth}), we have to show the strict inequality:
\begin{equation}\label{eq:ineqthetazeta}
    \theta_{ji} < \zeta_{ji}.
\end{equation} 
Let us prove this strict inequality $ \theta_{ji} < \zeta_{ji} $ in the following two steps:

\noindent
$\bullet \,$ If  $\nabla_j = 0$ : we must prove that  $\theta_{ji} < 0$.

\noindent Since $W(j , i) \neq \emptyset$ (we have $j \in W(j , i)$), let $l\in W(j , i)$ be such that   
$\theta_{ji}  = \beta_l - \dfrac{\gamma_{ji}}{\gamma_{li}}$.
We have 
\begin{align}
0  = \nabla_j = & \zeta_{ji}\nonumber\\
 &\geq P(\gamma_{ji} , \beta_l  ,  \gamma_{li}  , \beta_j) \nonumber\\
& =   \max( (\beta_l - \dfrac{\gamma_{ji}}{\gamma_{li}})^+ , 
\min(\dfrac{(\beta_l \gamma_{li} - \beta_j \gamma_{ji})^+}{\gamma_{ji} + \gamma_{li}} , 1 - \beta_j)) \quad \text{see (\ref{eq:egalitesursigmagg})}\nonumber   
\end{align}
Since  $    1 - \beta_j >   \nabla_j = 0$, we deduce  $\dfrac{(\beta_l \gamma_{li} - \beta_j \gamma_{ji})^+}{\gamma_{ji} + \gamma_{li}} = 0$, which implies  
$\beta_l \gamma_{li} \leq \beta_j \gamma_{ji}$.  We have:  
\begin{align}
  \theta_{ji} & =  \beta_l - \dfrac{\gamma_{ji}}{\gamma_{li}}\nonumber\\
 & \leq \dfrac{\gamma_{ji}}{\gamma_{li}} \beta_j - \dfrac{\gamma_{ji}}{\gamma_{li}}\nonumber\\
 &= \dfrac{\gamma_{ji}}{\gamma_{li}}(\beta_j - 1) < 0.\nonumber
\end{align}

\noindent
$\bullet \,$ If $\nabla_j > 0$: let us prove by contradiction the strict inequality $\theta_{ji} < \zeta_{ji}$. \\
Assume we have  
$\theta_{ji} \geq  \zeta_{ji}.$ Then, by (Lemma \ref{lemma:thetajizetaji}) we have: $$\theta_{ji} =  \zeta_{ji}.$$   
Let  $l\in W(j , i) $ be such that $\theta_{ji} = \beta_l - \dfrac{\gamma_{ji}}{\gamma_{li}}$. So we  have 
 \begin{equation}\label{eq:proofth4nablaj}
     \nabla_j = \zeta_{ji} = \theta_{ji} = \beta_l - \dfrac{\gamma_{ji}}{\gamma_{li}} > 0.
 \end{equation} 
   We then have: 
\begin{align}
 1 - \beta_j &> \nabla_j \nonumber\\
 &= \beta_l - \dfrac{\gamma_{ji}}{\gamma_{li}} \nonumber\\
 &= \zeta_{ji}\nonumber\\
 &\geq P(\gamma_{ji} , \beta_l  , \gamma_{li}  , \beta_j) \quad \text{see (\ref{eq:egalitesursigmagg})} \nonumber\\
& =   \max( \beta_l - \dfrac{\gamma_{ji}}{\gamma_{li}} , 
\min(\dfrac{(\beta_l \gamma_{li} - \beta_j \gamma_{ji})^+}{\gamma_{ji} + \gamma_{li}} , 1 - \beta_j)).\nonumber
\end{align}
Therefore:
$$\min(\dfrac{(\beta_l \gamma_{li} - \beta_j \gamma_{ji})^+}{\gamma_{ji} + \gamma_{li}} , 1 - \beta_j) \leq \beta_l - \dfrac{\gamma_{ji}}{\gamma_{li}} < 1 - \beta_j.$$
This last inequality implies: 
$$\dfrac{(\beta_l \gamma_{li} - \beta_j \gamma_{ji})^+}{\gamma_{ji} + \gamma_{li}} \leq \beta_l - \dfrac{\gamma_{ji}}{\gamma_{li}} < 1 - \beta_j.$$
We put     $u = \gamma_{ji} , x  = \beta_l , y = \gamma_{li} , z = \beta_j$. By the previous calculations, we have: 
\begin{itemize}
    \item $0 < u < y$ because  $l\in W(j , i)$ and  $\beta_l - \dfrac{\gamma_{ji}}{\gamma_{li}} = \nabla_j > 0$,
    \item $0 < x - \dfrac{u}{y } = \beta_l - \dfrac{\gamma_{ji}}{\gamma_{li}} < 1 - \beta_j = 1 - z$.
\end{itemize}

\noindent Applying (Lemma \ref{lemma:fracsup}), we get: 
$$ \dfrac{(xy - u z)^+}{u + y} > x - \dfrac{u}{y}.$$
We replace $u , x , y , z$   with their values. This leads to the following inequality:  
$$\dfrac{(\beta_l \gamma_{li} - \beta_j \gamma_{ji})^+}{\gamma_{ji} + \gamma_{li}} > \beta_l - \dfrac{\gamma_{ji}}{\gamma_{li}}$$
which contradicts
$$\dfrac{(\beta_l \gamma_{li} - \beta_j \gamma_{ji})^+}{\gamma_{ji} + \gamma_{li}} \leq \beta_l - \dfrac{\gamma_{ji}}{\gamma_{li}}.$$
We have therefore shown the strict inequality 
$\theta_{ji}  < \zeta_{ij}$, see (\ref{eq:ineqthetazeta}). Therefore $\nabla_j \in E_j$. 

\noindent As a consequence, since we proved that for all $j \in \{1,2,\dots,m\}$ we have $\nabla_j \in E_j$,  we deduce from  (Lemma \ref{lemdec}) and (Proposition \ref{prop:nablafunc}) that $\nabla = \min E \text{ i.e., } \nabla \in E$. 
\end{proof}

The equality $\nabla = \min E$ allows us to prove directly that the set of Chebyshev approximations $\mathcal{D}_\beta$, see (\ref{eq:setDb}), always has a minimum element. We also give an approximate solution: 
\begin{corollary}\label{cor:minGGcheb}
$G(\underline{\beta}(\nabla))$ is the lowest Chebychev approximation for $\beta$. Moreover, $\xi = \Gamma^t \Box_{T_P}^{\max} \underline{\beta}(\nabla)$ is an approximate solution for the system 
$(\Sigma) : \Gamma \Box_{\rightarrow_{GG}}^{\min} x = \beta$.
\begin{proof}
Same proof as (Corollary \ref{cor:chebnablainE}) but using  the $\min-\rightarrow_{GG}$ composition. 
\end{proof}
\end{corollary}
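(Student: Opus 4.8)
The plan is to transcribe the argument of (Corollary \ref{cor:chebnablainE}) with the Gödel composition replaced everywhere by the Goguen composition, the essential new ingredient being that the equality $\nabla = \min E$ --- which was a standing hypothesis in the Gödel case --- is now available unconditionally from (Theorem \ref{th:minGGisMin}). Consequently the statement holds for every $\min-\rightarrow_{GG}$ system without any side condition. First I would verify that $G(\underline{\beta}(\nabla)) \in \mathcal{D}$. By the definition of $G$, see (\ref{eq::appG}), one has $G(\underline{\beta}(\nabla)) = \Gamma \Box_{\rightarrow_{GG}}^{\min}(\Gamma^t \Box_{T_P}^{\max} \underline{\beta}(\nabla)) = \Gamma \Box_{\rightarrow_{GG}}^{\min} \xi$, exhibiting $G(\underline{\beta}(\nabla))$ as the image of $\xi$ under the $\min-\rightarrow_{GG}$ composition; it is therefore a consistent second member. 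Definition \ref{def:chebyshevdistMinfleche} then gives the inequality $\Vert \beta - G(\underline{\beta}(\nabla))\Vert \geq \nabla$ for free.

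Next I would establish the reverse inequality. Proposition \ref{propo1} yields $\underline{\beta}(\nabla) \leq G(\underline{\beta}(\nabla))$, and (Theorem \ref{th:minGGisMin}), i.e. $\nabla = \min E \in E$, yields $G(\underline{\beta}(\nabla)) \leq \overline{\beta}(\nabla)$. Chaining these into the double inequality $\underline{\beta}(\nabla) \leq G(\underline{\beta}(\nabla)) \leq \overline{\beta}(\nabla)$ and invoking (\ref{ineq:bbbar}) gives $\Vert \beta - G(\underline{\beta}(\nabla))\Vert \leq \nabla$, so in fact $\Vert \beta - G(\underline{\beta}(\nabla))\Vert = \nabla$. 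Hence $G(\underline{\beta}(\nabla))$ is a Chebyshev approximation of $\beta$ and $\xi$ is an approximate solution of $(\Sigma)$.

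To conclude that it is the \emph{lowest} Chebyshev approximation, I would take any $c \in \mathcal{D}_\beta$, so that $G(c) = c$ and $\Vert \beta - c\Vert = \nabla$; then (\ref{ineq:bbbar}) gives $\underline{\beta}(\nabla) \leq c$, and applying the increasing map $G$ (Proposition \ref{propo2}) together with the fixed-point property $G(c) = c$ produces $G(\underline{\beta}(\nabla)) \leq c$. Thus $G(\underline{\beta}(\nabla))$ sits below every element of $\mathcal{D}_\beta$. The single point at which this proof departs from that of (Corollary \ref{cor:chebnablainE}) is the justification of $G(\underline{\beta}(\nabla)) \leq \overline{\beta}(\nabla)$, which there rested on the hypothesis $\nabla \in E$ but here is precisely the content of (Theorem \ref{th:minGGisMin}). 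I therefore expect no genuine obstacle in this corollary: all the difficulty has already been absorbed into proving that $\nabla$ is attained, so that the delicate behaviour caused by the discontinuity of $\rightarrow_{GG}$ never resurfaces at this stage.
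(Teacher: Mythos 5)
Your proposal is correct and is essentially identical to the paper's proof: the paper's argument for this corollary is literally a transcription of (Corollary \ref{cor:chebnablainE}) with the $\min-\rightarrow_{GG}$ composition, where the hypothesis $\nabla \in E$ is supplied unconditionally by (Theorem \ref{th:minGGisMin}), exactly as you observe. Your explicit identification of that single point of departure --- replacing the standing hypothesis by the attainment result --- is precisely what the paper's terse ``same proof'' relies on.
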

By definition of $\nabla$ (Definition \ref{def:chebyshevdistMinfleche}) and the above corollary,  it is clear that: 
\begin{corollary}\label{cor:minGGmin}  We have:   
$$\nabla    = \min_{d\in {\cal D}} \Vert \beta - d\Vert. 
$$
and the set of Chebyshev approximations $\mathcal{D}_\beta$,  see (\ref{eq:setDb}), is non-empty. 
\end{corollary}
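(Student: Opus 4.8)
The plan is to recognize that this corollary carries no new analytic content: it simply packages together the two facts already proved in Theorem \ref{th:minGGisMin} and Corollary \ref{cor:minGGcheb}. Since the infimum defining $\nabla$ is always a lower bound, the only thing to check is that it is actually attained by some element of $\mathcal{D}$.

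First I would recall that, by Definition \ref{def:chebyshevdistMinfleche}, we have $\nabla = \inf_{d\in{\cal D}}\Vert \beta - d\Vert$, so that $\Vert \beta - d\Vert \geq \nabla$ holds for every $d\in\mathcal{D}$. To upgrade this infimum to a minimum it therefore suffices to exhibit a single vector $d^\ast\in\mathcal{D}$ for which $\Vert \beta - d^\ast\Vert = \nabla$. This is exactly what Corollary \ref{cor:minGGcheb} provides: the vector $d^\ast = G(\underline{\beta}(\nabla))$ is a consistent second member, hence $d^\ast\in\mathcal{D}$, and it satisfies $\Vert \beta - d^\ast\Vert = \nabla$. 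Consequently $\nabla = \min_{d\in{\cal D}}\Vert \beta - d\Vert$, the minimum being realized at $d^\ast$.

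It then remains to read off the non-emptiness of $\mathcal{D}_\beta$. Since $d^\ast\in\mathcal{D}$ and $\Vert d^\ast - \beta\Vert = \nabla$, the vector $d^\ast$ meets the defining condition of $\mathcal{D}_\beta$ in (\ref{eq:setDb}), so $d^\ast\in\mathcal{D}_\beta$ and thus $\mathcal{D}_\beta\neq\emptyset$.

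I do not expect any genuine obstacle here, as all the difficulty has already been absorbed upstream. The crucial and delicate point is the attainment $\nabla\in E$ established in Theorem \ref{th:minGGisMin}, which holds for the $\min-\rightarrow_{GG}$ system despite the Goguen implication being discontinuous; this is precisely what guarantees, via Corollary \ref{cor:minGGcheb}, that $G(\underline{\beta}(\nabla))$ is a bona fide Chebyshev approximation. The only care needed is to emphasize that this attainment is what distinguishes the present case from the $\min-\rightarrow_{G}$ system, where $\nabla$ may fail to lie in $E$ (Example \ref{ex:5counter}) and $\mathcal{D}_\beta$ can be empty (Lemma \ref{lemma:minGchebempty}).
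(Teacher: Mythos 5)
Your proposal is correct and matches the paper's own argument: the paper likewise deduces this corollary immediately from Definition \ref{def:chebyshevdistMinfleche} together with Corollary \ref{cor:minGGcheb}, whose Chebyshev approximation $G(\underline{\beta}(\nabla))$ witnesses attainment of the infimum and non-emptiness of $\mathcal{D}_\beta$. Your additional remark that the real work lies in Theorem \ref{th:minGGisMin} (the attainment $\nabla \in E$, which underpins Corollary \ref{cor:minGGcheb}) is also faithful to the paper's logical structure.
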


\section{\texorpdfstring{Chebyshev distance associated to the second member of systems based on $\min-\rightarrow_L$ composition}{Chebyshev distance associated to the second member of a system of min→Łukasiewicz fuzzy relational equations}}
\label{sec:minluka}

 In this section,  we study the Chebyshev distance $\nabla$, see (Definition \ref{def:chebyshevdistMinfleche}), associated to the second member of a system of $\min-\rightarrow_L$ fuzzy relational equations $(\Sigma) : \Gamma  \Box_{\rightarrow_{L}}^{\min} x =  \beta$,  see (\ref{eq:minsys}), where $\rightarrow_L$ is the Łukasiewicz implication, see (\ref{eq:tnormluka}).

To establish  the formula for computing the Chebyshev distance $\nabla$, we prove a version of (Lemma \ref{lemma:equivmin}) (or (Lemma \ref{lemma:sametrucformingg})) for the $\min-\rightarrow_{L}$ system (Lemma \ref{lemma:mainlemmeforminL}). In (Lemma \ref{lemma:mainlemmeforminL}), we have retained the two notations $\nabla_j$ and $\zeta_{ji}$ from previous sections with their own definitions for the $\min-\rightarrow_{L}$ system (but not the  notation $\theta_{ji}$, which is not needed).

Since Łukasiewicz's implication is continuous with respect to its two variables, establishing the formula for $\nabla$ (Theorem \ref{th:luka}), is easier than for $\min-\rightarrow_{G}$ and $\min-\rightarrow_{GG}$ systems.

\noindent We remind that we have $\nabla = \max_{1\leq j \leq m} \nabla_j$ where $\nabla_j = \inf E_j$ (Proposition \ref{prop:nablafunc}) and the set $E_j$ is defined in (\ref{eq:setEj}). Unlike the $\min - \rightarrow_{G}$ system, we will show that in the case of the $\min - \rightarrow_{L}$ system (as it is the same for the $\min-\rightarrow_{GG}$ system), for all $j \in \{1,2,\cdots,m\}$, we always have $\nabla_j \in E_j$, i.e., $\nabla_j$ is always the minimum element of the set $E_j$.  By applying (Lemma \ref{lemdec}), we obtain  $\nabla = \min E$, where $E$ is defined in (\ref{qu:ETot}). Therefore, for a $\min-\rightarrow_{L}$ system, the Chebyshev distance $\nabla$ is always a minimum, see (Corollary \ref{cor:minLmin}).  In fact, we can always compute the lowest Chebyshev approximation of the second member
and an approximate solution of the system $(\Sigma)$, see (Corollary \ref{cor:minLcheb}).

 We rely on the following function:
\begin{equation}
        L(u , v , x , y) = \max((u - y)^+ , \min((x -  v)^+ , \dfrac{(x - y + u - v)^+)}{2})). 
\end{equation}
and we introduce  new notations:
\begin{notation}\label{not:zjiluka}
For $1 \leq j \leq m, 1 \leq i \leq n$, to each coefficient $\gamma_{ji}$ of the matrix $\Gamma$ of the system $(\Sigma)$,  see (\ref{eq:minsys}), we associate: $$\zeta_{ji} = \max_{1 \leq l \leq m}L(1 - \gamma_{ji} , 1 -  \gamma_{li} , \beta_l , \beta_j).$$
\end{notation}

\noindent The function $L$ allows us to solve an inequality:

\begin{lemma}\label{l4}
Let  $u , v, x, y $ in $[0 , 1]$. For all  $\delta\in [0 , 1 - y[$, we have the following equivalence:  
\begin{equation}\label{eq:L4}
    (\underline{x}(\delta) - v)^+ \leq \overline{y}(\delta)  - u \,\Longleftrightarrow \,
\delta \geq L(u , v , x  , y).
\end{equation}
\end{lemma}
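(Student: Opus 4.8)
The plan is to reduce the biconditional in (\ref{eq:L4}) to a pair of linear inequalities in $\delta$ and then to identify the resulting threshold with $L(u,v,x,y)$. Since $\delta\in[0,1-y[$, we have $y+\delta<1$, hence $\overline{y}(\delta)=\min(y+\delta,1)=y+\delta$, so the right-hand side of (\ref{eq:L4}) is simply $y+\delta-u$. For the left-hand side I would first record the identity $(\underline{x}(\delta)-v)^+=(x-v-\delta)^+$: indeed $\underline{x}(\delta)=\max(x-\delta,0)$, and since $v\ge 0$ one checks directly that $\max(\max(x-\delta,0)-v,0)=\max(x-\delta-v,0)$ in both cases $x-\delta\ge 0$ and $x-\delta<0$. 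Thus the inequality in (\ref{eq:L4}) becomes $(x-v-\delta)^+\le y+\delta-u$.

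Next I would apply the elementary equivalence $\max(a,0)\le b\Longleftrightarrow(a\le b \text{ and } 0\le b)$ with $a=x-v-\delta$ and $b=y+\delta-u$. This turns the inequality into the conjunction $x-v-\delta\le y+\delta-u$ and $0\le y+\delta-u$, i.e. $\delta\ge\frac{x-y+u-v}{2}$ and $\delta\ge u-y$. Taking into account $\delta\ge 0$, these two conditions together are equivalent to
\[
\delta\ge\max\Big((u-y)^+,\ \frac{(x-y+u-v)^+}{2}\Big).
\]

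It remains to prove that this threshold coincides with $L(u,v,x,y)=\max\big((u-y)^+,\min((x-v)^+,\frac{(x-y+u-v)^+}{2})\big)$; this is the only genuinely delicate point. Writing $a=(u-y)^+$, $b=(x-v)^+$ and $c=\frac{(x-y+u-v)^+}{2}$, the claim is the identity $\max(a,\min(b,c))=\max(a,c)$, which can fail only in the \emph{bad case} $c>a$ and $c>b$ (with $c>0$). I would rule this case out by contradiction: $c>a\ge u-y$ forces $u<x+y-v$, while $c>b$ forces, when $x\ge v$, the opposite inequality $u>x+y-v$, and when $x<v$ (so $b=0$) it forces $u>y+v-x>y$, which is again incompatible with $u<x+y-v<y$. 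In either subcase we reach a contradiction, so the bad case never occurs and the two maxima are equal, which establishes (\ref{eq:L4}). The main obstacle is precisely this last algebraic identity: the $\min$ appearing in $L$ is in fact redundant, and seeing why requires the case split on the sign of $x-v$ together with the constraint $c>0$.
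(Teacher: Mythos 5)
Your proof is correct, but it takes a genuinely different route from the paper's. The paper proves the two implications separately, keeping the $\min$ structure of $L$ intact: after establishing the same key identity $(\underline{x}(\delta)-v)^+=(x-\delta-v)^+$, it splits into the cases $(\underline{x}(\delta)-v)^+=0$ (equivalently $\delta\geq (x-v)^+$) and $(\underline{x}(\delta)-v)^+>0$, and in each case works with the appropriate branch of the $\min$ appearing in $L$. You instead collapse the whole statement into the single threshold condition $\delta\geq\max\bigl((u-y)^+,\frac{(x-y+u-v)^+}{2}\bigr)$ via the elementary equivalence $\max(a,0)\leq b\Leftrightarrow(a\leq b\text{ and }0\leq b)$, and then prove the purely algebraic identity $L(u,v,x,y)=\max\bigl((u-y)^+,\frac{(x-y+u-v)^+}{2}\bigr)$ on $[0,1]^4$, i.e., the inner $\min$ in the definition of $L$ is redundant. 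Your contradiction argument for the ``bad case'' (with the subcases $x\geq v$ and $x<v$) is sound; note it can be streamlined: whenever $\frac{(x-y+u-v)^+}{2}>0$, this quantity is the average of $x-v$ and $u-y$, hence is at most $\max\bigl((x-v)^+,(u-y)^+\bigr)$, so it can never strictly exceed both terms at once. What your route buys is precisely this structural insight—$L$, hence $\zeta_{ji}$ and the formula of Theorem \ref{th:luka}, could be stated without the inner $\min$—together with a one-shot treatment of both directions of the equivalence. What the paper's route buys is that it never needs to establish any identity about $L$: it verifies the equivalence directly against $L$ as defined, whose form is kept deliberately parallel to the auxiliary functions $\sigma_G$ and $\sigma_{GG}$ used for the other two implications.
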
 
\begin{proof}\mbox{}\\
$\Longrightarrow$:\\

\noindent
$\bullet$ We have $\delta \geq (u - y)^+$. Indeed,  we deduce from the inequality of the left side of (\ref{eq:L4}) and the hypothesis $y + \delta < 1$ that we have:
\[0 \leq \overline{y}(\delta)  - u = \min(y + \delta -  u , 1 - u) = y + \delta - u, 
\] 
so  
$\delta \geq u - y$. As $\delta \geq 0$, we obtain that $\delta \geq (u - y)^+$. 

\noindent
$\bullet$ Let us prove $\delta \geq \min((x -  v)^+ , \dfrac{(x - y + u - v)^+)}{2})$.

\noindent
The associativity of the function $\max$ (i.e,  $\max(x  , y , z) = \max(\max(x , y) , z)$ leads to the equality: 
\begin{equation}\label{eq:maxxdeltaminusv}
    (\underline{x}(\delta) - v)^+ =  (x - \delta  - v)^+.
\end{equation}
Indeed, we have:
\begin{align*}
(\underline{x}(\delta) - v)^+ &= \max(\underline{x}(\delta) - v , 0) \nonumber\\
& = \max(\underline{x}(\delta) , v) - v  \nonumber\\
& = \max(\max(x - \delta , 0) , v) - v\nonumber\\
& = \max(x - \delta , 0 , v)   - v\nonumber\\
& = \max(x - \delta  - v ,  -v , 0)   \nonumber\\
& = \max((x - \delta  - v)^+,  - v )  =  (x - \delta  - v)^+\nonumber
\end{align*}
We then deduce:
\begin{align}\label{eq:equivdanslemma12}
(\underline{x}(\delta) - v)^+ = 0 
&\Longleftrightarrow (x - \delta  - v)^+ =0 \nonumber\\
&\Longleftrightarrow  x - \delta  \leq  v \nonumber\\
&\Longleftrightarrow  x - v  \leq  \delta \nonumber\\
&\Longleftrightarrow  (x -  v)^+ \leq  \delta. 
\end{align}
It is clear that we have:  
$$ \delta \geq (x -  v)^+ \Longrightarrow \delta \geq \min((x -  v)^+ , \dfrac{(x - y + u - v)^+)}{2}).$$
Suppose now that $\delta < (x -  v)^+$. Then by (\ref{eq:equivdanslemma12}), we have:
$$0 < (\underline{x}(\delta) - v)^+ = \underline{x}(\delta) - v.$$
and the hypothesis becomes:
$$0 < \underline{x}(\delta) - v \leq \overline{y}(\delta) - u = 
y + \delta  - u$$
The inequality $\underline{x}(\delta) - v \leq \overline{y}(\delta) - u = 
y + \delta  - u$ is equivalent to
$$  \max(x - \delta , 0) - v - [y + \delta  - u]
= \max(x  - y + u - v  - 2\delta , - v - [y + \delta  - u]) \leq 0.$$
so $  x  - y + u - v  - 2\delta \leq 0$. As $\delta \geq 0$, we also have $   (x - y + u - v)^+ \leq 2\delta$. Finally, we have:
 $$ \delta \geq  \dfrac{(x - y + u - v)^+}{2} \geq \min((x -  v)^+ , \dfrac{(x - y + u - v)^+}{2}).$$

\noindent To summarize, we have proven that $\delta \geq \max((u - y)^+ , \min((x -  v)^+ , \dfrac{(x - y + u - v)^+)}{2})) = L(u,v,x,y).$
 
 $\Longleftarrow:$ \\ 
 We must show the inequality: $(\underline{x}(\delta) - v)^+ \leq \overline{y}(\delta)  - u.$\\
 \noindent We note that, by hypothesis, we have $\overline{y}(\delta) = y + \delta < 1$. \\

\noindent $\bullet\,$ Let us show that $\overline{y}(\delta)  - u = y + \delta -  u$ is positive. In fact, we deduce from the hypothesis:
\begin{equation}\label{eq:numberneg1}
    \delta \geq (u - y)^+ \geq u - y\quad \text{ so }\quad y + \delta -  u  \geq 0.
\end{equation}

\noindent $\bullet $ If  $(\underline{x}(\delta) - v)^+ = 0$, the inequality $(\underline{x}(\delta) - v)^+ \leq \overline{y}(\delta)  - u$ is clear.

$\bullet$ Suppose that  $(\underline{x}(\delta) - v)^+ > 0$. We deduce from (\ref{eq:maxxdeltaminusv}) that:

$$    0 < (x - \delta -v)^+ = x - \delta - v \quad \text{so} \quad  \delta < x - v= (x-v)^+.
$$

\noindent As we suppose that 
$ \delta \geq \max((u - y)^+ , \min((x -  v)^+ , \dfrac{(x - y + u - v)^+}{2}))$, we deduce: 
\begin{equation}\label{eq:numberneg}\delta \geq \dfrac{(x - y + u - v)^+}{2} \geq \dfrac{ x - y + u - v}{2}.\end{equation}

\noindent
 Let us show now the inequality $(\underline{x}(\delta) - v)^+ \leq \overline{y}(\delta)  - u$. We have:
\begin{align*}
(\underline{x}(\delta) - v)^+ -  [\overline{y}(\delta)  -  u]
&=   \underline{x}(\delta) - v  - [y  + \delta - u] \nonumber\\
&= \max(x - y + u  - v  -  2\delta ,  - v  - [y  + \delta - u]) \leq 0, 
\end{align*}
because the numbers $x - y + u  - v  -  2\delta$ and  $ - v  - [y  + \delta - u]$ are negative (see (\ref{eq:numberneg1}) and (\ref{eq:numberneg})).
 
\end{proof}
We illustrate this result:
\begin{example}
Let $u = 0.3, v = 0.4, x = 0.5, y = 0.2$. We have $0.1= x- v > y - u = -0.1$. We compute $\delta = L(u,v,x,y) = 0.1$ then we have $\delta = 0.1  < 1 - y = 0.8$ and the inequality 
    $(\underline{x}(\delta) - v)^+ \leq \overline{y}(\delta)  - u$ is satisfied. 
\end{example}

\noindent
The version of (Lemma \ref{lemma:equivmin}) (or  (Lemma \ref{lemma:sametrucformingg})) for the  $\min - \rightarrow_L$ system is:
 
\begin{lemma}\label{lemma:mainlemmeforminL}
Let $j\in \{1 , 2 , \dots , m\}$. 
For all $\delta \in [0 , 1]$ such that $\delta  < 1 - \beta_j$,  there is an equivalence between the following two statements:
\begin{enumerate}
\item $G(\underline{\beta}(\delta)_j \leq  \overline{\beta}(\delta)_j$
\item There exists $i\in \{1 , 2 , \dots, n\}$ such that:
$$ \delta \geq \zeta_{ji}.$$ 
\end{enumerate}
\end{lemma}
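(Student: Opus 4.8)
The plan is to reduce the scalar inequality $G(\underline{\beta}(\delta))_j \le \overline{\beta}(\delta)_j$ to a family of inequalities of exactly the shape handled by (Lemma \ref{l4}), one for each $l$, and then read off the result. First I would fix $j$ and $\delta < 1 - \beta_j$, so that $\overline{\beta}(\delta)_j = \beta_j + \delta < 1$, and write out the $j$-th component of $G$ for the Łukasiewicz composition:
\[ G(\underline{\beta}(\delta))_j = \min_{1 \le i \le n} \big( \gamma_{ji} \rightarrow_L w_i \big), \quad w_i := \max_{1 \le l \le m} T_L(\gamma_{li}, \underline{\beta}(\delta)_l). \]
Since $\gamma_{ji} \rightarrow_L w_i = \min(1 - \gamma_{ji} + w_i, 1)$ is affine in $w_i$ below the value $1$, and the target $\beta_j + \delta$ is $< 1$, the key observation is that for a fixed $i$ the term satisfies $\gamma_{ji} \rightarrow_L w_i \le \beta_j + \delta$ if and only if $1 - \gamma_{ji} + w_i \le \beta_j + \delta$, i.e.\ if and only if $T_L(\gamma_{li}, \underline{\beta}(\delta)_l) \le \gamma_{ji} + \beta_j + \delta - 1$ holds for every $l$.

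The crux is then the algebraic realignment of this last inequality into the form of (Lemma \ref{l4}). Using $T_L(\gamma_{li}, \underline{\beta}(\delta)_l) = (\gamma_{li} + \underline{\beta}(\delta)_l - 1)^+ = (\underline{\beta}(\delta)_l - (1 - \gamma_{li}))^+$ on the left and $\gamma_{ji} + \beta_j + \delta - 1 = (\beta_j + \delta) - (1 - \gamma_{ji}) = \overline{\beta}(\delta)_j - (1 - \gamma_{ji})$ on the right, I would set $u = 1 - \gamma_{ji}$, $v = 1 - \gamma_{li}$, $x = \beta_l$, $y = \beta_j$. With this substitution the $l$-th inequality becomes exactly $(\underline{x}(\delta) - v)^+ \le \overline{y}(\delta) - u$, and the hypothesis $\delta < 1 - \beta_j = 1 - y$ is precisely the range required by (Lemma \ref{l4}). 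Applying that lemma, the $l$-th inequality is equivalent to $\delta \ge L(1 - \gamma_{ji}, 1 - \gamma_{li}, \beta_l, \beta_j)$; quantifying over all $l$ and taking the maximum turns ``the inequality holds for every $l$'' into the single condition $\delta \ge \max_{1 \le l \le m} L(1 - \gamma_{ji}, 1 - \gamma_{li}, \beta_l, \beta_j) = \zeta_{ji}$.

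With this equivalence in hand both implications follow immediately. For $\Longrightarrow$, if $G(\underline{\beta}(\delta))_j \le \overline{\beta}(\delta)_j < 1$ then, the minimum over the finite index set being attained, there is an $i$ whose term is $\le \beta_j + \delta$; by the above this $i$ satisfies $\delta \ge \zeta_{ji}$. For $\Longleftarrow$, an index $i$ with $\delta \ge \zeta_{ji}$ yields, by reversing the chain, $\gamma_{ji} \rightarrow_L w_i \le \beta_j + \delta = \overline{\beta}(\delta)_j$, whence $G(\underline{\beta}(\delta))_j = \min_{i'}(\cdots) \le \overline{\beta}(\delta)_j$. I expect no real obstacle here: unlike the Gödel and Goguen cases, the Łukasiewicz implication is continuous and affine on the relevant region, so there is no need to split into $l \in W(j,i)$ versus its complement nor to invoke a distribution identity, and the whole argument collapses to the single substitution feeding (Lemma \ref{l4}). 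The only point demanding care is checking that the two $(\cdot)^+$ rewritings are valid, in particular that dropping the clamp at $1$ in $\gamma_{ji} \rightarrow_L w_i$ is justified by $\beta_j + \delta < 1$, which is exactly why the restriction $\delta < 1 - \beta_j$ appears in the statement.
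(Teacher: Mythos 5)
Your proof is correct and takes essentially the same route as the paper: both reduce the $j$-th scalar inequality to (Lemma \ref{l4}) via the substitution $u = 1-\gamma_{ji}$, $v = 1-\gamma_{li}$, $x = \beta_l$, $y = \beta_j$, using $\overline{\beta}(\delta)_j = \beta_j + \delta < 1$ to remove the clamp at $1$, and then take the maximum over $l$ to obtain $\zeta_{ji}$. The only difference is organizational: you establish a single per-$i$ equivalence and read off both implications, whereas the paper argues the two directions separately (its converse direction distributes $\min(\cdot\,,1)$ over the max, a step your unclamping observation renders unnecessary).
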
  
\begin{proof}\mbox{}\\
$\Longrightarrow:$ \\
We have:  
\begin{align}
G(\underline{\beta}(\delta))_j & =
\min_{1 \leq i \leq n}\, \gamma_{ji} \rightarrow_L [\max_{1 \leq l \leq m}\,
T_L(\gamma_{li} , \underline{\beta}(\delta)_l)] \nonumber\\
& =
\min_{1 \leq i \leq n}\, \min(1 - \gamma_{ji}  + [\max_{1 \leq l \leq m}\,
 (\gamma_{li}  +  \underline{\beta}(\delta)_l - 1)^+] , 1)\nonumber \end{align}
The hypothesis implies that we have an index $i\in\{1 , 2 , \dots , n\}$ such that:  
$$G(\underline{\beta}(\delta))_j =   \min(1 - \gamma_{ji}  + [\max_{1 \leq l \leq m}\,
 (\gamma_{li}  +  \underline{\beta}(\delta)_l - 1)^+] , 1) \leq \overline{\beta}(\delta)_j.$$
As we suppose that $\beta_j  + \delta < 1$, we have   $\overline{\beta}(\delta)_j = \beta_j  + \delta < 1$, and we deduce:     
$$G(\underline{\beta}(\delta))_j  = 1 - \gamma_{ji}  + [\max_{1 \leq l \leq m}\,
 (\gamma_{li}  +  \underline{\beta}(\delta)_l - 1)^+] \leq \overline{\beta}(\delta)_j < 1.$$
For all $l\in\{1 , 2 , \dots ,m\}$, we deduce: 
$$(\gamma_{li}  +  \underline{\beta}(\delta)_l - 1)^+ \leq   \overline{\beta}(\delta)_j  - (1 - \gamma_{ji}).$$ 
Let us show that $\forall  l\in\{1 , 2 , \dots ,m\}\,,\, \delta \geq  L(1  - \gamma_{ji}  , 1  - \gamma_{li}, \beta_l , \beta_j)$. We have for all  $l\in\{1 , 2 , \dots ,m\}$: 
$$(\gamma_{li}  +  \underline{\beta}(\delta)_l - 1)^+ \leq   \overline{\beta}(\delta)_j  - (1 - \gamma_{ji}).$$
We put $u = 1  - \gamma_{ji} , v = 1 - \gamma_{li}  , x = \beta_l , y = \beta_j$. With these notations, we can easily check that we have:
\[\gamma_{li}  +  \underline{\beta}(\delta)_l - 1 =  \underline{x}(\delta) - v   \quad \text{and}\quad  \overline{\beta}(\delta)_j  - (1 - \gamma_{ji}) = \overline{y}(\delta)  - u.\quad  \]
The previous inequality is rewritten as:
$$ (\underline{x}(\delta) - v)^+ \leq   \overline{y}(\delta)  - u$$
and we also have $\delta < 1 - \beta_j = 1 - y$.

\noindent
By applying (Lemma \ref{l4}), we obtain  $\delta \geq L(u , v , x , y) = L(1  - \gamma_{ji}  , 1  - \gamma_{li}  , \beta_l , \beta_j)$.

\noindent
We have shown $\delta  \geq  \max_{1 \leq l \leq m}\, L(1  - \gamma_{ji}  , 1  - \gamma_{li}  , \beta_l , \beta_j) = \zeta_{ji}$.\\

$\Longleftarrow$:\\
Suppose we have an index $i\in\{1 , 2 , \dots , n\}$ such that: 
$$\delta \geq \zeta_{ji} = \max_{1 \leq l \leq m}\, L(1  - \gamma_{ji}  , 1  - \gamma_{li}  , \beta_l , \beta_j).$$
We have: 
$$G(\underline{\beta}(\delta))_j = 
\min_{1 \leq i' \leq n}\, \min(1 - \gamma_{ji'}  + [\max_{1 \leq l \leq m}\,
 (\gamma_{li'}  +  \underline{\beta}(\delta)_l - 1)^+] , 1) \leq   \min(1 - \gamma_{ji}  + [\max_{1 \leq l \leq m}\,
 (\gamma_{li}  +  \underline{\beta}(\delta)_l - 1)^+] , 1).$$
Let us show the inequality 
\begin{equation}
    \label{eq:ineqdanslemm13}
    \min(1 - \gamma_{ji}  + [\max_{1 \leq l \leq m}\,
 (\gamma_{li}  +  \underline{\beta}(\delta)_l - 1)^+] , 1) \leq \overline{\beta}(\delta)_j,
\end{equation}
which will lead to the inequality to be proved.\\
We have:
\[ 1 - \gamma_{ji} + 
[\max_{1 \leq l \leq m}\,
 (\gamma_{li}  +  \underline{\beta}(\delta)_l - 1)^+] = \max_{1 \leq l \leq m}\,  (1 - \gamma_{ji} + (\gamma_{li}  +  \underline{\beta}(\delta)_l - 1)^+).\]
 We deduce:
\begin{align}
\min(1 - \gamma_{ji}  + [\max_{1 \leq l \leq m}\,
 (\gamma_{li}  +  \underline{\beta}(\delta)_l - 1)^+] , 1) &= \min([\max_{1 \leq l \leq m} \, (1 - \gamma_{ji}  +  
 (\gamma_{li}  +  \underline{\beta}(\delta)_l - 1)^+)] , 1)\nonumber\\
 &= \max_{1 \leq l \leq m}\, \min(1 - \gamma_{ji}  +  
 (\gamma_{li}  +  \underline{\beta}(\delta)_l - 1)^+ , 1).\nonumber
 \end{align}
To prove (\ref{eq:ineqdanslemm13}), it is sufficient to check  that $\forall  l\in\{1 , 2 , \dots ,m\},\, \min(1 - \gamma_{ji}  +  
 (\gamma_{li}  +  \underline{\beta}(\delta)_l - 1)^+ , 1)   \leq \overline{\beta}(\delta)_j $. 
 
\noindent
In fact, we just have to show that for all $l\in\{1 , 2 , \dots ,m\}$, that we have:  
\begin{equation}\label{eq:ineq2lemm13}
    1 - \gamma_{ji}  +  (\gamma_{li}  +  \underline{\beta}(\delta)_l - 1)^+ \leq \overline{\beta}(\delta)_j.
\end{equation} 
Let $l\in\{1 , 2 , \dots , m\}$. By hypothesis, we have:  
$$\delta \geq \zeta_{ji} \geq P(1  - \gamma_{ji}  , 1  - \gamma_{li}  , \beta_l , \beta_j).$$
As $\delta < 1 - \beta_j$, by applying (Lemma \ref{l4}) with  $ u = 1  - \gamma_{ji} , v = 1 - \gamma_{li}  , x = \beta_l , y = \beta_j$, we obtain: 
$$ (\underline{x}(\delta) - v)^+ \leq   \overline{y}(\delta)  - u.$$
Therefore: 
\[(\gamma_{li}  +  \underline{\beta}(\delta)_l - 1)^+ = 
(\underline{x}(\delta) - v)^+ \leq \overline{y}(\delta)  - u = \overline{\beta}(\delta)_j - (1 - \gamma_{ji}) \]
This last inequality is equivalent to the inequality (\ref{eq:ineq2lemm13}). 
\end{proof}
\noindent We prove that for all $j \in \{1,2,\dots,m\}$, the set $E_j$, see (\ref{eq:setEj}), admits  $\nabla_j$ as minimum element: 

\begin{proposition}\label{prop:LukaminEj}
For all $j\in\{1 , 2 , \dots , m\}$, we have $\nabla_j = \min E_j\,\,\text{i.e.,}\,\, \nabla_j \in E_j$.
\end{proposition}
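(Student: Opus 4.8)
The plan is to follow the same strategy as in (Proposition \ref{proposition:minusunmoinsbetamaxthetazeta}), but to exploit the fact that the characterization in (Lemma \ref{lemma:mainlemmeforminL}) involves only the non-strict inequality $\delta \geq \zeta_{ji}$, which is precisely what allows the infimum to be attained. First I would recall from (Lemma \ref{lemma:nablajvsunmoinsbeta}) that $1 - \beta_j \in E_j$ and $\nabla_j \leq 1 - \beta_j$. If $\nabla_j = 1 - \beta_j$, then $\nabla_j \in E_j$ immediately and there is nothing left to prove, so I may assume $\nabla_j < 1 - \beta_j$.

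Next I would produce a witnessing index. Since $\nabla_j = \inf E_j$ and $\nabla_j < 1 - \beta_j$, the properties of the lower bound yield a sequence $(\delta_k)$ in $E_j$ with $(\delta_k) \to \nabla_j$ and $\nabla_j \leq \delta_k < 1 - \beta_j$ for all $k$. Applying (Lemma \ref{lemma:mainlemmeforminL}) to each $\delta_k$, I obtain an index $i_k \in \{1, 2, \dots, n\}$ with $\delta_k \geq \zeta_{ji_k}$. Because the index set is finite, the sequence $k \mapsto i_k$ admits a stationary subsequence (as in the proof of (Theorem 3.6) of \cite{rudin1976}), so there is a fixed $i^0$ with $\delta_{\alpha(k)} \geq \zeta_{ji^0}$ for all $k$; passing to the limit yields $\nabla_j \geq \zeta_{ji^0}$.

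Finally, since $\nabla_j < 1 - \beta_j$ and $\nabla_j \geq \zeta_{ji^0}$, the reverse ($\Longleftarrow$) direction of (Lemma \ref{lemma:mainlemmeforminL}) applied at $\delta = \nabla_j$ gives $G(\underline{\beta}(\nabla_j))_j \leq \overline{\beta}(\nabla_j)_j$, that is $\nabla_j \in E_j$, hence $\nabla_j = \min E_j$.

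The crucial point, and the reason the $\min-\rightarrow_L$ case is genuinely easier than the $\min-\rightarrow_G$ case, is the passage to the limit in the middle step. In the Gödel case the analogous condition (Lemma \ref{lemma:equivmin}) carried a strict inequality $\theta_{ji} < \delta$, which degenerates to the non-strict $\theta_{ji^0} \leq \nabla_j$ in the limit and can no longer be fed back into the reverse direction, so the infimum need not be attained (cf. (Example \ref{ex:5counter})). Here the only condition is the non-strict $\delta \geq \zeta_{ji}$, which survives the limit unchanged and closes the argument without any extra hypothesis. Thus the main obstacle is not a hard estimate but simply checking that no strictness is lost when passing to the limit, which the form of (Lemma \ref{lemma:mainlemmeforminL}) guarantees.
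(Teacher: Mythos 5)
Your proof is correct, but it follows a genuinely different route from the paper's. The paper's own proof of this proposition is purely topological and does not invoke Lemma \ref{lemma:mainlemmeforminL} at all: since $T_L$ and $\rightarrow_L$ are continuous, the map $\delta \mapsto G(\underline{\beta}(\delta))_j$ is continuous, as is $\delta \mapsto \overline{\beta}(\delta)_j$, so $E_j$ is a closed non-empty subset of $[0,1]$ and therefore contains its infimum. Your argument instead transplants the machinery of (Proposition \ref{proposition:minusunmoinsbetamaxthetazeta}) --- a minimizing sequence in $E_j$, extraction of a stationary subsequence of indices, passage to the limit, and then feeding the limiting inequality $\nabla_j \geq \zeta_{ji^0}$ back into the reverse direction of (Lemma \ref{lemma:mainlemmeforminL}) at $\delta = \nabla_j$. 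Both are sound; each buys something. The paper's proof is shorter and isolates the real cause in the Łukasiewicz case (full continuity of the implication), but it is special to that case: it fails for Goguen, whose implication is discontinuous, even though the minimum is still attained there. Your proof makes visible the finer structural point: attainment hinges on whether the characterizing condition of the equivalence lemma survives passage to the limit, i.e.\ on the absence of strict inequalities --- which is exactly what breaks for Gödel (the strict $\theta_{ji} < \delta$ degenerates to $\theta_{ji^0} \leq \nabla_j$, cf.\ (Example \ref{ex:5counter})) and what the Goguen case must repair by separately proving $\theta_{ji} < \zeta_{ji}$ (Theorem \ref{th:minGGisMin}). One small point to make explicit: when you choose the sequence $(\delta_k)$ with $\delta_k < 1 - \beta_j$, you should note this is possible because $\nabla_j < 1-\beta_j$ forces all sufficiently close elements of $E_j$ above $\nabla_j$ to lie below $1-\beta_j$; this is the same step as in the paper's proof of (Proposition \ref{proposition:minusunmoinsbetamaxthetazeta}), so it is a cosmetic rather than substantive gap.
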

\begin{proof}
For any $\delta \in [0 , 1]$,  we have:  
\[G(\underline{\beta}(\delta))_j = \min_{1 \leq i \leq n}\, \min(1 - \gamma_{ji}  + [\max_{1 \leq l \leq m}\,
 (\gamma_{li}  +  \underline{\beta}(\delta)_l - 1)^+] , 1). \]
Since the functions  $[0 , 1] \rightarrow [0 , 1] : \delta \mapsto \underline{\beta}(\delta)_l$ are    continuous,  we deduce that the function 
\[ [0 , 1] \rightarrow [0 , 1] : \delta \mapsto G(\underline{\beta}(\delta))_j \]
is continuous.

\noindent
Since the function $\delta \mapsto \overline{\beta}_j(\delta)$ is also continuous, by applying general continuity theorems (see Corollary of Theorem 4.8 in \cite{rudin1976}), we deduce that the set $E_j$ is a closed (non-empty) set of $[0 ,1]$, and then $$\nabla_j = \inf E_j = \min E_j.$$
\end{proof}
\begin{corollary}\label{cor:minLnablaegalminE}
    We have: 
    \begin{equation}
        \nabla = \min E.
    \end{equation}
\end{corollary}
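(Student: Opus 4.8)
The plan is to deduce this corollary directly from the already-established results, following exactly the concluding argument used for the Goguen case at the end of (Theorem \ref{th:minGGisMin}). The goal $\nabla = \min E$ amounts to showing that the infimum $\nabla = \inf E$ (which holds by (Theorem \ref{th1})) is actually attained, i.e. that $\nabla \in E$. Since (Lemma \ref{lemdec}) gives $E = \bigcap_{1 \leq j \leq m} E_j$, it suffices to prove that $\nabla \in E_j$ for every $j \in \{1, 2, \dots, m\}$.

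First I would recall from (Proposition \ref{prop:nablafunc}) the decomposition $\nabla = \max_{1 \leq j \leq m} \nabla_j$, which in particular yields $\nabla_j \leq \nabla$ for each $j$. Next I would invoke (Proposition \ref{prop:LukaminEj}), which establishes that for each $j$ we have $\nabla_j \in E_j$. The key step is then to combine these two facts with the monotonicity property of $E_j$ from the second statement of (Lemma \ref{lemdec}): since $\nabla_j \in E_j$ and $\nabla_j \leq \nabla$, that lemma forces $\nabla \in E_j$. As this holds for every $j$, we conclude $\nabla \in \bigcap_{1 \leq j \leq m} E_j = E$.

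Finally, combining $\nabla \in E$ with the identity $\nabla = \inf E$ from (Theorem \ref{th1}) gives $\nabla = \min E$, as desired. There is no real obstacle here: the entire content has been front-loaded into (Proposition \ref{prop:LukaminEj}), whose proof exploits the continuity of Łukasiewicz's implication to show that each $E_j$ is a closed subset of $[0,1]$. Once each $\nabla_j$ is known to lie in $E_j$, the passage to $\nabla \in E$ is purely order-theoretic and mirrors the corresponding deduction already carried out for the $\min-\rightarrow_{GG}$ system.
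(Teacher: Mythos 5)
Your proof is correct and follows exactly the same route as the paper, which proves this corollary as a direct consequence of (Proposition \ref{prop:LukaminEj}), (Lemma \ref{lemdec}) and (Proposition \ref{prop:nablafunc}); you have merely spelled out the order-theoretic glue (using $\nabla_j \in E_j$, $\nabla_j \leq \nabla$, and the upward-closure of each $E_j$ to get $\nabla \in E$) that the paper leaves implicit.
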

\begin{proof}
Consequence of (Proposition \ref{prop:LukaminEj}), (Lemma \ref{lemdec}) and (Proposition \ref{prop:nablafunc}). 
\end{proof}

For all $j\in\{1 , 2 , \dots , m\}$, to establish the formula for computing $\nabla_j$, we put:
\begin{notation}
    $\tau_j := \min_{1 \leq i \leq n}\,  
 \zeta_{ji} \quad \text{where} \quad \zeta_{ji} = \max_{1 \leq l \leq m}L(1 - \gamma_{ji} , 1 -  \gamma_{li} , \beta_l , \beta_j)$ (Notation \ref{not:zjiluka}). 
\end{notation}

We have:
\begin{lemma}\label{lemma:lastlemmaforL}
If $\tau_j < 1 - \beta_j$, then $\tau_j \in E_j$ , so $\nabla_j \leq \tau_j$. 
\end{lemma}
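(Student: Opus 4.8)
The plan is to invoke the equivalence established in (Lemma \ref{lemma:mainlemmeforminL}), which holds precisely for values $\delta$ with $\delta < 1 - \beta_j$. The hypothesis $\tau_j < 1 - \beta_j$ is exactly what is needed to make that lemma applicable at the candidate value $\delta = \tau_j$, so the whole argument reduces to checking that the second statement of (Lemma \ref{lemma:mainlemmeforminL}) holds at $\delta = \tau_j$.

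First I would exploit that $\tau_j = \min_{1 \leq i \leq n} \zeta_{ji}$ is an infimum over the \emph{finite} index set $\{1, 2, \dots, n\}$, hence an attained minimum: there exists $i^0 \in \{1, 2, \dots, n\}$ with $\zeta_{ji^0} = \tau_j$. Next, I set $\delta = \tau_j$. By hypothesis $\delta = \tau_j < 1 - \beta_j$, so (Lemma \ref{lemma:mainlemmeforminL}) applies. Its second statement asks for an index $i$ with $\delta \geq \zeta_{ji}$; taking $i = i^0$ gives $\delta = \tau_j = \zeta_{ji^0} \geq \zeta_{ji^0}$, so that statement is satisfied. The equivalence then yields the first statement, namely $G(\underline{\beta}(\tau_j))_j \leq \overline{\beta}(\tau_j)_j$, which is exactly the condition $\tau_j \in E_j$ by the definition (\ref{eq:setEj}).

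Finally, since $\nabla_j = \inf E_j$ and we have just shown $\tau_j \in E_j$, the definition of the infimum immediately gives $\nabla_j \leq \tau_j$, completing the proof. The only point requiring care — the main ``obstacle,'' such as it is — is bookkeeping: one must verify that the precondition $\delta < 1 - \beta_j$ of (Lemma \ref{lemma:mainlemmeforminL}) is met, since the equivalence is stated only in that range. The finiteness of the index set $\{1, 2, \dots, n\}$ guarantees that $\tau_j$ is attained at some $i^0$, so no limiting or continuity argument is needed here, in contrast to the more delicate infimum-versus-minimum distinction encountered for the $\min-\rightarrow_G$ system.
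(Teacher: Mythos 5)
Your proof is correct and follows essentially the same route as the paper: pick the index $i^0$ where the finite minimum $\tau_j = \min_{1\leq i\leq n}\zeta_{ji}$ is attained, apply the equivalence of (Lemma \ref{lemma:mainlemmeforminL}) at $\delta = \tau_j$ (legitimate precisely because $\tau_j < 1-\beta_j$), and conclude $\tau_j \in E_j$, hence $\nabla_j \leq \tau_j$. The only cosmetic difference is that the paper invokes $\nabla_j = \min E_j$ from (Proposition \ref{prop:LukaminEj}) in the last step, whereas you only need $\nabla_j = \inf E_j$, which is all the conclusion requires.
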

\begin{proof} Let $i \in \{1,2,\dots,n\}$ be such that $\tau_j = \zeta_{ji}$. As we suppose that $\tau_j < 1 - \beta_j$, we remark that $\tau_j$ satisfies the condition of the second statement of (Lemma \ref{lemma:mainlemmeforminL}). So $\tau_j \in E_j$ and then $\nabla_j = \min E_j \leq \tau_j$. 
\end{proof}

We now give the explicit analytical formula for computing the Chebyshev distance $\nabla$ (see (Definition \ref{def:chebyshevdistMinfleche})   and  (Proposition \ref{prop:nablafunc})), associated to the second member $\beta$ of a system of $\min-\rightarrow_L$ fuzzy relational equations $(\Sigma) : \Gamma  \Box_{\rightarrow_{L}}^{\min} x =  \beta$, see (\ref{eq:minsys}):
\begin{theorem}\label{th:luka}
$$\nabla = \max_{1 \leq j \leq m} \nabla_j \quad  \text{where for } \quad j \in \{1,2,\dots,m\}, \quad  \nabla_j = 
\min(1 - \beta_j , \tau_j)$$
$$\text{ with }  \quad 
\tau_j = \min_{1 \leq i \leq n}\,  
 \zeta_{ji}.$$ 
\end{theorem}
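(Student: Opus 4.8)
The plan is to invoke Proposition \ref{prop:nablafunc}, which reduces the theorem to proving, for each fixed $j \in \{1,2,\dots,m\}$, the scalar identity $\nabla_j = \min(1 - \beta_j, \tau_j)$ where $\tau_j = \min_{1 \leq i \leq n} \zeta_{ji}$. The case $\beta_j = 1$ is immediate from Lemma \ref{lemma:nablajvsunmoinsbeta}: there $\nabla_j = 0 = 1 - \beta_j = \min(1 - \beta_j, \tau_j)$. So I would assume $\beta_j < 1$ and prove the two inequalities $\nabla_j \leq \min(1 - \beta_j, \tau_j)$ and $\nabla_j \geq \min(1 - \beta_j, \tau_j)$ separately.

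For $\nabla_j \leq \min(1 - \beta_j, \tau_j)$, the bound $\nabla_j \leq 1 - \beta_j$ is exactly Lemma \ref{lemma:nablajvsunmoinsbeta}. For the bound $\nabla_j \leq \tau_j$, if $\tau_j \geq 1 - \beta_j$ there is nothing to prove since the minimum then equals $1 - \beta_j$; if instead $\tau_j < 1 - \beta_j$, then Lemma \ref{lemma:lastlemmaforL} gives $\tau_j \in E_j$ and hence $\nabla_j = \inf E_j \leq \tau_j$. Combining the two bounds yields $\nabla_j \leq \min(1 - \beta_j, \tau_j)$.

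For the reverse inequality $\nabla_j \geq \min(1 - \beta_j, \tau_j)$, I would distinguish whether $\nabla_j = 1 - \beta_j$ or $\nabla_j < 1 - \beta_j$. In the first case the inequality $\nabla_j \geq \min(1 - \beta_j, \tau_j)$ is trivial. In the second case the key input is Proposition \ref{prop:LukaminEj}, which (using the continuity of Łukasiewicz's implication in both variables) guarantees $\nabla_j \in E_j$, that is, $G(\underline{\beta}(\nabla_j))_j \leq \overline{\beta}(\nabla_j)_j$. Since $\nabla_j < 1 - \beta_j$, I may then apply the forward direction of Lemma \ref{lemma:mainlemmeforminL} at the value $\delta = \nabla_j$ to obtain an index $i$ with $\zeta_{ji} \leq \nabla_j$; consequently $\tau_j = \min_{1 \leq i \leq n} \zeta_{ji} \leq \nabla_j$, so $\min(1 - \beta_j, \tau_j) \leq \tau_j \leq \nabla_j$. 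Putting the two inequalities together gives $\nabla_j = \min(1 - \beta_j, \tau_j)$, and the formula for $\nabla = \max_{1 \leq j \leq m} \nabla_j$ follows from Proposition \ref{prop:nablafunc}.

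Little heavy lifting remains here, since the technical machinery resides in Lemmas \ref{lemma:mainlemmeforminL} and \ref{lemma:lastlemmaforL} and in Proposition \ref{prop:LukaminEj}. The one step that must be handled with care is the reverse inequality in the subcase $\nabla_j < 1 - \beta_j$: the extraction of an index $i$ with $\zeta_{ji} \leq \nabla_j$ is legitimate only because $\nabla_j$ itself belongs to $E_j$, so that Lemma \ref{lemma:mainlemmeforminL} may be invoked at the boundary value $\delta = \nabla_j$. This is precisely what the continuity argument of Proposition \ref{prop:LukaminEj} secures, and it is the structural point that distinguishes the $\min-\rightarrow_L$ case from the $\min-\rightarrow_G$ case, where $\nabla_j$ need not lie in $E_j$.
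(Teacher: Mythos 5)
Your proposal is correct and follows essentially the same route as the paper's proof: both rest on Lemma \ref{lemma:nablajvsunmoinsbeta} for $\nabla_j \leq 1-\beta_j$, Proposition \ref{prop:LukaminEj} to place $\nabla_j$ in $E_j$ so that Lemma \ref{lemma:mainlemmeforminL} can be applied at $\delta = \nabla_j$, and Lemma \ref{lemma:lastlemmaforL} for the bound $\nabla_j \leq \tau_j$ when $\tau_j < 1-\beta_j$. The only difference is presentational — you organize the argument as two inequalities between $\nabla_j$ and $\min(1-\beta_j,\tau_j)$, whereas the paper proves the implication $\nabla_j < 1-\beta_j \Rightarrow \nabla_j = \tau_j$ and then combines — and you correctly identify the membership $\nabla_j \in E_j$ as the structural point distinguishing the Łukasiewicz case from the Gödel case.
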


\begin{proof}
From (Lemma \ref{lemma:nablajvsunmoinsbeta}), we know that    $\nabla_j \leq  1 -\beta_j$.

\noindent
Let us show the following implication:
$$\nabla_j <  1 - \beta_j \Longrightarrow  \nabla_j = \tau_j.$$
Indeed, if $\nabla_j <  1 - \beta_j$,  as we have $\nabla_j\in E_j$, applying (Lemma \ref{lemma:mainlemmeforminL}), we obtain an index $i\in \{1 , 2 , \dots , n\}$ such that $\nabla_j \geq \zeta_{ji}$. From the definition of $\tau_j$, we deduce:
\[\tau_j \leq  \zeta_{ji} \leq \nabla_j. \]
As $\nabla_j <  1 - \beta_j$, we get $\tau_j  <  1 - \beta_j$. By applying (Lemma \ref{lemma:lastlemmaforL}), we finally obtain $\nabla_j \leq \tau_j$, so $\nabla_j = \tau_j$.

\noindent
Since the inequality $\nabla_j \leq  1 -\beta_j$ is true (Lemma \ref{lemma:nablajvsunmoinsbeta}) and using the two implications 
\[\nabla_j <  1 - \beta_j \Longrightarrow  \nabla_j = \tau_j  \quad \text{and by (Lemma \ref{lemma:lastlemmaforL})} \quad\tau_j < 1 - \beta_j \Longrightarrow \nabla_j \leq \tau_j  \]
lead us to obtain the equality $\nabla_j = 
\min(1 - \beta_j , \tau_j).$

\end{proof}

We illustrate this theorem:
\begin{example}
Let us reuse the matrix $\Gamma = \begin{bmatrix}
    0.6 & 0.49 \\ 
    0.26 & 0.9 \\ 
\end{bmatrix}$ and the vector $\beta = \begin{bmatrix}
    0.1\\
    0.4
\end{bmatrix}$. For $j=1$, we have:

\begin{align*}
    G(\underline{\beta}(\delta))_1 &= \min_{1 \leq i \leq 2}(\gamma_{1i}  \rightarrow_{L} (\max_{1 \leq l \leq 2}T_L(\gamma_{li},  \underline{\beta}(\delta)_l)))\\
&=\min(0.6  \rightarrow_{L} (\max_{1 \leq l \leq 2}T_L(\gamma_{l1},  \underline{\beta}(\delta)_l)), 0.49  \rightarrow_{L} (\max_{1 \leq l \leq 2}T_L(\gamma_{l2},\underline{\beta}(\delta)_l))). 
\end{align*}
\noindent We compute:
\begin{itemize}
    \item $\max_{1 \leq l \leq 2} T_L(\gamma_{l1},{\beta}_l) = \max(T_L(0.6, 0.1), T_L(0.26, 0.4))=0$,
    \item $\max_{1 \leq l \leq 2} T_L(\gamma_{l2},   {\beta}_l) = \max(T_L(0.49, 0.1), T_L(0.9, 0.4))=0.3$
\end{itemize}
\noindent and then: $0.6 \rightarrow_{L} 0 =0.4$ and $0.49 \rightarrow_{L} 0.3=0.81$. So the inequality
$G(\underline{\beta}(\delta))_1 \leq \overline{\beta}(\delta)_1$     is not satisfied for $\delta = 0$. 

\noindent For solving the inequality, we compute: 
\begin{itemize}
    \item $\zeta_{11} = \max(L(1-0.6, 1-0.6,0.1,0.1),L(1-0.6, 1-0.26,0.4,0.1))= 0.3$, 
    \item $\zeta_{12} = \max(L(1-0.49, 1 - 0.49, 0.1,0.1), L(1-0.49,1-0.9, 0.4,0.1)) = 0.41$.  
\end{itemize}
 
\noindent We compute $\tau_1 = \min(0.3 , 0.41) = 0.3$ and 
$\nabla_1 = \min(0.9,0.3) = 0.3$.

\noindent Then we observe that $ G(\underline{\beta}(\nabla_1))_1 = \min(0.6 \rightarrow_L 0, 0.49 \rightarrow_L 0) = 0.4 \leq \overline{\beta}(\nabla_1)_1 = 0.4$. So the inequality is solved with $\nabla_1 = 0.3$.
\end{example}

The equality $\nabla = \min E$ (Corollary \ref{cor:minLnablaegalminE}) allows us to prove directly that the set of Chebyshev approximations $\mathcal{D}_\beta$, see (\ref{eq:setDb}), always has a minimum element. We can also give an approximate solution: 
\begin{corollary}\label{cor:minLcheb}
$G(\underline{\beta}(\nabla))$ is the lowest Chebychev approximation for $\beta$. Moreover, $\xi = \Gamma^t \Box_{T_L}^{\max} \underline{\beta}(\nabla)$ is an approximate solution for the system 
$(\Sigma) : \Gamma \Box_{\rightarrow_{L}}^{\min} x = \beta$.
\begin{proof}
Same proof as for (Corollary \ref{cor:chebnablainE}) but using the $\min-\rightarrow_L$ composition.
\end{proof}
\end{corollary}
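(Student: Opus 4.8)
The plan is to follow the template of the proof of (Corollary \ref{cor:chebnablainE}) essentially verbatim, replacing the Gödel composition with the $\min-\rightarrow_L$ composition, because the only structural input that the earlier proof used was the hypothesis $\nabla \in E$. For the $\min-\rightarrow_L$ system this hypothesis comes for free: (Corollary \ref{cor:minLnablaegalminE}) guarantees $\nabla = \min E$, hence $\nabla \in E$, i.e., $G(\underline{\beta}(\nabla)) \leq \overline{\beta}(\nabla)$.

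First I would record membership in $\mathcal{D}$. Specializing the definition (\ref{eq::appG}) of $G$ to $T = T_L$ and its residual implicator $\rightarrow_L$, the choice $\xi = \Gamma^t \Box_{T_L}^{\max} \underline{\beta}(\nabla)$ gives the identity $G(\underline{\beta}(\nabla)) = \Gamma \Box_{\rightarrow_L}^{\min} \xi$. By (\ref{qu:eq00}) we have $G(\underline{\beta}(\nabla)) \in \mathcal{D}$, so the system $\Gamma \Box_{\rightarrow_L}^{\min} x = G(\underline{\beta}(\nabla))$ is consistent and $\xi$ is one of its solutions, which already proves the second assertion.

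Next I would show that $G(\underline{\beta}(\nabla))$ is a Chebyshev approximation of $\beta$. On one hand, since $G(\underline{\beta}(\nabla)) \in \mathcal{D}$, the defining property of $\nabla$ (Definition \ref{def:chebyshevdistMinfleche}) gives $\Vert \beta - G(\underline{\beta}(\nabla))\Vert \geq \nabla$. On the other hand, combining (Proposition \ref{propo1}), which yields $\underline{\beta}(\nabla) \leq G(\underline{\beta}(\nabla))$, with the inequality $G(\underline{\beta}(\nabla)) \leq \overline{\beta}(\nabla)$ coming from $\nabla \in E$, I obtain the double inequality $\underline{\beta}(\nabla) \leq G(\underline{\beta}(\nabla)) \leq \overline{\beta}(\nabla)$; applying the equivalence (\ref{ineq:bbbar}) then gives $\Vert \beta - G(\underline{\beta}(\nabla))\Vert \leq \nabla$, whence equality.

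Finally I would verify minimality among all Chebyshev approximations. For any $c \in \mathcal{D}_\beta$, i.e.\ $G(c) = c$ and $\Vert \beta - c\Vert = \nabla$, the equivalence (\ref{ineq:bbbar}) gives $\underline{\beta}(\nabla) \leq c \leq \overline{\beta}(\nabla)$, and applying the increasing map $G$ (Proposition \ref{propo2}) yields $G(\underline{\beta}(\nabla)) \leq G(c) = c$. Thus $G(\underline{\beta}(\nabla))$ lies below every element of $\mathcal{D}_\beta$, so it is the lowest Chebyshev approximation. No genuine difficulty arises: the single substantive point is that the hypothesis ``$\nabla \in E$'' that was merely \emph{assumed} in (Corollary \ref{cor:chebnablainE}) is here \emph{supplied unconditionally} by (Corollary \ref{cor:minLnablaegalminE}), which is precisely what upgrades the conditional Gödel statement to an unconditional one for every $\min-\rightarrow_L$ system.
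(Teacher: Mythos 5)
Your proposal is correct and follows exactly the paper's intended argument: the paper's proof is literally ``same proof as (Corollary \ref{cor:chebnablainE}) but with the $\min-\rightarrow_L$ composition,'' and you reproduce that proof verbatim while correctly identifying that the hypothesis $\nabla \in E$ is supplied unconditionally by (Corollary \ref{cor:minLnablaegalminE}). Nothing is missing.
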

By definition of $\nabla$ (Definition \ref{def:chebyshevdistMinfleche}) and the above corollary,  it is clear that: 
\begin{corollary}\label{cor:minLmin}   We have:   
$$\nabla    = \min_{d\in {\cal D}} \Vert \beta - d\Vert. 
$$
and the set of Chebyshev approximations $\mathcal{D}_\beta$ is non-empty. 
\end{corollary}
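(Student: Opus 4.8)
The plan is to observe that this corollary follows immediately from (Corollary \ref{cor:minLcheb}) together with (Definition \ref{def:chebyshevdistMinfleche}). By definition, $\nabla = \inf_{d \in \mathcal{D}} \Vert \beta - d \Vert$ is the infimum of the distances between $\beta$ and the elements of $\mathcal{D}$; to upgrade this infimum to a minimum, it suffices to exhibit a single element $d^\star \in \mathcal{D}$ realizing the value $\nabla$.

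First I would take $d^\star = G(\underline{\beta}(\nabla))$. From (Corollary \ref{cor:minLcheb}), this vector is the lowest Chebyshev approximation of $\beta$, which means in particular that $d^\star \in \mathcal{D}$ (it is a consistent second member, since $G(d^\star) = d^\star$, cf.\ the equivalence (\ref{qu:eq0}) and the definition (\ref{def:setofsecondmembersBeta0}) of $\mathcal{D}$) and that $\Vert \beta - d^\star \Vert = \nabla$. Consequently the infimum defining $\nabla$ is attained at $d^\star$, and therefore $\nabla = \min_{d \in \mathcal{D}} \Vert \beta - d \Vert$.

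Finally, since $d^\star \in \mathcal{D}$ satisfies $\Vert \beta - d^\star \Vert = \nabla$, the set of Chebyshev approximations $\mathcal{D}_\beta = \{d \in \mathcal{D} \mid \Vert d - \beta \Vert = \nabla\}$, see (\ref{eq:setDb}), contains $d^\star$ and is thus non-empty. There is essentially no obstacle in this final step: all the difficulty has already been absorbed into the equality $\nabla = \min E$ (Corollary \ref{cor:minLnablaegalminE}), which itself rests on the continuity of Łukasiewicz's implication exploited in (Proposition \ref{prop:LukaminEj}), and into the explicit construction of the Chebyshev approximation carried out in (Corollary \ref{cor:minLcheb}). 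The present statement is merely a reformulation of those two facts in terms of the infimum appearing in (Definition \ref{def:chebyshevdistMinfleche}), and is the $\min-\rightarrow_L$ analogue of (Corollary \ref{cor:minGGmin}) established for the $\min-\rightarrow_{GG}$ system.
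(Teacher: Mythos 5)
Your proposal is correct and takes essentially the same route as the paper: the paper also deduces this corollary immediately from (Corollary \ref{cor:minLcheb}) and (Definition \ref{def:chebyshevdistMinfleche}), the point being exactly the one you make, namely that $G(\underline{\beta}(\nabla))$ is an element of $\mathcal{D}$ at distance $\nabla$ from $\beta$, so the infimum is attained and $\mathcal{D}_\beta$ is non-empty.
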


\section{Conclusion}
\label{sec:conclusion}

In this article, we studied the inconsistency of systems of  $\min-\rightarrow$ fuzzy relational equations. We expressed the Chebyshev distance $\nabla = \inf_{d \in \mathcal{D}} \Vert \beta - d \Vert$ associated to an inconsistent  system of  $\min-\rightarrow$ fuzzy relational equations $\Gamma \Box_{\rightarrow}^{\min} x = \beta$, where $\mathcal{D}$ is the set of second members of consistent systems defined with the same matrix $\Gamma$, as the lower bound of the solutions   of a vector inequality. From this result, we gave explicit analytical formulas for computing the Chebyshev distances for  systems of  $\min-\rightarrow$ fuzzy relational equations $\Gamma \Box_{\rightarrow}^{\min} x = \beta$, where $\rightarrow$ is a residual implicator among the Gödel implication, the Goguen implication or Łukasiewicz’s
implication. An important result is that in the case of the $\min-\rightarrow_{G}$ system, the Chebyshev distance $\nabla$ may be an infimum while for $\min-\rightarrow_{GG}$ and $\min-\rightarrow_{L}$ systems the Chebyshev distance $\nabla$ is always a minimum. For $\min-\rightarrow_{G}$ systems, we added a sufficient condition for $\nabla$ to be a minimum. For $\min-\rightarrow_{GG}$ systems, $\min-\rightarrow_{L}$ systems, or $\min-\rightarrow_{G}$ systems whose Chebyshev distance has been verified as a minimum, we can always compute the lowest Chebyshev approximation of the second member and an approximate solution. However, for the $\min-\rightarrow_{G}$ systems whose Chebyshev distance has been verified as not being a minimum, the set of Chebyshev approximations of its second member is empty.

As perspectives, we may study the structure of the approximate solutions set of a system of $\min-\rightarrow$ fuzzy relational equations $\Gamma \Box_{\rightarrow}^{\min} x = \beta$ and that of the set of Chebyshev approximations of the second member $\beta$ with respect to the Chebyshev distance $\nabla$. For a $\max-\min$  system, the structure of these sets was given in \cite{baaj2023maxmin}. One of the difficulties encountered is the determination of the set of maximal Chebyshev approximations of the second member $\beta$, which may be obtained from the solutions of a particular system of $\min-\rightarrow$ inequalities (as it is done for a $\max-\min$ system in \cite{baaj2023maxmin}). Currently, to our knowledge, there is no extensive study of the solutions set of  systems of $\min-\rightarrow$ inequalities.  Similarly to \cite{baaj2023maxmin},  we may also develop a learning method of approximate weight matrices based on $\min-\rightarrow$ composition according to training data.

As applications, we have the complete solution of the problem of the invertibility of a fuzzy matrix for $\min-\rightarrow$ composition, where $\rightarrow$ is the Gödel implication, the Goguen implication or Łukasiewicz’s
implication: we know the set of matrices which admit a pre-inverse or a post-inverse. In \cite{wu2022analytical}, this problem was tackled in the case of the $\max-\min$ composition.

\bibliographystyle{plainnat} 
\bibliography{ref} 
\end{document}